\documentclass[10pt,journal,compsoc]{IEEEtran}

\usepackage{amsmath,amssymb,amsthm,mathtools}
\usepackage{graphicx}
\usepackage{booktabs}
\usepackage{array}
\usepackage{multirow}
\usepackage{cite}
\usepackage{url}
\usepackage{enumitem}
\usepackage{balance}
\usepackage[T1]{fontenc}
\usepackage{lmodern}

\newtheorem{theorem}{Theorem}
\newtheorem{proposition}{Proposition}
\newtheorem{corollary}{Corollary}
\newtheorem{lemma}{Lemma}
\theoremstyle{definition}
\newtheorem{definition}{Definition}
\theoremstyle{remark}
\newtheorem{remark}{Remark}

\newcommand{\OT}{\mathrm{OT}}
\newcommand{\dist}{\mathrm{dist}}
\newcommand{\rank}{\mathrm{rank}}
\newcommand{\area}{\mathrm{area}}
\newcommand{\osc}{\mathrm{osc}}
\newcommand{\R}{\mathbb{R}}
\newcommand{\one}{\mathbf{1}}
\newcommand{\pos}[1]{[#1]_+}
\newcommand{\NW}{\mathrm{NW}}
\newcommand{\Mcal}{\mathcal{M}}
\newcommand{\Dcal}{\mathcal{D}}
\newcommand{\Bcal}{\mathcal{B}}
\newcommand{\PiMix}{\Pi_{\mathrm{mix}}}

\title{From Cone Geometry to Monge Structure: Local Defects and Global Regret in High-Dimensional Optimal Transport}

\author{Lei Luo and Jian Yang%
\thanks{Lei Luo and Jian Yang are with PCA Lab, Key Lab of Intelligent Perception and Systems for High-Dimensional Information of Ministry of Education, School of Computer Science and Engineering, Nanjing University of Science and Technology, Nanjing, China. E-mail: \{cslluo, csjyang\}@njust.edu.cn. Lei Luo and Jian Yang are the corresponding authors.}}

\begin{document}
\maketitle

\begin{abstract}
Exact optimal transport in high dimensions becomes tractable only when additional structure selects the optimal coupling. Classical Monge transportation solves the discrete problem once a cost array is known to be Monge, but it does not explain when geometry in the original feature space creates that structure. We characterize this regime for cone-induced orders and squared Mahalanobis costs: $M$-acuteness of the cone is equivalent to the Monge property of every cross-cost matrix generated by two cone chains, after which classical northwest-corner optimality gives exact transport. When the inequalities fail, we derive an exact variational representation of northwest-corner regret over a marginal-dependent cumulative-deficit polytope. Its Fr\'echet box relaxation yields a projection-free, marginal-weighted local-defect bound; we characterize when the relaxation is exact and prove that it is worst-case sharp when only local defect budgets are known. We further show that the worst-case defect patterns can be realized by genuine squared-Euclidean cross-costs. Finally, Mahalanobis distances of observed increments to the cone control the local defects, while a strict geometric margin gives an explicit radius of exact stability. The resulting theory links high-dimensional ground-space geometry to exact Monge transport and quantifies the global loss induced by local departures from ordered structure.
\end{abstract}

\begin{IEEEkeywords}
Optimal transport, Wasserstein distance, Monge matrix, cone order, partial order, structured transport, sliced Wasserstein, tree Wasserstein.
\end{IEEEkeywords}

\section{Introduction}
Optimal transport (OT) compares probability measures through the minimum cost of moving mass between them. Its geometric meaning has made it a basic tool in statistics, imaging, computer vision, representation learning, and modern generative modeling \cite{r1,r2,r3}. Yet the same global coupling that makes OT expressive also makes exact computation difficult. For empirical measures, one typically optimizes over a transportation polytope whose dimension grows quadratically with the number of support points. Entropic regularization and Sinkhorn scaling alleviate this burden, but they remain iterative numerical procedures and alter the original objective \cite{r3,r4,r5}.

The one-dimensional case is different for a structural reason. On the line, order and convex transport cost agree: if $x_1\le x_2$ and $y_1\le y_2$, crossing the assignments cannot improve the cost. The optimal coupling is therefore the monotone rearrangement. This familiar fact suggests a broader question. The source of analytic tractability is not merely low ambient dimension; it is the existence of an order that is compatible with the cost.

There is a classical discrete counterpart of the same idea. A matrix $C\in\R^{m\times n}$ is Monge when
\begin{equation}
C_{ij}+C_{k\ell}\le C_{i\ell}+C_{kj},\qquad i<k,\ j<\ell.
\end{equation}
For transportation problems with a Monge cost matrix, the northwest-corner solution is optimal for every choice of marginals. This theory is classical and well developed \cite{r21,r22,r23}. Later work studied recognition of Monge and permuted-Monge arrays, statistical estimation of Monge structure, and transportation models under Monge costs \cite{r24,r25,r29}. Thus, once the ordered matrix structure is already known, the transport problem is essentially solved.

The question in this paper comes one step earlier:
\begin{quote}
\emph{When does high-dimensional ground-space geometry force the source--target cost matrix to be Monge?}
\end{quote}
This distinction matters. Matrix recognition works after a cross-cost array has been formed and searches for, or verifies, an order of its rows and columns. We instead ask whether a geometric relation in the original feature space implies Monge structure for an entire family of transport instances before any particular matrix is constructed.

We study this question through convex cones. A closed convex cone $K\subset\R^d$ induces the partial order
\begin{equation}
x\preceq_K y \quad\Longleftrightarrow\quad y-x\in K.
\end{equation}
For squared Mahalanobis cost
\begin{equation}
c_M(x,y)=(x-y)^\top M(x-y),\qquad M\succeq 0,
\end{equation}
we prove that the single geometric condition
\begin{equation}
u^\top Mv\ge 0,\qquad \forall u,v\in K,
\end{equation}
is equivalent to a uniform Monge statement: every pair of finite $K$-chains generates a Monge cross-cost matrix. The proof is elementary but the equivalence is consequential. It turns a property of a potentially large cost matrix into a reusable condition on the ground geometry. For a finitely generated cone $K=A\R_+^r$, the condition reduces to entrywise nonnegativity of $A^\top M A$.

\begin{figure}[t]
\centering
\includegraphics[width=\columnwidth]{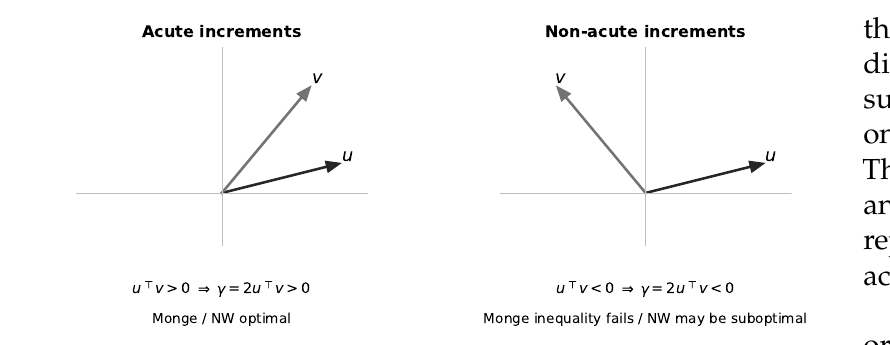}
\caption{A $2\times2$ preview of the geometry-to-Monge mechanism for $M=I$. With source increment $u=x_2-x_1$ and target increment $v=y_2-y_1$, the adjacent Monge margin is $\gamma=2u^\top v$. Acute increments therefore induce the Monge inequality and classical northwest-corner optimality; a negative inner product reverses the inequality and the monotone plan can become suboptimal.}
\label{fig:preview}
\end{figure}

Once Monge structure has been induced, we deliberately return to the classical theory. The cumulative-overlap or northwest-corner plan is not presented as a new algorithm. Its role here is to show that a geometric condition on the original space can make the exact Kantorovich problem collapse to the classical monotone solution without projection to one-dimensional slices and without replacing the ground metric by a tree.

Exact Monge inequalities, however, are too brittle to describe most measured data. The second part of the paper therefore asks a quantitative question: if the order is only approximately compatible with the cost, how much can the monotone plan lose? Let $\Mcal_{m,n}$ be the cone of $m\times n$ Monge matrices and define
\begin{equation}
\delta_M(C)=\inf_{\widetilde C\in\Mcal_{m,n}}\|C-\widetilde C\|_\infty.
\end{equation}
If $\pi_{\NW}$ denotes the northwest-corner plan for the given marginals, then
\begin{equation}
0\le \langle C,\pi_{\NW}\rangle-\OT_C\le 2\delta_M(C).
\end{equation}
This global perturbation statement is only the first layer. The sharper structural object is the exact regret geometry: competing couplings induce a convex polytope of cumulative deficits below the northwest-corner Fr\'echet extreme, and the regret is the support function of this polytope evaluated at the negative adjacent mixed differences of the cost. Replacing that polytope by its coordinatewise Fr\'echet box gives a projection-free, data-dependent bound. The relaxation reveals both where local violations matter and when several apparently harmful defects cannot be exploited simultaneously because of global marginal feasibility.

The matrix defect can in turn be traced back to geometry. If adjacent source and target increments lie near an $M$-acute cone, their negative Monge margins are bounded by the corresponding distances to the cone measured in the same Mahalanobis norm. The resulting sequence of implications is
\[
\boxed{\text{geometry}\ \Longrightarrow\ \text{Monge defect}\ \Longrightarrow\ \text{OT regret}.}
\]
This is the main bridge between exact theory and noisy high-dimensional data.

Three levels of structural information are kept separate throughout. An \emph{instance-level} condition asks only whether the observed cross-cost matrix is Monge. A \emph{chain-level} condition checks the actual adjacent increments of the ordered supports. A \emph{cone-level} condition such as (4) is stronger, but once verified it applies to every future pair of $K$-chains. This hierarchy clarifies the trade-off between conservatism and reuse: an instance-specific test is flexible but must be repeated, whereas a cone-level condition can be amortized across many transport queries.

The proposed viewpoint is complementary to other fast or structured OT constructions. Sliced Wasserstein (SW) obtains tractability by replacing a high-dimensional coupling with projected one-dimensional transports \cite{r7,r8}. Tree-Wasserstein and recent tree-sliced distances exploit tree metrics and tree systems \cite{r9,r12,r13}. Directional OT constrains admissible displacements and studies optimal couplings under that feasibility relation \cite{r34}. Learning elastic costs shapes Monge displacements by modifying the ground cost itself \cite{r30}. Our question is different from each of these: under a fixed high-dimensional ground cost, when does an order in the original space make the induced cost array Monge?

The distinction is also important for transport-based generative modeling. Dynamic optimal transport, flow matching, rectified flow, score-based diffusion, and Schr\"odinger bridges all use transport structure in different ways \cite{r14,r15,r16,r17,r18}, but a closed form for a static OT cost does not by itself imply an analytic generative flow. Such a conclusion would require an explicit transport map or velocity field. The results here concern the static geometry of pairwise transport costs; they are intended as a structural ingredient for future dynamic constructions rather than as a claim that diffusion, flow matching, or rectified-flow generators become closed form.

\subsection{Contributions}
This paper makes three contributions.
\begin{enumerate}[leftmargin=*]
\item \textbf{Cone geometry induces Monge structure.} We prove that $M$-acuteness of a cone is equivalent to the Monge property of every Mahalanobis cross-cost matrix generated by two $K$-chains. The result separates instance-, chain-, and uniform cone-level conditions, reduces to an entrywise test on $A^\top M A$ for finitely generated cones, and extends to smooth translation costs. Once the matrix is Monge, we explicitly invoke classical northwest-corner theory rather than treating the greedy transport rule as new.
\item \textbf{Local defects determine global regret.} Beyond exact chains, we represent northwest-corner regret exactly as the support function of a cumulative-deficit polytope determined by the marginals. Relaxing this polytope to its Fr\'echet box yields a projection-free, marginal-weighted sum of violated adjacent Monge inequalities. We characterize precisely when this relaxation is exact, prove worst-case sharpness under prescribed local defect budgets, and show that the sharpness construction is realizable by genuine squared-Euclidean cross-costs. Mahalanobis distance-to-cone deviations then yield a direct geometry-to-regret bound and a strict-margin radius of exact stability.
\item \textbf{Recognition, reuse, and limits of ordered structure.} We distinguish matrix-level recognition from reusable ground-space implications, prove that a scalar ranking alone is insufficient in dimensions $d\ge2$, define a regret-aligned score for comparing train-derived candidate orders, and quantify repeated-query savings once geometry and order are fixed. Continuous chain-supported measures and block-Monge chain covers are treated as extensions; unrestricted finite-width OT and global order discovery remain outside the claimed scope.
\end{enumerate}

\paragraph{Scope and boundaries.}
The contribution lies one level before classical Monge transportation and one level after arbitrary order discovery. We do not propose the northwest-corner rule, we do not claim a polynomial-time solution of global seriation, and we do not claim universal superiority over entropic, sliced, or tree-based OT. Instead, we identify when a high-dimensional order makes the original ground-cost problem structurally exact, and we quantify the loss when that structure is only approximate. Orders may be externally supplied (time, severity, known morphology) or estimated on training data and validated on a held-out set.

\section{Related Work}
\subsection{Computational OT and Surrogate Geometries}
Entropic regularization and Sinkhorn scaling are the standard numerical route to large-scale OT \cite{r3,r4,r5}. They preserve the original support geometry but solve a regularized problem iteratively. Recent IO-aware GPU solvers such as FlashSinkhorn substantially reduce memory traffic for entropic OT, strengthening the computational baseline without changing the distinction between regularized numerical transport and exact structural optimality \cite{r6}. Sliced Wasserstein takes a different route: measures are projected onto lines, one-dimensional Wasserstein distances are computed after sorting, and the resulting discrepancies are integrated or averaged over projection directions \cite{r7,r8}. This construction is statistically and computationally attractive, but its tractability comes from solving projected problems rather than the original high-dimensional coupling.

Tree-based distances exploit another source of structure. Tree-Wasserstein admits edge-additive formulas on tree metrics, and tree-sliced methods average such distances over sampled tree geometries \cite{r9}. Recent work has substantially strengthened this line: Tree-Sliced Wasserstein on systems of lines replaces one-dimensional slices by richer tree systems while retaining closed-form transport on the induced tree geometry, nonlinear projectional variants enlarge the class of admissible slices, and distance-based variants improve invariance and sampling efficiency \cite{r11,r12,r13}. These methods are therefore strong baselines whenever a tree surrogate is acceptable. Our analysis keeps the original Euclidean or Mahalanobis cost and asks when that very cost becomes Monge after ordering the supports.

Analytic OT also arises from special distributional families. Gaussian $W_2$ has the Bures--Wasserstein formula because the optimal map is affine \cite{r19,r20}. Radial and other structured families reduce the transport map to a lower-dimensional object. These cases reinforce the broader point that analytic transport requires structure; the present work isolates a different source of structure, namely compatibility between an order and the ground cost.

\subsection{Monge Matrices, Recognition, and Structured Costs}
Monge arrays are a classical tractable class in combinatorial optimization. Hoffman and Markowitz gave early greedy-solvability results for assignment and transportation problems \cite{r21}; Bein et al. characterized and generalized the Monge property for transportation arrays \cite{r22}; and Burkard et al. surveyed the algorithmic consequences of Monge structure across optimization problems \cite{r23}. Recognition is itself a mature topic. Rudolf gave polynomial algorithms for recognizing whether multi-dimensional arrays can be permuted into Monge form \cite{r24}. Recent seriation theory studies statistically optimal recovery of latent permutations from noisy Robinson-type similarities, including minimax-optimal polynomial-time recovery and active-query variants \cite{r26,r27}. This literature acts on an observed similarity or cost array and, when needed, searches for permutations that reveal latent order. Our transport criterion is different: it evaluates a rectangular source--target cost through the regret that a candidate order would induce.

The geometric side is less developed. H\"utter et al. showed that squared Euclidean distance matrices generated by ordered point configurations can be Monge when consecutive differences form acute angles, and they connected pre-Monge structure to seriation \cite{r25}. This observation is an important antecedent of our work. The present result differs in four ways: the source and target sequences are independent; the induced matrix may be rectangular; the cost is Mahalanobis rather than only Euclidean; and the condition is stated uniformly over an entire cone, which makes it reusable across all future pairs of chains. We also develop quantitative transport bounds once the Monge inequalities fail.

Nearly Monge arrays have been studied under other perturbation models, for example by allowing forbidden or incompatible entries in otherwise structured assignment arrays \cite{r28}. Our use of ``approximate Monge'' is different: all entries remain finite and we measure the degree to which the quadrangle inequalities themselves are violated. Nickel et al. recently revisited transportation problems under Monge costs and derived compact dual formulas from the northwest-corner structure \cite{r29}; their work further emphasizes that the transport problem is classical once Monge structure is given.

A second nearby line learns or designs the cost. Klein et al. learn elastic transport costs that encourage Monge displacements to follow low-dimensional engineered structure \cite{r30}. Our setup keeps $M$ fixed in the main theory and instead asks when the data order makes the resulting cross-cost array Monge. These two viewpoints are complementary: one shapes displacements by altering the cost, while the other diagnoses tractable order structure under a prescribed cost.

The cumulative-defect analysis also touches classical dependence theory. Fr\'echet--Hoeffding bounds identify the extremal cumulative mass compatible with fixed marginals, while comonotone and countermonotone couplings are classical extremizers for supermodular and submodular objectives \cite{r31,r32,r33}. We use these facts in a different role: the local mixed differences of an arbitrary cost matrix are allowed to have both signs, and the Fr\'echet width of each cut quantifies the largest amount by which a competing coupling can exploit a local violation. This yields a transport-regret bound and, below, an exact worst-case characterization under prescribed local defect budgets.

\begin{figure*}[t]
\centering
\includegraphics[width=0.95\textwidth]{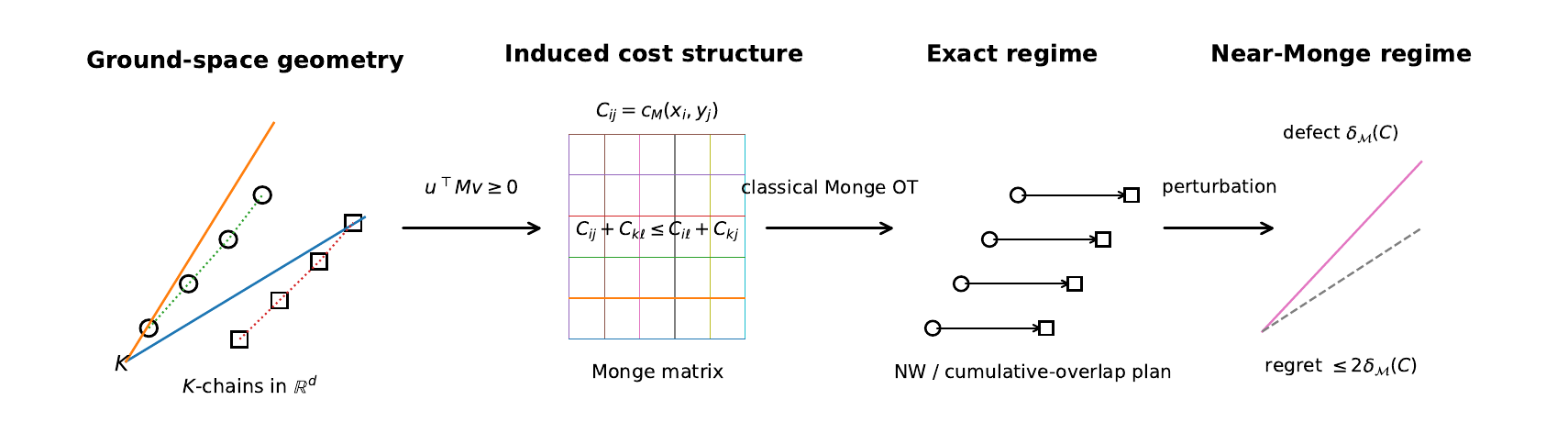}
\caption{From ground-space geometry to structured transport. The geometric implication from an $M$-acute cone to Monge cross-cost matrices is the new step studied here. Once a matrix is Monge, northwest-corner optimality is classical. When the exact inequalities are perturbed, local Monge defects and distance-to-cone quantities yield quantitative regret guarantees.}
\label{fig:pipeline}
\end{figure*}

\subsection{Ordered and Constrained Transport}
Order enters OT in several forms that should not be conflated. Directional OT constrains feasible displacements and constructs optimal directional couplings for suitable costs \cite{r34}. Convex displacement-constrained OT similarly restricts admissible pairs through conditions such as $y-x\in C$ \cite{r35,r36}. These are feasibility constraints on transported pairs. By contrast, our cone order acts on the relative order within the source and within the target; an optimal cone-chain plan need not satisfy $y-x\in K$ on each transported pair.

Order-preserving Wasserstein distances have been proposed for sequence matching, where the order is temporal \cite{r37}, while other methods impose inequalities directly on entries of the transport matrix \cite{r38}. Stochastic-order projections and causal or adapted OT encode measure-level dominance or information-flow constraints \cite{r39,r40,r41,r42,r43}. Vector quantile regression extends scalar quantiles through cyclically monotone convex-gradient transport maps \cite{r44}. Our object is narrower: we ask when an order on the ground space transforms the algebraic structure of the source--target cost array itself.

\section{Problem Setup and Structural Notions}
\subsection{Discrete Optimal Transport}
Let
\begin{equation}
\mu=\sum_{i=1}^{m}a_i\delta_{x_i},\qquad \nu=\sum_{j=1}^{n}b_j\delta_{y_j},
\end{equation}
with $a_i,b_j>0$ and $\sum_i a_i=\sum_j b_j=1$. Given a cost matrix $C\in\R^{m\times n}$,
\begin{equation}
\OT_C(a,b)=\min_{\pi\ge 0}\bigl\{\langle C,\pi\rangle:\ \pi\one_n=a,\ \pi^\top\one_m=b\bigr\}.
\end{equation}
For points in $\R^d$, the main theory uses
\begin{equation}
c_M(x,y)=\|x-y\|_M^2=(x-y)^\top M(x-y),\qquad M\succeq 0.
\end{equation}
Statements that require a genuine norm assume $M\succ0$.

\subsection{Cone Orders and Ordered Supports}
\begin{definition}[Cone order]
A closed convex pointed cone $K\subset\R^d$ induces
\begin{equation}
x\preceq_K y \quad\Longleftrightarrow\quad y-x\in K.
\end{equation}
A finite sequence $(x_1,\ldots,x_m)$ is a $K$-chain if $x_i\preceq_K x_k$ whenever $i<k$.
\end{definition}
The cone is called $M$-acute when
\begin{equation}
u^\top Mv\ge0,\qquad \forall u,v\in K.
\end{equation}
For $M=I$, this is the usual acuteness condition. If $K=A\R_+^r$ is finitely generated, the columns of $A\in\R^{d\times r}$ are its generators.

\subsection{Monge Structure and Local Margins}
\begin{definition}[Monge matrix]
A matrix $C\in\R^{m\times n}$ is Monge if
\begin{equation}
C_{ij}+C_{k\ell}\le C_{i\ell}+C_{kj},\qquad i<k,\ j<\ell.
\end{equation}
Its adjacent Monge margins are
\begin{equation}
\gamma_{ij}(C)=C_{i,j+1}+C_{i+1,j}-C_{ij}-C_{i+1,j+1}.
\end{equation}
\end{definition}
The full family of quadrangle inequalities is equivalent to $\gamma_{ij}(C)\ge0$ for every adjacent rectangle because every larger rectangle cross-difference is the sum of adjacent margins \cite{r23}.

\begin{definition}[Approximate Monge structure]
Let $\Mcal_{m,n}$ be the cone of Monge matrices. Define
\begin{equation}
\delta_M(C)=\inf_{\widetilde C\in\Mcal_{m,n}}\|C-\widetilde C\|_\infty,
\end{equation}
and
\begin{equation}
\varepsilon_{\mathrm{adj}}(C)=\max_{i<m,j<n}\pos{-\gamma_{ij}(C)}.
\end{equation}
\end{definition}
The first quantity is a global distance to the structured class; the second is immediately computable once the ordered cost matrix is available. Later we avoid even full matrix projection by expressing regret directly in terms of the local margins.

\subsection{Three Levels of Structural Information}
The same transport instance can be described at three levels.
\begin{enumerate}[leftmargin=*]
\item \emph{Instance-level}: the observed matrix $C$ is Monge, or approximately so.
\item \emph{Chain-level}: the actual adjacent increments $\Delta x_i=x_{i+1}-x_i$ and $\Delta y_j=y_{j+1}-y_j$ satisfy the required pairwise inner-product signs.
\item \emph{Cone-level}: every pair $u,v\in K$ satisfies (11), which guarantees the same property for all future chains contained in $K$.
\end{enumerate}
These levels are logically distinct. A fixed instance can be Monge even when no useful cone-level condition is available. Conversely, a uniform cone condition may be conservative for one data set but becomes valuable when the same geometry is reused across many transport queries.

\subsection{Order Dimension Versus Ambient Geometry}
A chain has a scalar order parameter, but this does not force its support to lie on a one-dimensional affine subspace. The distinction is essential: the monotone rule chooses which points are paired, whereas the cost still evaluates the full high-dimensional displacement.

\begin{proposition}[High-rank cone chains]
Let $K=\R_+^d$. For every $r\le \min\{d,m-1\}$ there exists a $K$-chain $x_1\preceq_K\cdots\preceq_K x_m$ such that
\begin{equation}
\rank[x_2-x_1,\ldots,x_m-x_{m-1}]\ge r.
\end{equation}
The same statement holds for the target chain. Consequently, exact Monge transport can occur on ordered supports whose local displacement directions span an arbitrarily high-dimensional subspace of the ambient space.
\end{proposition}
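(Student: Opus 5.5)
The construction is explicit: I would build the chain directly from standard basis vectors of $\R^d$. Let $e_1,\ldots,e_d$ be the standard basis and fix $r\le\min\{d,m-1\}$. Set $x_1=0$ and define the increments
\[
x_{i+1}-x_i=e_i \quad (1\le i\le r),\qquad x_{i+1}-x_i=e_1 \quad (r<i\le m-1).
\]
Any nonzero vector of $\R_+^d$ would serve for the padding increments; the choice $e_1$ is just a convenience. This needs $m-1\ge r$ and $d\ge r$, which is exactly the hypothesis. (Alternatively, if $m-1>d$, one can repeat the basis vectors cyclically.)

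Next I would verify that this is a $K$-chain. For $i<k$,
\[
x_k-x_i=\sum_{t=i}^{k-1}(x_{t+1}-x_t)
\]
is a sum of vectors in $\R_+^d$. Since $K=\R_+^d$ is a convex cone, this sum lies in $K$, so $x_i\preceq_K x_k$. The increment matrix $[x_2-x_1,\ldots,x_m-x_{m-1}]$ contains $e_1,\ldots,e_r$ among its columns, so its rank is at least $r$.

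For the target chain I would use the identical construction, or a translate $y_j=z+x_j$ for any fixed $z\in\R^d$.

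For the ``consequently'' part I would observe two things. First, $K=\R_+^d$ is $M$-acute for $M=I$, since $u^\top v\ge0$ whenever $u,v\ge0$ entrywise. Second, by the cone-level equivalence stated in the introduction (every pair of $K$-chains yields a Monge cross-cost matrix), the cross-cost matrix of these chains is Monge, and classical northwest-corner optimality then applies. The local Monge property can also be checked directly: each adjacent margin equals $2\Delta x_i^\top M\Delta y_j\ge0$. Because $r$ can be taken as large as $\min\{d,m-1\}$, the displacement directions span a subspace of dimension up to $d$, which is arbitrarily high.

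There is no real obstacle here. The only point needing care is that the ``consequently'' clause relies on the later acuteness theorem. If it is cited before that theorem is proved, I would instead include the one-line computation
\[
\gamma_{ij}=2\,\Delta x_i^\top M\Delta y_j
\]
to keep the argument self-contained.
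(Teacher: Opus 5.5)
Your construction is correct and is essentially the paper's proof: start at $x_1=0$, take $e_1,\ldots,e_r$ as the first $r$ increments and arbitrary vectors of $\R_+^d$ for the rest, so the partial sums are $K$-ordered and the increment matrix has rank at least $r$. Your explicit check of the ``consequently'' clause through $\gamma_{ij}=2\,\Delta x_i^\top M\Delta y_j\ge0$ with $M=I$ is a harmless extra that the paper leaves implicit.
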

\emph{Construction.} Take the first $r$ increments to be distinct positive coordinate directions and the remaining increments to be any nonnegative vectors. Every increment belongs to $K$, while the first $r$ are linearly independent. This simple observation rules out an interpretation of the exact theorem as requiring a straight one-dimensional path in $\R^d$.

The primary high-dimensional experiment mirrors this construction with real images: successive positive perturbations are applied to distinct pixel patches, so the chain order is exact while the increment matrix has high numerical rank.

\section{Cone Geometry and Monge Structure}
\subsection{A Cross-Difference Identity}
Let $x_i\preceq_K x_k$ and $y_j\preceq_K y_\ell$. For Mahalanobis cost,
\begin{align}
&c_M(x_i,y_\ell)+c_M(x_k,y_j)-c_M(x_i,y_j)-c_M(x_k,y_\ell)\nonumber\\
&\qquad=2(x_k-x_i)^\top M(y_\ell-y_j).
\end{align}
This identity is the only algebra needed to connect the ground geometry to Monge matrices. Its strength comes from quantification over all ordered pairs rather than from the complexity of the expansion itself.

\subsection{Cone--Monge Equivalence}
\begin{theorem}[Cone--Monge equivalence]
Let $K\subset\R^d$ be a closed convex cone and $M\succeq0$. The following are equivalent.
\begin{enumerate}[label=(\roman*),leftmargin=*]
\item $u^\top Mv\ge0$ for every $u,v\in K$.
\item For every pair of finite $K$-chains $x_1\preceq_K\cdots\preceq_K x_m$ and $y_1\preceq_K\cdots\preceq_K y_n$, the cross-cost matrix $C_{ij}=c_M(x_i,y_j)$ is Monge.
\end{enumerate}
Under either condition, the northwest-corner plan is optimal for every choice of positive chain marginals $a,b$ by the classical Monge transportation theorem.
\end{theorem}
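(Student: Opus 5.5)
The proof will rest entirely on the cross-difference identity (16). For indices $i<k$ and $j<\ell$ in two $K$-chains it reads
\[
C_{i\ell}+C_{kj}-C_{ij}-C_{k\ell}=2(x_k-x_i)^\top M(y_\ell-y_j).
\]
The Monge inequality for the rectangle $(i,k;j,\ell)$ therefore says exactly that the $M$-inner product of a source increment and a target increment is nonnegative. I will verify (16) by expanding the four quadratic forms: the terms $x^\top Mx$ and $y^\top My$ cancel in pairs, and only the bilinear cross terms $-2x^\top My$ remain.

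\emph{(i)$\Rightarrow$(ii).} Take two $K$-chains and indices $i<k$, $j<\ell$. By the definition of a chain, $u:=x_k-x_i\in K$ and $v:=y_\ell-y_j\in K$. Hypothesis (i) gives $u^\top Mv\ge0$, so by (16 ) every quadrangle inequality holds and $C$ is Monge. It is equally enough to check adjacent increments $\Delta x_i,\Delta y_j\in K$, in line with the remark that adjacent margins $\gamma_{ij}(C)=2\Delta x_i^\top M\Delta y_j$ generate all the others.

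\emph{(ii)$\Rightarrow$(i).} Given arbitrary $u,v\in K$, I will build the smallest possible chains: $m=n=2$ with $x_1=0$, $x_2=u$, $y_1=0$, $y_2=v$. Since $0\in K$ (it is a closed cone), each of these is a $K$-chain: $x_2-x_1=u\in K$, $y_2-y_1=v\in K$, and the trivial relations hold. Condition (ii) then says this $2\times2$ matrix is Monge, and (16) gives $\gamma_{11}=2u^\top Mv\ge0$. The only point needing care is whether the chain definition requires distinct points or strict increments. If $u=0$ or $v=0$, the inequality is trivial anyway, so the construction stands. Pointedness is not needed for this direction, and the theorem does not assume it.

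\emph{Optimality clause.} Once $C$ is Monge, I will simply invoke the classical Hoffman--Bein et al.\ theorem \cite{r21,r22,r23}: for a Monge cost matrix, the northwest-corner plan is optimal for all marginals $a,b$. No new argument is required, so the whole proof is short. The main obstacle is only the bookkeeping in the sign convention of (16) relative to the Monge definition (10). I will check this on the $2\times2$ case, where $C_{12}+C_{21}-C_{11}-C_{22}=\gamma_{11}$ matches the margin definition (12).
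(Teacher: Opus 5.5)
Your proposal is correct and follows the paper's proof essentially step for step. The cross-difference identity gives (i)$\Rightarrow$(ii) because every chain increment lies in $K$, the two-point chains $(0,u)$ and $(0,v)$ give the converse, and the optimality clause is the classical Monge transportation theorem; your added remarks (that $0\in K$, that zero increments are harmless, and that pointedness is not used) are accurate side observations rather than a different route.
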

\emph{Proof sketch.} If (i) holds, every increment $x_k-x_i$ and $y_\ell-y_j$ belongs to $K$, so (17) is nonnegative and hence the Monge inequality holds. Conversely, take arbitrary $u,v\in K$ and two-point chains $x_1=0,x_2=u$ and $y_1=0,y_2=v$. The Monge inequality for their $2\times2$ cross-cost matrix yields $u^\top Mv\ge0$. The final northwest-corner statement is classical \cite{r21,r22}. Full details are given in Appendix~\ref{app:equivalence}.

Theorem 1 should be read as a geometric characterization of a family of Monge matrices. It does not replace matrix-level recognition. Instead, it identifies a condition in the original feature space that implies the matrix property for all chain pairs simultaneously.

\subsection{Chain-Level and Uniform Conditions}
For actual chains, Theorem 1 can be weakened.
\begin{proposition}[Hierarchy of sufficient conditions]
For Mahalanobis cost, the following implications hold:
\begin{equation}
\text{cone acuteness}\Longrightarrow\text{adjacent-chain condition}\Longrightarrow\text{instance-level Monge structure},
\end{equation}
where the adjacent chain condition is
\begin{equation}
(\Delta x_i)^\top M(\Delta y_j)\ge0,\qquad \forall i<m,\ j<n.
\end{equation}
Moreover, (19) is equivalent to nonnegativity of all adjacent margins of the induced cost matrix.
\end{proposition}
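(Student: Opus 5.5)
The plan is to derive all three claims from the cross-difference identity (17), applied to adjacent indices. First, fix adjacent indices $i<m$ and $j<n$. Apply (17) with $k=i+1$ and $\ell=j+1$; the identity is pure algebra and does not actually need the order hypothesis. This gives
\begin{equation*}
\gamma_{ij}(C)=C_{i,j+1}+C_{i+1,j}-C_{ij}-C_{i+1,j+1}=2(\Delta x_i)^\top M(\Delta y_j).
\end{equation*}
Since the factor $2$ is positive, $\gamma_{ij}(C)\ge0$ holds exactly when $(\Delta x_i)^\top M\Delta y_j\ge0$. Taking this over all adjacent pairs yields the stated equivalence between (19) and nonnegativity of all adjacent margins.

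Second, the implication from the adjacent-chain condition to instance-level Monge structure. Here I will invoke the remark after the Monge-matrix definition: for $i<k$ and $j<\ell$, the rectangle cross-difference $C_{i\ell}+C_{kj}-C_{ij}-C_{k\ell}$ equals the telescoping sum $\sum_{p=i}^{k-1}\sum_{q=j}^{\ell-1}\gamma_{pq}(C)$. It is cleanest to verify this directly from the identity, since $x_k-x_i=\sum_p\Delta x_p$ and $y_\ell-y_j=\sum_q\Delta y_q$. By bilinearity,
\begin{equation*}
2(x_k-x_i)^\top M(y_\ell-y_j)=\sum_{p,q}2(\Delta x_p)^\top M\Delta y_q\ge0.
\end{equation*}
Every quadrangle inequality therefore holds, and $C$ is Monge.

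Third, the implication from cone acuteness to the adjacent-chain condition. If $(x_i)$ and $(y_j)$ are $K$-chains, then $\Delta x_i=x_{i+1}-x_i\in K$ and $\Delta y_j\in K$ by the definition of the order. $M$-acuteness (11) then gives $(\Delta x_i)^\top M\Delta y_j\ge0$ directly. Note that the hypothesis of this implication must be that the supports are $K$-chains for an $M$-acute cone $K$. I will state this explicitly, since the adjacent-chain condition is a statement about concrete sequences.

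There is no real obstacle; the only point needing care is scope. The proposition should be read as concerning given ordered supports, with cone acuteness entering only through membership of the increments in $K$. If time permits, I will also add a remark, not required for the proof, that neither converse holds in general: for example, two increments in $K$ need not span all of $K$. This explains why the hierarchy is strict.
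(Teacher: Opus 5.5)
Your proposal is correct and follows the paper's route: the adjacent-margin identity $\gamma_{ij}(C)=2(\Delta x_i)^\top M(\Delta y_j)$ gives the equivalence; Monge-ness follows because every rectangle cross-difference is a sum of adjacent margins (your bilinear expansion of $x_k-x_i$ and $y_\ell-y_j$ re-derives this telescoping fact, which the paper cites); and acuteness applies because chain increments lie in $K$. One caution about your optional remark: for a fixed order the second arrow is actually reversible, since the quadrangle inequality with $k=i+1$, $\ell=j+1$ recovers the adjacent condition and the paper's appendix states it as an equivalence, so only the first implication is strict.
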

This hierarchy is useful empirically. The uniform cone condition is appropriate when a reusable geometry is known. The adjacent condition is less conservative when the order has already been estimated from a specific data set. The cost-matrix condition is the most permissive but requires the matrix itself.

\subsection{Finite Generators and Smooth Translation Costs}
\begin{proposition}[Finite-generator characterization]
Let $K=A\R_+^r$. Then $K$ is $M$-acute if and only if $A^\top M A$ is entrywise nonnegative.
\end{proposition}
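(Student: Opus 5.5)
The proof reduces both directions to the bilinear form $B(u,v)=u^\top Mv$ evaluated on generators. We write $K=A\R_+^r=\{A\lambda:\lambda\in\R_+^r\}$ and denote the columns of $A$ by $a_1,\ldots,a_r$. Then
\[
(A^\top MA)_{pq}=a_p^\top M a_q .
\]
So the statement says that $B$ is nonnegative on $K\times K$ exactly when it is nonnegative on all pairs of generators.

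For necessity, assume $K$ is $M$-acute. Each generator $a_p=Ae_p$ lies in $K$, because $e_p\in\R_+^r$. Applying the acuteness condition to $u=a_p$ and $v=a_q$ gives $(A^\top MA)_{pq}\ge0$ for every pair $(p,q)$. Equivalently, this is Theorem~1 applied to the two-point chains $(0,a_p)$ and $(0,a_q)$.

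For sufficiency, assume $A^\top MA\ge0$ entrywise. Take arbitrary $u,v\in K$ and write $u=A\lambda$ and $v=A\eta$ with $\lambda,\eta\in\R_+^r$. Then
\[
u^\top Mv=\lambda^\top (A^\top MA)\eta=\sum_{p,q}\lambda_p\eta_q (A^\top MA)_{pq}.
\]
Every summand is a product of nonnegative numbers, so $u^\top Mv\ge0$. Since $u,v\in K$ were arbitrary, $K$ is $M$-acute.

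There is no substantive obstacle. The only point needing care is that $K$ is exactly the set of nonnegative combinations $A\R_+^r$, so every $u,v\in K$ has such a representation. Closedness and convexity of $K$ hold automatically for finitely generated cones, and pointedness is not used. The symmetry of $M$ is also not needed, because both directions work with the ordered pair $(p,q)$.
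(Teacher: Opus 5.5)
Your proof is correct and follows the same route as the paper's: write $u=A\alpha$, $v=A\beta$ with $\alpha,\beta\ge0$ and observe $u^\top Mv=\alpha^\top(A^\top MA)\beta$. You make the necessity direction explicit by taking $\alpha=e_p$, $\beta=e_q$, which the paper's one-line ``iff'' leaves implicit.
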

The condition replaces an infinite family of inequalities over $K$ by a finite matrix test. When $M$ and $A$ are fixed, the test is performed once and reused for all subsequent chain pairs.

The same order--cost principle is not confined to quadratic costs.
\begin{proposition}[Smooth translation costs]
Let $c(x,y)=h(x-y)$ with $h\in C^2(\R^d)$ convex. If
\begin{equation}
u^\top \nabla^2 h(z)v\ge0,\qquad \forall z\in\R^d,\ u,v\in K,
\end{equation}
then every two $K$-chains induce a Monge cross-cost matrix.
\end{proposition}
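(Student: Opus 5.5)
Because the Monge property is equivalent to nonnegativity of all adjacent margins (see the remark after Definition~2), and more generally every rectangle cross-difference is a sum of adjacent ones, it suffices to show that for any $x\preceq_K x'$ and $y\preceq_K y'$
\begin{equation*}
D:=h(x-y')+h(x'-y)-h(x-y)-h(x'-y')\ge0 .
\end{equation*}
Here I take $x=x_i$, $x'=x_k$ with $i<k$, and $y=y_j$, $y'=y_\ell$ with $j<\ell$. By the chain property, $u:=x'-x\in K$ and $v:=y'-y\in K$. These are the only inputs I need from the chains. I will apply this directly to arbitrary rectangles, which avoids any reliance on telescoping.

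The key step is to write $D$ as a mixed second difference of $h$ and turn it into an integral of the Hessian. Set $z_0=x-y$. Then the four arguments are $z_0$, $z_0+u$, $z_0-v$ and $z_0+u-v$, and
\begin{equation*}
D=h(z_0-v)+h(z_0+u)-h(z_0)-h(z_0+u-v).
\end{equation*}
Define $g(s,t)=h(z_0+su-tv)$ on $[0,1]^2$. Then $D=g(0,1)+g(1,0)-g(0,0)-g(1,1)=-\int_0^1\!\int_0^1 \partial_s\partial_t g\,ds\,dt$. Since $h\in C^2$, we have $\partial_s\partial_t g(s,t)=-u^\top\nabla^2h(z_0+su-tv)\,v$, and therefore
\begin{equation*}
D=\int_0^1\!\!\int_0^1 u^\top \nabla^2 h(z_0+su-tv)\,v\,ds\,dt\;\ge 0,
\end{equation*}
because hypothesis (20) holds at every $z\in\R^d$. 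This is the analogue of identity (17): when $h(z)=z^\top Mz$ the integrand is the constant $2u^\top Mv$.

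The only delicate point is the sign bookkeeping in the mixed difference. The cost depends on $x-y$, so the target increment enters with a minus sign. I will double-check this against the quadratic case, where the integral must reproduce $2(x_k-x_i)^\top M(y_\ell-y_j)$ from (17). The argument does not actually use convexity of $h$, only the cross-Hessian sign condition on $K$; convexity is needed only if one later wants transport interpretations. For the same reason, the final statement that the northwest-corner plan is optimal follows from the classical Monge transportation theorem, exactly as in Theorem~1.
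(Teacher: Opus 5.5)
Your proof is correct and follows the paper's argument. The paper uses the same parametrization $F(s,t)=h(z+su-tv)$ with $z=x_i-y_j$, computes $\partial^2_{st}F=-u^\top\nabla^2 h(z+su-tv)\,v\le 0$, and concludes by submodularity on $[0,1]^2$; your explicit double-integral identity is that same step written out, and your remark that convexity of $h$ is never used applies equally to the paper's proof.
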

For $h(z)=z^\top Mz$, Proposition 4 reduces to Theorem 1. The quadratic case remains the central one because it admits the sharp cone-level equivalence and a natural Mahalanobis perturbation analysis.

\subsection{Mahalanobis Coordinates}
The geometry can be expressed without ambiguity under a linear change of coordinates.
\begin{proposition}[Euclidean representation of Mahalanobis acuteness]
Let $M=L^\top L$ with $L$ invertible. Then
\begin{equation}
u^\top Mv=(Lu)^\top(Lv),
\end{equation}
so $K$ is $M$-acute if and only if $LK$ is acute in the Euclidean inner product. Moreover,
\begin{equation}
c_M(x,y)=\|Lx-Ly\|_2^2.
\end{equation}
\end{proposition}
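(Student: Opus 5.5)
The argument is a direct algebraic verification resting on the factorization $M=L^\top L$. First I establish the inner-product identity. For any $u,v\in\R^d$, associativity of matrix products gives $u^\top M v=u^\top L^\top L v=(Lu)^\top(Lv)$, which holds with no conditions on $L$. This identity is the whole content of the proposition; the remaining claims are consequences of it.

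Next comes the equivalence of acuteness. By definition, $K$ is $M$-acute exactly when $u^\top Mv\ge0$ for all $u,v\in K$. By the identity this is the same as $(Lu)^\top(Lv)\ge0$ for all $u,v\in K$. Since $LK=\{Lu:u\in K\}$, as $(u,v)$ ranges over $K\times K$ the pair $(Lu,Lv)$ ranges over all of $LK\times LK$. The condition is therefore precisely Euclidean acuteness of $LK$: every pair $p,q\in LK$ satisfies $p^\top q\ge0$. This shows both directions at once, because each element of $LK$ has a preimage in $K$ by construction, so no injectivity argument is needed at this step.

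I will use invertibility of $L$ only to confirm that $LK$ is again a closed convex cone, and pointed when $K$ is pointed. A linear image of a convex cone is a convex cone. Since $L$ is a homeomorphism, closedness is preserved, and pointedness is preserved because $L$ is injective. This ensures that ``acute cone'' is meaningful in the same category of objects as in the paper's definition of a cone order.

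For the cost, I apply the identity with $u=v=x-y$ and use linearity of $L$, $L(x-y)=Lx-Ly$. This gives $c_M(x,y)=(x-y)^\top M(x-y)=\|L(x-y)\|_2^2=\|Lx-Ly\|_2^2$. As a remark, I will note the consequence: $x\preceq_K y$ if and only if $Lx\preceq_{LK}Ly$, again by invertibility. Hence $K$-chains map bijectively to $LK$-chains with identical cross-cost matrices, so Theorem 1 for $M$ is equivalent to its Euclidean case for $LK$. There is no genuine obstacle; the only point requiring care is stating where invertibility (equivalently $M\succ0$) is actually used, namely in the cone properties and the chain correspondence, and not in the identity itself.
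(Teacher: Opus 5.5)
Your proof is correct and follows the paper's own argument, which also reads both claims directly off $u^\top Mv=(Lu)^\top(Lv)$ and $(x-y)^\top M(x-y)=\|L(x-y)\|_2^2$. Your side remarks on where invertibility enters are accurate: it keeps $LK$ a closed pointed cone and makes $K$-chains correspond to $LK$-chains. Neither remark is needed for the stated identities or the acuteness equivalence.
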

Thus the cone--Monge theorem is invariant under an invertible linear reparameterization when the cone and the metric are transformed together.

This observation is useful for interpretation. A non-diagonal Mahalanobis matrix does not introduce a different notion of order--cost compatibility; it changes the coordinate system in which acuteness is judged. It also explains why the perturbation theorem below is most naturally written in the $M$-norm rather than by inserting an external operator-norm constant.

\section{Exact and Approximate Monge Transport}
\subsection{Exact Chain Transport}
Let $A_i=\sum_{r\le i}a_r$ and $B_j=\sum_{s\le j}b_s$, with $A_0=B_0=0$. The northwest-corner plan has entries
\begin{equation}
\omega_{ij}=\pos{\min(A_i,B_j)-\max(A_{i-1},B_{j-1})}.
\end{equation}
\begin{corollary}[Exact transport on cone chains]
Under Theorem 1,
\begin{equation}
\OT_C(a,b)=\sum_{i,j}\omega_{ij}c_M(x_i,y_j).
\end{equation}
For equal weights and $m=n$,
\begin{equation}
\OT_C(a,b)=\frac1n\sum_{i=1}^n\|x_i-y_i\|_M^2.
\end{equation}
\end{corollary}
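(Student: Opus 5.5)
The plan is to combine Theorem 1 with the classical Monge transportation theorem, and then specialize the northwest-corner plan to uniform weights.

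\emph{Step 1: the general formula.} Under either condition of Theorem 1, the cross-cost matrix $C_{ij}=c_M(x_i,y_j)$ of the two $K$-chains is Monge. By the classical result of Hoffman and Bein et al., the northwest-corner plan is then optimal for every choice of marginals. To apply this, I will check that $\omega$ as defined in (23) is exactly that plan. Introduce the cumulative-overlap description: $\omega_{ij}$ is the length of $(A_{i-1},A_i]\cap(B_{j-1},B_j]$. The row sums telescope to $a_i$, because the intervals $(B_{j-1},B_j]$ partition $(0,1]$, so $\sum_j \omega_{ij}=|(A_{i-1},A_i]|=a_i$. By symmetry the column sums are $b_j$. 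Nonnegativity is immediate from the positive part in (23). Hence $\omega$ is feasible and coincides with the greedy northwest-corner allocation, which gives $\OT_C(a,b)=\langle C,\omega\rangle$.

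Should the interval identification need justification, it follows from the fact that the greedy rule fills the cumulative row and column masses in increasing order. This is the standard quantile-coupling description.

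\emph{Step 2: equal weights.} Take $m=n$ and $a_i=b_j=1/n$. Then $A_i=B_i=i/n$. For $i=j$ the two intervals coincide, so $\omega_{ii}=1/n$. For $i\ne j$ the half-open intervals $((i-1)/n,i/n]$ and $((j-1)/n,j/n]$ are disjoint, so $\min(A_i,B_j)-\max(A_{i-1},B_{j-1})\le0$ and $\omega_{ij}=0$. Substituting into Step 1 gives $\frac1n\sum_i c_M(x_i,y_i)=\frac1n\sum_i\|x_i-y_i\|_M^2$.

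There is no substantive obstacle here. The only point needing care is Step 1, namely identifying the explicit overlap formula (23) with the plan to which the classical theorem applies, together with the observation that the theorem holds for arbitrary positive marginals and rectangular $m\times n$ arrays.
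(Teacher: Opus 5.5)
Your proposal is correct and matches the paper's proof: Theorem 1 makes the cross-cost matrix Monge, the overlap-length description of $\omega$ gives marginals $a,b$ and identifies it with the northwest-corner plan, and the classical Monge transportation theorem gives optimality. Your explicit check that $\omega_{ii}=1/n$ and $\omega_{ij}=0$ for $i\ne j$ under equal weights fills in the step the paper leaves implicit.
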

The corollary is a geometric route to a classical Monge transportation solution, not a new northwest-corner algorithm.

\subsection{Distance-to-Monge Regret}
The first perturbation statement is independent of cones.
\begin{theorem}[Distance-to-Monge regret]
Let $C\in\R^{m\times n}$ and let $\widetilde C\in\Mcal_{m,n}$ satisfy $\|C-\widetilde C\|_\infty\le\delta$. For the same probability marginals $a,b$, let $\pi_{\NW}$ be the northwest-corner plan. Then
\begin{equation}
0\le\langle C,\pi_{\NW}\rangle-\OT_C(a,b)\le2\delta.
\end{equation}
Consequently,
\begin{equation}
\langle C,\pi_{\NW}\rangle-\OT_C(a,b)\le2\delta_M(C).
\end{equation}
\end{theorem}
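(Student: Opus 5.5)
The plan is a standard perturbation argument: compare both sides against the Monge surrogate $\widetilde C$, using two facts. The northwest-corner plan $\pi_{\NW}$ depends only on the marginals $a,b$, not on the cost. Every feasible coupling has total mass one.

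\textbf{Lower bound.} The inequality $0\le\langle C,\pi_{\NW}\rangle-\OT_C(a,b)$ is immediate. By the formula for $\omega_{ij}$, $\pi_{\NW}$ is nonnegative and has row sums $a$ and column sums $b$. It is therefore feasible in the minimization defining $\OT_C(a,b)$, so its cost is at least the minimum.

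\textbf{Upper bound.} Let $\pi^\star$ be an optimal coupling for $C$; it exists because the transportation polytope is compact and nonempty. For any feasible $\pi$, the entries are nonnegative and sum to $\sum_i a_i=1$. Hence
\[
|\langle C-\widetilde C,\pi\rangle|\le\|C-\widetilde C\|_\infty\sum_{ij}\pi_{ij}\le\delta .
\]
Because $\widetilde C$ is Monge, the classical Monge transportation theorem invoked in Theorem~1 shows that $\pi_{\NW}$ is optimal for $\widetilde C$ under the same marginals. In particular, $\langle\widetilde C,\pi_{\NW}\rangle\le\langle\widetilde C,\pi^\star\rangle$. Chaining the three estimates gives
\[
\langle C,\pi_{\NW}\rangle\le\langle\widetilde C,\pi_{\NW}\rangle+\delta\le\langle\widetilde C,\pi^\star\rangle+\delta\le\langle C,\pi^\star\rangle+2\delta=\OT_C(a,b)+2\delta .
\]

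\textbf{Consequence.} To pass to $\delta_M(C)$, fix $\eta>0$ and choose $\widetilde C\in\Mcal_{m,n}$ with $\|C-\widetilde C\|_\infty\le\delta_M(C)+\eta$. Apply the bound just proved and let $\eta\to0$. Alternatively, note that $\Mcal_{m,n}$ is a closed polyhedral cone, so the infimum defining $\delta_M(C)$ is attained.

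\textbf{Main obstacle.} The only genuinely nontrivial input is that the northwest-corner plan is optimal for any Monge cost and any positive marginals. This is the classical Hoffman/Bein et al.\ result already used in Theorem~1, so I will cite it rather than reprove it. One point needs care: here $\widetilde C$ is an arbitrary Monge matrix, not one induced by chains. The classical theorem is stated for arbitrary Monge arrays, so this is fine. The rest is bookkeeping on the total mass.
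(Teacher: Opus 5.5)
Your proof is correct and follows the paper's argument essentially verbatim. Both use the unit-mass estimate $|\langle C-\widetilde C,\pi\rangle|\le\delta$ together with optimality of $\pi_{\NW}$ for the Monge surrogate $\widetilde C$, chained through an optimal coupling $\pi^\star$ for $C$; your explicit passage to $\delta_M(C)$, by $\eta$-approximation or by attainment on the closed cone $\Mcal_{m,n}$, fills in a step the paper leaves implicit.
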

The proof uses only two facts: $\pi_{\NW}$ is optimal for every Monge matrix with the same marginals, and every probability transport plan has unit total mass. The bound therefore isolates the effect of cost perturbation from any geometric model that produced it.

\subsection{Exact Regret Geometry}
Computing $\delta_M(C)$ is useful for a global perturbation statement, but it hides how different local violations interact through the marginal constraints. The interaction can be described exactly in cumulative coordinates. For a coupling $\pi$, let
\begin{equation}
H_\pi(i,j)=\sum_{r\le i,\,s\le j}\pi_{rs},\qquad i<m,\ j<n,
\end{equation}
and let $H_{\NW}(i,j)=\min(A_i,B_j)$. Define the cumulative deficit
\begin{equation}
D^\pi_{ij}=H_{\NW}(i,j)-H_\pi(i,j),
\end{equation}
and the feasible deficit polytope
\begin{equation}
\Dcal(a,b)=\{D^\pi:\pi\in\Pi(a,b)\}\subset\R^{(m-1)(n-1)}.
\end{equation}
Since $H_{\NW}$ is the Fr\'echet--Hoeffding upper bound, $D^\pi_{ij}\ge0$ for every feasible coupling.

\begin{theorem}[Exact regret representation]
For every cost matrix $C$ and probability marginals $a,b$,
\begin{equation}
\langle C,\pi_{\NW}\rangle-\OT_C(a,b)=\max_{D\in\Dcal(a,b)}\left\{-\sum_{i=1}^{m-1}\sum_{j=1}^{n-1}\gamma_{ij}(C)D_{ij}\right\}.
\end{equation}
Equivalently, northwest-corner regret is the support function of $\Dcal(a,b)$ evaluated at $-\gamma(C)$.
\end{theorem}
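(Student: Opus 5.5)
The plan is a summation-by-parts identity that rewrites $\langle C,\pi\rangle$ through the cumulative masses $H_\pi$. Once that identity is in hand, the regret becomes a linear function of the deficit $D^\pi$ with coefficients $-\gamma_{ij}(C)$, and the theorem follows by maximizing over couplings.

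\textbf{Step 1 (discrete integration by parts).} Extend $H_\pi(i,j)=\sum_{r\le i,s\le j}\pi_{rs}$ to all $0\le i\le m$, $0\le j\le n$, with $H_\pi(0,\cdot)=H_\pi(\cdot,0)=0$. Every feasible $\pi$ has the same boundary values $H_\pi(m,j)=B_j$ and $H_\pi(i,n)=A_i$. Write $\pi_{ij}=H(i,j)-H(i-1,j)-H(i,j-1)+H(i-1,j-1)$ and apply Abel summation in both indices. This should give
\[
\langle C,\pi\rangle=\sum_{i=1}^{m}\sum_{j=1}^{n}\bigl(C_{ij}-C_{i+1,j}-C_{i,j+1}+C_{i+1,j+1}\bigr)H_\pi(i,j),
\]
with the convention that entries of $C$ with index $m+1$ or $n+1$ are zero. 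For interior indices $i<m$, $j<n$, the coefficient equals $-\gamma_{ij}(C)$. All remaining terms involve only $H_\pi(m,j)$ or $H_\pi(i,n)$, which are fixed by the marginals. Hence
\[
\langle C,\pi\rangle=\kappa(C,a,b)-\sum_{i<m,\,j<n}\gamma_{ij}(C)\,H_\pi(i,j),
\]
where $\kappa$ does not depend on $\pi$. A direct check on a $2\times2$ example should be enough to confirm the sign.

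\textbf{Step 2 (difference with northwest corner).} The northwest-corner plan is feasible, and its cumulative mass is exactly $H_{\NW}(i,j)=\min(A_i,B_j)$, so Step 1 applies to it as well. Subtracting the two identities cancels $\kappa$ and gives
\[
\langle C,\pi_{\NW}\rangle-\langle C,\pi\rangle=-\sum_{i<m,\,j<n}\gamma_{ij}(C)\bigl(H_{\NW}(i,j)-H_\pi(i,j)\bigr)=-\sum_{i<m,\,j<n}\gamma_{ij}(C)\,D^\pi_{ij}.
\]
Take the maximum over $\pi\in\Pi(a,b)$. The left side becomes $\langle C,\pi_{\NW}\rangle-\OT_C(a,b)$, and the right side becomes the maximum over $D\in\Dcal(a,b)$, because $\Dcal(a,b)$ is by definition the image of $\Pi(a,b)$ under $\pi\mapsto D^\pi$. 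Both maxima are attained: $\Pi(a,b)$ is compact, and $\Dcal(a,b)$ is its image under an affine map, hence a polytope. The support-function reformulation is then just a restatement of this maximum.

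The main obstacle is the bookkeeping in Step 1. One must verify that the boundary terms depend only on $A$ and $B$, and that the interior coefficient is exactly $-\gamma_{ij}$ rather than $+\gamma_{ij}$. I would handle this cleanly by doing the one-dimensional Abel summation in $j$ first, for each fixed row, and then in $i$.
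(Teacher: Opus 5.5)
Your proposal is correct and takes essentially the same route as the paper. Both use a two-dimensional summation by parts to write $\langle C,\pi_{\NW}-\pi\rangle=-\sum_{i<m,\,j<n}\gamma_{ij}(C)D^\pi_{ij}$ and then maximize over $\pi\in\Pi(a,b)$. The only bookkeeping difference is where the boundary terms disappear: the paper applies the summation by parts directly to $E=\pi_{\NW}-\pi$, whose zero row and column sums kill them immediately, whereas you carry the marginal-determined constant $\kappa(C,a,b)$ for each plan and cancel it on subtraction.
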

The theorem separates cost information from marginal geometry. The cost enters only through its adjacent mixed differences; the admissible interactions among those defects are encoded entirely by $\Dcal(a,b)$.

Every coupling also satisfies the Fr\'echet--Hoeffding lower bound $\max\{0,A_i+B_j-1\}\le H_\pi(i,j)$, hence
\begin{equation}
0\le D^\pi_{ij}\le w_{ij}:=\min(A_i,B_j)-\max\{0,A_i+B_j-1\}.
\end{equation}
Thus $\Dcal(a,b)$ lies inside the axis-aligned Fr\'echet box
\begin{equation}
\Bcal(a,b)=\prod_{i<m,j<n}[0,w_{ij}].
\end{equation}
Replacing the true deficit polytope by this box gives a closed-form relaxation.

\subsection{A Projection-Free Cumulative-Defect Bound}
\begin{theorem}[Cumulative-defect regret]
For every cost matrix $C$ and probability marginals $a,b$,
\begin{equation}
0\le\langle C,\pi_{\NW}\rangle-\OT_C(a,b)\le\sum_{i=1}^{m-1}\sum_{j=1}^{n-1}w_{ij}\pos{-\gamma_{ij}(C)}.
\end{equation}
\end{theorem}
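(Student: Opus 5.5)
The plan is to derive the bound directly from Theorem~5 (exact regret representation) by enlarging the feasible set from $\Dcal(a,b)$ to the Fr\'echet box $\Bcal(a,b)$, and then to evaluate the support function of the box in closed form. The lower bound $0\le\langle C,\pi_{\NW}\rangle-\OT_C(a,b)$ is immediate, since $\pi_{\NW}\in\Pi(a,b)$ is feasible for the minimization defining $\OT_C$. It can also be read off Theorem~5: taking $\pi=\pi_{\NW}$ gives $D^{\pi_{\NW}}=0\in\Dcal(a,b)$, so the maximum there is at least $0$.

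For the upper bound, I would first recall why $\Dcal(a,b)\subseteq\Bcal(a,b)$. For every $\pi\in\Pi(a,b)$, the cumulative mass $H_\pi(i,j)$ satisfies the Fr\'echet--Hoeffding bounds
\[
\max\{0,A_i+B_j-1\}\le H_\pi(i,j)\le\min(A_i,B_j).
\]
The upper bound holds because $H_\pi(i,j)$ is dominated by both the row mass $A_i$ and the column mass $B_j$. The lower bound holds because the mass lying outside the rectangle is at most $(1-A_i)+(1-B_j)$. Subtracting $H_\pi(i,j)$ from $H_{\NW}(i,j)=\min(A_i,B_j)$ yields $0\le D^\pi_{ij}\le w_{ij}$, as stated before the theorem. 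It follows that
\[
\max_{D\in\Dcal(a,b)}\Bigl\{-\sum_{i,j}\gamma_{ij}(C)D_{ij}\Bigr\}\le\max_{D\in\Bcal(a,b)}\Bigl\{-\sum_{i,j}\gamma_{ij}(C)D_{ij}\Bigr\}.
\]

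The maximization over the box separates coordinatewise. For each pair $(i,j)$, we maximize $-\gamma_{ij}D_{ij}$ over $D_{ij}\in[0,w_{ij}]$. If $\gamma_{ij}\ge0$, the maximum is $0$, attained at $D_{ij}=0$. If $\gamma_{ij}<0$, the maximum is $w_{ij}(-\gamma_{ij})$, attained at $D_{ij}=w_{ij}$. In both cases the value is $w_{ij}\pos{-\gamma_{ij}(C)}$, using $w_{ij}\ge0$. Summing over $(i,j)$ and combining with Theorem~5 gives the claimed bound.

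The only substantive ingredient is therefore Theorem~5, which we may assume. If one preferred to avoid it, the main obstacle would be establishing the identity
\[
\langle C,\pi\rangle-\langle C,\pi_{\NW}\rangle=\sum_{i,j}\gamma_{ij}(C)\,D^\pi_{ij}.
\]
I would prove it by summation by parts in both indices: write $\pi_{rs}$ as a second mixed difference of $H_\pi$, and note that the boundary terms $H_\pi(i,n)=A_i$ and $H_\pi(m,j)=B_j$ coincide for $\pi$ and $\pi_{\NW}$, so they cancel in the difference. The sign convention must then be checked against the definition of $\gamma_{ij}$ on a $2\times2$ example.
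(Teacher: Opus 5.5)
Your proposal is correct and matches the paper's proof: you relax $\Dcal(a,b)$ in the exact regret representation to the Fr\'echet box $\Bcal(a,b)$, then maximize the separable linear form coordinatewise, with $D_{ij}=w_{ij}$ when $\gamma_{ij}(C)<0$ and $D_{ij}=0$ otherwise. One small slip: the exact regret representation is Theorem~3 in the paper, not Theorem~5, which is the worst-case sharpness result.
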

The right-hand side is the support function of $\Bcal(a,b)$ at $-\gamma(C)$, whereas Theorem 3 uses the smaller polytope $\Dcal(a,b)$. Their difference therefore measures the interaction among local defects imposed by global transport feasibility. The bound is not claimed to dominate Theorem 2; the two use different information.

\begin{corollary}[When the cumulative bound is exact]
Equality holds in Theorem 4 if and only if there exists a feasible coupling $\pi$ such that
\begin{equation}
D^\pi_{ij}=w_{ij}\ \text{whenever }\gamma_{ij}(C)<0,\qquad D^\pi_{ij}=0\ \text{whenever }\gamma_{ij}(C)>0.
\end{equation}
No condition is imposed on cells with zero margin. In particular, every anti-Monge cost attains equality through the countermonotone Fr\'echet coupling.
\end{corollary}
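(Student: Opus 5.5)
The plan is to compare the exact representation of Theorem 3 with the box relaxation of Theorem 4, written as a single chain of inequalities. For any $D\in\Dcal(a,b)\subset\Bcal(a,b)$ we have
\begin{equation*}
-\sum_{i,j}\gamma_{ij}(C)D_{ij}\;\le\;\sum_{i,j}\pos{-\gamma_{ij}(C)}D_{ij}\;\le\;\sum_{i,j}w_{ij}\pos{-\gamma_{ij}(C)}.
\end{equation*}
The first inequality drops the cells with $\gamma_{ij}>0$; the second uses $D_{ij}\le w_{ij}$ on cells with $\gamma_{ij}<0$. By Theorem 3 the regret is the maximum of the left-hand side over $\Dcal(a,b)$. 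The polytope $\Dcal(a,b)$ is the linear image of the compact set $\Pi(a,b)$, so the maximum is attained at some $D^{\pi^\ast}$.

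Equality in Theorem 4 therefore holds if and only if, at this maximizer, both inequalities above are tight. Term by term, the first inequality is tight exactly when $\gamma_{ij}(C)D_{ij}=0$ for every cell with $\gamma_{ij}>0$, that is, $D_{ij}=0$ there. The second is tight exactly when $\pos{-\gamma_{ij}}(w_{ij}-D_{ij})=0$, that is, $D_{ij}=w_{ij}$ wherever $\gamma_{ij}<0$. Cells with $\gamma_{ij}=0$ contribute zero to every expression, so no condition is imposed on them.

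This gives both directions. For necessity, take $\pi=\pi^\ast$; equality of the chain forces the stated conditions. For sufficiency, if some feasible $\pi$ satisfies the conditions, then $D^\pi$ makes the chain an equality. Hence the maximum in Theorem 3 is at least the box bound, and combined with Theorem 4 it equals it. The only care needed is that each termwise inequality has a nonnegative slack, so equality of the sums forces equality term by term.

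For the anti-Monge case, $\gamma_{ij}(C)\le0$ for all adjacent cells, so the only requirement is $D^\pi_{ij}=w_{ij}$ wherever $\gamma_{ij}<0$. Take the countermonotone Fr\'echet coupling, the northwest-corner plan applied after reversing the column order. I will check, and this is the step I expect to be the main obstacle, that its cumulative function is $H_\pi(i,j)=\max\{0,A_i+B_j-1\}$. Reversing columns sends $H(i,j)$ to $A_i-H'(i,n-j)$, where $H'$ is the cumulative function of the reversed plan. Since that plan is monotone, $H'(i,n-j)=\min(A_i,1-B_j)$. Hence $H_\pi(i,j)=A_i-\min(A_i,1-B_j)=\max\{0,A_i+B_j-1\}$. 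This yields $D^\pi_{ij}=w_{ij}$ in every cell, which proves the claim.
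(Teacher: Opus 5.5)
Your proof is correct and takes the paper's approach: the paper notes that the box support function is attained exactly on the coordinatewise maximizing face of $\Bcal(a,b)$, and that equality holds iff this face meets $\Dcal(a,b)$; your termwise-slack argument with an attained maximizer is that reasoning written out explicitly. Your column-reversal derivation of $H_\pi(i,j)=\max\{0,A_i+B_j-1\}$ for the countermonotone coupling is also correct, and it supplies a step the paper only asserts (in the proof of Theorem 5).
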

Thus the relaxation is exact whenever the sign pattern of the mixed differences admits a feasible deficit field that reaches the coordinatewise optimal face of the Fr\'echet box. When simultaneous saturation is impossible, the gap between the box relaxation and the exact regret quantifies defect interaction.

The weights in (34) are not an artifact of the proof. They are the smallest possible universal coefficients when only lower bounds on adjacent margins are available.

\begin{theorem}[Worst-case sharpness under local defect budgets]
Fix probability marginals $a,b$ and nonnegative numbers $d_{ij}$ for $i<m,j<n$. Let
\begin{equation}
\mathcal{C}(d)=\{C\in\R^{m\times n}:\gamma_{ij}(C)\ge-d_{ij}\ \text{for all }i,j\}.
\end{equation}
Then
\begin{equation}
\sup_{C\in\mathcal{C}(d)}\{\langle C,\pi_{\NW}\rangle-\OT_C(a,b)\}=\sum_{i=1}^{m-1}\sum_{j=1}^{n-1}w_{ij}d_{ij}.
\end{equation}
In particular, every coefficient $w_{ij}$ in Theorem 4 is individually unimprovable without information beyond the local defect budgets.
\end{theorem}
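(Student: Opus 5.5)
The upper bound is immediate from Theorem 4. Every $C\in\mathcal{C}(d)$ satisfies $\pos{-\gamma_{ij}(C)}\le d_{ij}$, so its regret is at most $\sum w_{ij}\pos{-\gamma_{ij}(C)}\le\sum w_{ij}d_{ij}$. The work is the matching lower bound. I would obtain it from the exact representation of Theorem 3 by exhibiting one cost $C^\star\in\mathcal{C}(d)$ and one feasible coupling whose deficit field saturates every box coordinate at once.

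First I choose the cost to be \emph{anti-Monge with prescribed margins}: $\gamma_{ij}(C^\star)=-d_{ij}$ for every adjacent cell. Such a matrix exists for any prescribed margins. Set $C^\star_{1j}=C^\star_{i1}=0$ and define the rest recursively by
\[
C^\star_{i+1,j+1}=C^\star_{i,j+1}+C^\star_{i+1,j}-C^\star_{ij}+d_{ij}.
\]
Equivalently, $C^\star_{kl}=\sum_{i<k,\,j<l}d_{ij}$, up to a sign convention that I will fix so that $\gamma_{ij}=-d_{ij}$. Then $C^\star\in\mathcal{C}(d)$ with equality in every constraint.

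Next I take the countermonotone (Fréchet lower bound) coupling $\pi^-$. It is the northwest-corner rule applied with the column order reversed, so its cumulative function is $H_{\pi^-}(i,j)=\max\{0,A_i+B_j-1\}$. This is the standard Fréchet--Hoeffding fact for discrete marginals; I would verify it by noting that the reversed-order staircase plan puts mass $\min(A_i,1-B_j)$... , more precisely by computing $\sum_{r\le i}\sum_{s\le j}\pi^-_{rs}=A_i-\min(A_i,1-B_j)$. Hence $D^{\pi^-}_{ij}=w_{ij}$ for all $i,j$.

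Plugging $D^{\pi^-}$ into Theorem 3 gives
\[
\text{regret}(C^\star)\ \ge\ -\sum\gamma_{ij}(C^\star)w_{ij}=\sum w_{ij}d_{ij}.
\]
This is the same mechanism as the anti-Monge clause of Corollary 2, whose condition holds because the negative cells are exactly those with $d_{ij}>0$ and there are no positive cells. Combined with the upper bound, the supremum is attained and equals $\sum w_{ij}d_{ij}$.

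For the ``individually unimprovable'' clause, I would specialize to $d=t\,e_{ij}$ (a single nonzero budget). The worst-case regret is then exactly $t\,w_{ij}$. So any universal bound of the form $\sum c_{ij}\pos{-\gamma_{ij}}$ must have $c_{ij}\ge w_{ij}$.

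The main obstacle is the careful verification that the countermonotone coupling's cumulative function equals the Fréchet lower bound on every cut $(i,j)$, $i<m$, $j<n$, for arbitrary discrete marginals. This requires a short induction over the reversed northwest-corner staircase. A secondary point is to get the sign convention of the explicit $C^\star$ right, so that the anti-Monge direction is correct.
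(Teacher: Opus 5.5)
Your proof is correct and follows the paper's argument: the upper bound comes from Theorem 4, and the lower bound comes from the recursively built anti-Monge cost with $\gamma_{ij}=-d_{ij}$, tested against the countermonotone Fr\'echet coupling, whose deficit equals $w_{ij}$ at every cut. The only difference is cosmetic: you use that coupling merely as a feasible competitor in Theorem 3, so you never need its optimality for anti-Monge costs (which the paper invokes), and your reversed-northwest-corner computation $H_{\pi^-}(i,j)=A_i-\min(A_i,1-B_j)$ already settles what you flagged as the main obstacle.
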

The upper bound is Theorem 4. Equality is attained by an anti-Monge cost with adjacent margins exactly $-d_{ij}$; the countermonotone Fr\'echet coupling attains the lower cumulative bound at every cut. Hence (34) is the exact worst-case regret determined by the local mixed-difference information alone. This is a worst-case statement, not a claim of pointwise equality for every mixed-sign defect pattern.

\begin{corollary}[Euclidean realizability of prescribed defect patterns]
Let $D=(d_{ij})\in\R_+^{(m-1)\times(n-1)}$ and $r=\rank(D)$. There exist source and target sequences in $\R^r$ whose squared-Euclidean cross-cost matrix satisfies $\gamma_{ij}=-d_{ij}$ for every adjacent rectangle. Consequently, the worst-case equality in Theorem 5 can be attained within genuine squared-Euclidean transport costs, not only by abstract cost arrays.
\end{corollary}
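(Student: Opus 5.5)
Via the cross-difference identity (17) with $M=I$, adjacent margins of a squared-Euclidean cross-cost are inner products of increments:
\[
\gamma_{ij}(C)=2(\Delta x_i)^\top(\Delta y_j),\qquad \Delta x_i=x_{i+1}-x_i,\ \Delta y_j=y_{j+1}-y_j .
\]
The task therefore reduces to a factorization problem: find vectors $u_1,\dots,u_{m-1}$ and $v_1,\dots,v_{n-1}$ in $\R^r$ with $u_i^\top v_j=-d_{ij}/2$ for all $i,j$.

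\emph{Step 1: factor $D$.} Since $\rank(D)=r$, take any rank factorization (for instance a thin SVD $D=P\Sigma Q^\top$) and write $D=UV^\top$ with $U\in\R^{(m-1)\times r}$ and $V\in\R^{(n-1)\times r}$. Set $u_i=-\tfrac12 U_{i,:}^\top$ and $v_j=V_{j,:}^\top$. Then $u_i^\top v_j=-d_{ij}/2$.

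\emph{Step 2: build the chains.} Put $x_1=0$ and $x_{i+1}=x_i+u_i$, and likewise $y_1=0$ and $y_{j+1}=y_j+v_j$. These sequences lie in $\R^r$. The identity gives $\gamma_{ij}=2u_i^\top v_j=-d_{ij}$ for every adjacent rectangle. Only adjacent increments enter, so no cone or ordering condition is needed. If $r=0$, i.e.\ $D=0$, any constant sequences in $\R^0$ or $\R^1$ work trivially.

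\emph{Step 3: conclude.} The resulting cost matrix has $\gamma_{ij}(C)=-d_{ij}\le 0$ for all $i,j$, so it is anti-Monge and lies in $\mathcal{C}(d)$. By Corollary 2, equality in Theorem 4 holds through the countermonotone Fr\'echet coupling, so the regret equals $\sum w_{ij}d_{ij}$. This is exactly the supremum in Theorem 5, which is therefore attained.

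The only step needing care is sign bookkeeping: the identity (17) must be matched to the margin convention $\gamma_{ij}=C_{i,j+1}+C_{i+1,j}-C_{ij}-C_{i+1,j+1}$, so that the factor $-\tfrac12$ is placed correctly. The remaining steps are immediate.
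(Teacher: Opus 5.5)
Your proof is correct and is essentially the paper's argument. You take a rank factorization (you factor $D$ and put the $-\tfrac12$ on the source increments instead of factoring $-D/2$), use the rows as increments, take cumulative sums, and apply the adjacent-margin identity $\gamma_{ij}=2\Delta x_i^\top\Delta y_j$; invoking Corollary 2 for the attainment step amounts to the countermonotone-coupling computation in the proof of Theorem 5, and your separate treatment of $r=0$ is a harmless addition.
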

The construction follows from a rank factorization $-D/2=UV^\top$: use the rows of $U$ and $V$ as source and target increments and take cumulative sums. This result does not assert that the worst-case pattern lies near a common acute cone; it only shows that the sharpness mechanism is geometrically realizable within high-dimensional Euclidean OT.

The same argument gives a coarse uniform corollary.
\begin{proposition}[Uniform adjacent-violation bound]
If $\gamma_{ij}(C)\ge-\varepsilon$ for all adjacent indices, then
\begin{equation}
\langle C,\pi_{\NW}\rangle-\OT_C(a,b)\le\varepsilon\sum_{i,j}w_{ij}\le\frac{\varepsilon}{2}(m-1)(n-1).
\end{equation}
Moreover,
\begin{equation}
\delta_M(C)\le\frac{\varepsilon}{4}(m-1)(n-1).
\end{equation}
\end{proposition}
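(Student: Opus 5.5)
The plan is to read off the first inequality from Theorem~4 plus an entrywise bound on the Fr\'echet widths. The second inequality will come from an explicit rank-one correction that turns $C$ into a Monge matrix.

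\emph{Regret bound.} By hypothesis $\gamma_{ij}(C)\ge-\varepsilon$, so $\pos{-\gamma_{ij}(C)}\le\varepsilon$ for every adjacent cell. Theorem~4 then gives $\langle C,\pi_{\NW}\rangle-\OT_C(a,b)\le\varepsilon\sum_{i,j}w_{ij}$.

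It remains to show $w_{ij}\le\tfrac12$ for every cell. Write $A=A_i$ and $B=B_j$, both in $[0,1]$, and split into two cases.
\begin{itemize}
\item If $A+B\le1$, then $w_{ij}=\min(A,B)\le\tfrac{A+B}{2}\le\tfrac12$.
\item If $A+B>1$, then $w_{ij}=\min(A,B)-A-B+1=1-\max(A,B)\le 1-\tfrac{A+B}{2}<\tfrac12$.
\end{itemize}
Summing over the $(m-1)(n-1)$ cells yields $\varepsilon\sum_{i,j}w_{ij}\le\tfrac{\varepsilon}{2}(m-1)(n-1)$.

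\emph{Distance-to-Monge bound.} I would exhibit a perturbation $E$ with $\gamma_{ij}(E)\ge\varepsilon$ for all adjacent cells. Since $\gamma$ is linear, this gives $\gamma_{ij}(C+E)\ge0$, so $C+E\in\Mcal_{m,n}$ by the adjacent-margin characterization of Monge matrices. It follows that $\delta_M(C)\le\|E\|_\infty$.

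To build $E$, use a separable matrix $E_{ij}=f(i)g(j)$. A one-line expansion gives $\gamma_{ij}(E)=-(f(i+1)-f(i))(g(j+1)-g(j))$. Choose
\begin{equation*}
f(i)=-\varepsilon\Bigl(i-\tfrac{m+1}{2}\Bigr),\qquad g(j)=j-\tfrac{n+1}{2}.
\end{equation*}
Then $\Delta f=-\varepsilon$ and $\Delta g=1$, so $\gamma_{ij}(E)=\varepsilon$ exactly.

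The centering at the midpoints is the key choice. The correction needs only the right mixed differences, so we are free to add row and column shifts; centering uses that freedom to minimize the sup norm. With it, $|i-\tfrac{m+1}{2}|\le\tfrac{m-1}{2}$ and $|j-\tfrac{n+1}{2}|\le\tfrac{n-1}{2}$. Hence $\|E\|_\infty=\tfrac{\varepsilon}{4}(m-1)(n-1)$, which is the claimed bound.

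Neither step is a real obstacle. The only points requiring care are the width bound $w_{ij}\le\tfrac12$ in both cases and choosing the centering rather than an uncentered product such as $-\varepsilon ij$, which would lose a factor of four.
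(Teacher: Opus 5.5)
Your proposal is correct and follows the paper's argument. The regret bound comes from Theorem~4 with $\pos{-\gamma_{ij}(C)}\le\varepsilon$, and the distance bound uses exactly the paper's centered rank-one correction $-\varepsilon\bigl(i-\tfrac{m+1}{2}\bigr)\bigl(j-\tfrac{n+1}{2}\bigr)$, whose adjacent margins all equal $+\varepsilon$. Your two-case check that $w_{ij}\le\tfrac12$ supplies a step the paper leaves implicit; like the paper, you tacitly assume $\varepsilon\ge0$.
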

The latter estimate is a worst-case structural bound; it is not intended to be tight in large matrices.

\subsection{Relation to $c$-Cyclical Monotonicity}
Monge matrices and cyclical monotonicity describe optimality at different levels. Cyclical monotonicity is the general support condition for optimal transport, while the Monge quadrangle inequality is a stronger ordered structure that makes a particular support optimal without solving the global problem. On totally ordered source and target supports, the connection is especially simple.

\begin{proposition}[Ordered Monge support is cyclically monotone]
Let $C$ be a Monge matrix and let $\pi_{\NW}$ be the northwest-corner plan. Its support is $c$-cyclically monotone. In particular, every finite cyclic reassignment of mass among support pairs has cost no smaller than the ordered assignment.
\end{proposition}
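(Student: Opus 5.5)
The proof is a standard LP-duality argument. Since $C$ is Monge, Theorem~1 and the classical Monge transportation theorem make $\pi_{\NW}$ an optimal solution of the transportation LP with marginals $a,b$. Complementary slackness then supplies dual potentials $(f,g)$ with $f_i+g_j\le C_{ij}$ for all $i,j$, and with equality on every cell of the support $S=\{(i,j):\omega_{ij}>0\}$.

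We first check $c$-cyclical monotonicity directly. Take support pairs $(i_1,j_1),\dots,(i_k,j_k)\in S$ and any permutation $\sigma$ of $\{1,\dots,k\}$. Then
\[
\sum_{t=1}^k C_{i_t j_t}=\sum_{t=1}^k (f_{i_t}+g_{j_t})=\sum_{t=1}^k (f_{i_t}+g_{j_{\sigma(t)}})\le\sum_{t=1}^k C_{i_t j_{\sigma(t)}}.
\]
The second equality holds because $\sigma$ only reorders the $g$-terms. This is exactly $c$-cyclical monotonicity of the support.

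The ``in particular'' clause is then immediate. A finite cyclic reassignment of mass $\epsilon$ among support pairs replaces the cells $(i_t,j_t)$ by $(i_t,j_{\sigma(t)})$. By the display, the cost changes by $\epsilon$ times a nonnegative quantity.

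The one point that needs care is whether dual optimal potentials exist. Here the LP is finite, feasible and bounded, so strong duality and complementary slackness hold with no further hypotheses.

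A purely combinatorial proof is also available, and one might prefer it because it uses only the Monge inequalities. The support of $\pi_{\NW}$ is a monotone staircase: its cells are totally ordered, with both $i$ and $j$ nondecreasing. Given a cycle of support pairs, the pairs can be sorted so that $i_1\le\dots\le i_k$ and $j_1\le\dots\le j_k$. The task then reduces to showing that the identity assignment minimizes $\sum_t C_{i_t j_{\sigma(t)}}$ over permutations $\sigma$. If $\sigma$ is not the identity, it has an inversion $s<t$ with $\sigma(s)>\sigma(t)$. The Monge inequality for rows $i_s\le i_t$ and columns $j_{\sigma(t)}\le j_{\sigma(s)}$ shows that swapping the two targets does not increase the cost. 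Degenerate rectangles with a repeated row or column give equality. Each swap reduces the number of inversions, so the process ends at the identity, which therefore has cost no larger than $\sigma$.

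The only delicate step in this route is handling ties: repeated rows or columns on the staircase make some rectangle inequalities hold with equality rather than strictly. They are still valid inequalities, so the exchange argument goes through unchanged.
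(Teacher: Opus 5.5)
Your proof is correct. Your second, combinatorial route is essentially the paper's own proof. The paper sorts the chosen support pairs by source index, uses the noncrossing staircase shape of the northwest-corner support to get nondecreasing target indices, and removes inversions one at a time with the Monge exchange inequality. You differ only in swapping arbitrary rather than adjacent inversions, using the inversion count to guarantee termination, and in treating the tie cases explicitly, which the paper leaves implicit.

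Your primary route through LP duality is genuinely different. It is shorter and more general. It shows that the support of \emph{any} optimal plan of a finite transportation problem lies in the tight set $\{(i,j): f_i+g_j=C_{ij}\}$ of a dual optimum, and is therefore $c$-cyclically monotone. However, it uses the Monge property only through the classical theorem that $\pi_{\NW}$ is optimal, so the order structure does no visible work. Citing Theorem~1 there is unnecessary: $C$ is assumed Monge outright, and only the classical Hoffman/Bein et al.\ optimality result is needed.

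The exchange argument instead derives cyclical monotonicity directly from the quadrangle inequalities and the staircase support, without presupposing optimality. That is exactly the point the paper wants to make: why a pairwise inequality suffices once the supports are ordered. Combined with the standard fact that a finite plan with $c$-cyclically monotone support is optimal, it also gives an independent proof of northwest-corner optimality. The duality route cannot provide this without circularity.
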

The result follows by decomposing any permutation of a totally ordered list into inversions and removing those inversions one at a time with the Monge exchange inequality. This proposition does not replace the general theory of cyclic monotonicity; it explains why a pairwise quadrangle inequality becomes sufficient once the supports are ordered.

\subsection{From Cone Perturbations to Monge Defects}
The matrix bounds become geometrically meaningful once the margins are expressed through adjacent increments. Assume now that $M\succ0$ and let
\begin{equation}
\eta^x_{i,M}=\dist_M(\Delta x_i,K):=\inf_{u\in K}\|\Delta x_i-u\|_M,
\end{equation}
with $\eta^y_{j,M}$ defined analogously. Let
\begin{equation}
r^x_{i,M}=\|\Delta x_i\|_M,\qquad r^y_{j,M}=\|\Delta y_j\|_M.
\end{equation}

\begin{theorem}[Mahalanobis geometric perturbation bound]
Assume $K$ is $M$-acute. Then
\begin{equation}
\pos{-\gamma_{ij}(C)}\le2\bigl(\eta^x_{i,M}r^y_{j,M}+\eta^y_{j,M}r^x_{i,M}+\eta^x_{i,M}\eta^y_{j,M}\bigr).
\end{equation}
Consequently,
\begin{align}
\langle C,\pi_{\NW}\rangle-\OT_C(a,b)
&\le2\sum_{i=1}^{m-1}\sum_{j=1}^{n-1}w_{ij}\bigl(\eta^x_{i,M}r^y_{j,M}\nonumber\\
&\hspace{16mm}+\eta^y_{j,M}r^x_{i,M}+\eta^x_{i,M}\eta^y_{j,M}\bigr).
\end{align}
\end{theorem}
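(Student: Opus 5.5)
We first write the adjacent margin in geometric form and then bound its negative part by projecting both increments onto $K$. Apply the cross-difference identity (17) to the adjacent pairs $(x_i,x_{i+1})$ and $(y_j,y_{j+1})$. That identity is purely algebraic and does not need the points to form a chain. It gives
\begin{equation*}
\gamma_{ij}(C)=2(\Delta x_i)^\top M(\Delta y_j).
\end{equation*}
The negative part of this bilinear quantity is what we must control.

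Since $M\succ0$, the set $K$ is closed and convex, and $\|\cdot\|_M$ is a Hilbert norm, there is a unique $M$-projection $u_i\in K$ with $\|\Delta x_i-u_i\|_M=\eta^x_{i,M}$. Similarly there is $v_j\in K$ with $\|\Delta y_j-v_j\|_M=\eta^y_{j,M}$. Set $e_i=\Delta x_i-u_i$ and $f_j=\Delta y_j-v_j$. If one prefers to avoid projections, we may take near-minimizers and pass to a limit; this is not essential.

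Next we expand the product. We choose the expansion that keeps the observed increments rather than the projected ones:
\begin{equation*}
(\Delta x_i)^\top M\Delta y_j = u_i^\top M v_j + e_i^\top M\Delta y_j + (\Delta x_i)^\top M f_j - e_i^\top M f_j .
\end{equation*}
To check this, substitute $\Delta x_i=u_i+e_i$ and $\Delta y_j=v_j+f_j$. The terms $e_i^\top Mv_j+e_i^\top Mf_j$ come from $e_i^\top M\Delta y_j$, and $u_i^\top Mf_j+e_i^\top Mf_j$ come from $(\Delta x_i)^\top Mf_j$. Hence one copy of $e_i^\top Mf_j$ must be subtracted.

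By $M$-acuteness, $u_i^\top Mv_j\ge0$. For the remaining three terms we use Cauchy--Schwarz in the $M$-inner product, which is available through Proposition~5, i.e.\ $u^\top Mv=(Lu)^\top(Lv)$. This gives
\begin{equation*}
-(\Delta x_i)^\top M\Delta y_j\le \eta^x_{i,M}r^y_{j,M}+\eta^y_{j,M}r^x_{i,M}+\eta^x_{i,M}\eta^y_{j,M}.
\end{equation*}
The right-hand side is nonnegative, so the same bound holds for the positive part. Multiplying by $2$ yields (43).

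For the regret statement (44), we insert (43) termwise into the cumulative-defect bound of Theorem~4. This is valid because the weights $w_{ij}$ are nonnegative. They are nonnegative as Fr\'echet widths: the bound $\max\{0,A_i+B_j-1\}\le\min(A_i,B_j)$ holds for all probability marginals.

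The only point that needs care is the choice of decomposition in the second step. The expansion has to keep $r^x,r^y$, the norms of the observed increments, rather than the norms of the projections. It also has to produce exactly one cross term $\eta^x\eta^y$ with coefficient $1$. The identity above achieves this.
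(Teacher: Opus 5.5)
Your proof is correct, and it differs from the paper's only in the algebraic decomposition. The paper uses the symmetric expansion $\Delta x_i^\top M\Delta y_j=u_i^\top Mv_j+e_i^\top Mv_j+u_i^\top Mf_j+e_i^\top Mf_j$. It therefore also needs $\|u_i\|_M\le r^x_{i,M}$ and $\|v_j\|_M\le r^y_{j,M}$, which it gets from nonexpansiveness of the metric projection $P_K^M$ together with $P_K^M(0)=0$.

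Your asymmetric expansion, with one copy of $e_i^\top Mf_j$ subtracted, pairs the observed increments directly with the residuals. You never need a bound on the projected vectors. Consequently your argument works for any $u_i,v_j\in K$, hence for near-minimizers. It does not rely on existence of the projection or on its nonexpansiveness, and nothing is lost when you pass to the infimum.

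The paper's symmetric form has its own advantage: it is the same expansion reused in the strict-margin corollary, where the natural scale is $\|u_i\|_M\|v_j\|_M$ of the reference increments rather than the observed ones.

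Your closing remark overstates the need for care. The symmetric expansion also yields exactly the same bound once you note $\|u_i\|_M\le r^x_{i,M}$. For a closed convex cone this even follows from the Pythagorean identity $\|u_i\|_M^2+\|e_i\|_M^2=\|\Delta x_i\|_M^2$. So your decomposition is a convenience, not a necessity.

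The regret step, inserting the local bound into the cumulative-defect bound (Theorem 4) using $w_{ij}\ge0$, is the same as in the paper.
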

Unlike a Euclidean projection argument followed by an operator-norm conversion, Theorem 6 measures both the defect and the increment size in the geometry of the transport cost itself. For $M=I$ it reduces to the Euclidean statement. The result is useful even when the cone condition is only approximate: it converts a directly interpretable geometric deviation into an upper bound on the loss of using the monotone plan.

\begin{remark}[Oscillation-capped geometric certificate]
Let $B_{\mathrm{geom}}$ denote the right-hand side of (43) and define
\begin{equation}
\osc(C)=\max_{i,j}C_{ij}-\min_{i,j}C_{ij}.
\end{equation}
Because both $\pi_{\NW}$ and an optimal coupling have unit mass,
\begin{equation}
0\le\langle C,\pi_{\NW}\rangle-\OT_C(a,b)\le\min\{B_{\mathrm{geom}},\osc(C)\}.
\end{equation}
Thus $B_{\mathrm{geom}}/\osc(C)<1$ certifies an improvement over the trivial cost-range bound. When this ratio is large, the geometric estimate remains valid but should be interpreted as a conservative stability certificate rather than a quantitatively tight approximation. No first-order tightness claim is made for generic perturbations.
\end{remark}

A strict angular margin also gives a finite robustness radius for exactness.
\begin{corollary}[Strict-margin stability]
Let reference increments $u_i,v_j\in K\setminus\{0\}$ satisfy
\begin{equation}
\frac{u_i^\top Mv_j}{\|u_i\|_M\|v_j\|_M}\ge\kappa>0\qquad\text{for all }i,j.
\end{equation}
Suppose the observed increments are $\Delta x_i=u_i+e_i$ and $\Delta y_j=v_j+f_j$ with
\begin{equation}
\|e_i\|_M\le\rho\|u_i\|_M,\qquad \|f_j\|_M\le\rho\|v_j\|_M.
\end{equation}
If $2\rho+\rho^2\le\kappa$, then every adjacent Monge margin of the perturbed cross-cost matrix is nonnegative. Hence the northwest-corner plan remains exactly optimal. Equivalently, exactness is guaranteed for
\begin{equation}
\rho\le\sqrt{1+\kappa}-1.
\end{equation}
\end{corollary}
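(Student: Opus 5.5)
The plan is to check the adjacent margins directly, using the cross-difference identity (17), and then invoke Proposition 2 together with Theorem 1. Applying (17) to adjacent indices gives
\[
\gamma_{ij}(C)=2(\Delta x_i)^\top M(\Delta y_j)=2(u_i+e_i)^\top M(v_j+f_j).
\]
I will expand the right-hand side into four terms: the reference term $u_i^\top Mv_j$ and three perturbation terms $u_i^\top Mf_j$, $e_i^\top Mv_j$ and $e_i^\top Mf_j$.

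By the margin hypothesis, the reference term is at least $\kappa\|u_i\|_M\|v_j\|_M$. Each perturbation term will be bounded in absolute value by Cauchy--Schwarz in the $M$-inner product, which is legitimate because $M\succ0$ (or by working in the Euclidean coordinates $L$ of Proposition 5). Combined with the relative bounds on $e_i$ and $f_j$, this gives
\[
|u_i^\top Mf_j|\le\rho\|u_i\|_M\|v_j\|_M,\qquad |e_i^\top Mv_j|\le\rho\|u_i\|_M\|v_j\|_M,\qquad |e_i^\top Mf_j|\le\rho^2\|u_i\|_M\|v_j\|_M.
\]
Adding the four estimates yields
\[
\gamma_{ij}(C)\ge 2(\kappa-2\rho-\rho^2)\|u_i\|_M\|v_j\|_M,
\]
and the right-hand side is nonnegative as soon as $2\rho+\rho^2\le\kappa$. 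We never need $u_i,v_j\neq 0$ beyond making the normalized margin well defined.

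Once every adjacent margin is nonnegative, the remark following Definition 2 (every rectangle cross-difference is a sum of adjacent margins), or equivalently the last clause of Proposition 2, shows that $C$ is Monge. The classical northwest-corner theorem cited in Theorem 1 then gives exact optimality of $\pi_{\NW}$ for all marginals; equivalently, the bound of Theorem 4 vanishes because every $\pos{-\gamma_{ij}(C)}$ is zero.

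For the equivalent radius, I will rewrite $2\rho+\rho^2\le\kappa$ as $(1+\rho)^2\le1+\kappa$. Since $\rho\ge0$, this is the same as $\rho\le\sqrt{1+\kappa}-1$.

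There is no real obstacle here. The only point requiring care is that the Cauchy--Schwarz step must be carried out in the $M$-norm, so that no operator-norm constants appear.
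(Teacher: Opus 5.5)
Your proposal is correct and matches the paper's proof essentially line by line. You expand $\Delta x_i^\top M\Delta y_j$ into the reference term plus three perturbation terms, bound them via the margin hypothesis and Cauchy--Schwarz in the $M$-inner product to get $(\kappa-2\rho-\rho^2)\|u_i\|_M\|v_j\|_M\ge0$, and then conclude Monge structure and northwest-corner optimality through the adjacent-margin identity; your explicit check that $2\rho+\rho^2\le\kappa$ is equivalent to $\rho\le\sqrt{1+\kappa}-1$ for $\rho\ge0$ is a small detail the paper leaves implicit.
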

This criterion separates two regimes that are blurred by a generic perturbation bound: below the margin radius the exact Monge structure provably survives; beyond it, Theorems 4 and 6 quantify the loss rather than asserting exactness.

\subsection{Interpreting the Bounds}
Theorems 2--6 answer different questions. The distance-to-Monge bound is the cleanest perturbation statement and can be evaluated exactly for small matrices through linear programming. The cumulative-defect bound avoids this projection and is sensitive to both defect location and marginals. The geometric bound does not require forming a Monge projection at all, but it can be conservative because local distance-to-cone information discards cancellation between increments and accumulates triangle/Cauchy--Schwarz losses. For large deviations, the estimate should therefore be read as a qualitative stability certificate unless it improves materially on the cost-range cap in Remark 1.

The experiments report $B_{\mathrm{geom}}/R_{\mathrm{obs}}$ when $R_{\mathrm{obs}}>0$, the dimensionless informativeness ratio $B_{\mathrm{geom}}/\osc(C)$, and the corresponding cumulative-defect quantities. A bound that is formally correct but numerically vacuous is treated as such rather than presented as empirical confirmation.

\section{Continuous Chains and Structured Extensions}
\subsection{Continuous Chain-Supported Measures}
The discrete formula is not tied to empirical measures. Let $I_X,I_Y\subset\R$ be intervals, let $\gamma_X:I_X\to\R^d$ and $\gamma_Y:I_Y\to\R^d$ be Borel measurable injective $K$-monotone maps, and let $\alpha,\beta$ be probability measures on the parameter intervals. Write
\begin{equation}
\mu=(\gamma_X)_\#\alpha,\qquad \nu=(\gamma_Y)_\#\beta.
\end{equation}
\begin{theorem}[Continuous ordered transport]
Assume $c$ is lower semicontinuous, bounded below, integrable under the couplings considered, and Monge-compatible with $K$. Then the scalar cost
\begin{equation}
\widetilde c(s,t)=c(\gamma_X(s),\gamma_Y(t))
\end{equation}
has the one-dimensional Monge property. A comonotone coupling of $\alpha$ and $\beta$, formed from a common $U\sim\mathrm{Unif}(0,1)$ and generalized quantiles $F_\alpha^{-1}(U),F_\beta^{-1}(U)$, pushes forward to an optimal coupling of $\mu$ and $\nu$.
\end{theorem}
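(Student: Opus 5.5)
The plan is to reduce the statement to a one-dimensional transport problem on the parameter intervals, then push the solution forward.

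\emph{Step 1: the pulled-back cost is Monge.} Here ``Monge-compatible with $K$'' means that for all $x\preceq_K x'$ and $y\preceq_K y'$,
\[
c(x,y)+c(x',y')\le c(x,y')+c(x',y).
\]
For Mahalanobis costs this is Theorem 1(i) through identity (17); for smooth translation costs it is Proposition 4. Let $s<s'$ and $t<t'$. Monotonicity of $\gamma_X,\gamma_Y$ gives $\gamma_X(s)\preceq_K\gamma_X(s')$ and $\gamma_Y(t)\preceq_K\gamma_Y(t')$. Substituting these points yields
\[
\widetilde c(s,t)+\widetilde c(s',t')\le\widetilde c(s,t')+\widetilde c(s',t),
\]
which is the one-dimensional Monge (submodularity) property. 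Measurability of $\widetilde c$ follows from Borel measurability of the maps and of $c$.

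\emph{Step 2: reduce the transport problem to the parameter intervals.} Injectivity is used here. A Borel injective map between standard Borel spaces has Borel image and a Borel inverse on that image (Lusin--Souslin). Hence every coupling $\pi\in\Pi(\mu,\nu)$ lifts to a coupling $\widetilde\pi=(\gamma_X^{-1}\times\gamma_Y^{-1})_\#\pi\in\Pi(\alpha,\beta)$. Conversely, every coupling of $\alpha,\beta$ pushes forward to one of $\mu,\nu$. The cost is preserved in both directions, since $\int c\,d\pi=\int\widetilde c\,d\widetilde\pi$. Therefore $\OT_c(\mu,\nu)=\OT_{\widetilde c}(\alpha,\beta)$, and optimal couplings correspond. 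It is then enough to show that the comonotone coupling is optimal for $\widetilde c$.

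\emph{Step 3: optimality of the comonotone coupling.} This is the main obstacle, because $\widetilde c$ need not be lower semicontinuous: the $\gamma$'s are only Borel. So I would avoid compactness arguments on the intervals and work through the quantile representation. Every coupling of $\alpha$ and $\beta$ can be written as the law of $(F_\alpha^{-1}(U),F_\beta^{-1}(V))$ for some coupling of uniforms $(U,V)$. The goal is to show that the choice $V=U$ minimizes $E[\widetilde c(F_\alpha^{-1}(U),F_\beta^{-1}(V))]$.

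The function $g(u,v)=\widetilde c(F_\alpha^{-1}(u),F_\beta^{-1}(v))$ is still submodular, because the quantile functions are nondecreasing. I would first treat the discrete case: discretize $[0,1]$ into $N$ cells and approximate the coupling by a doubly stochastic matrix. Submodularity of $g$ makes the cost matrix $g(k/N,l/N)$ Monge, so the classical northwest-corner theorem (Theorem 1) shows that the diagonal coupling is optimal. Passing to the limit $N\to\infty$ needs the integrability hypothesis together with a convergence argument, for instance approximating $g$ in $L^1$ of the relevant couplings.

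A cleaner alternative, which I would try first, is the Lorentz/Cambanis--Simons--Stout inequality. For a submodular $g$ that is bounded below and integrable,
\[
E\,g(F_\alpha^{-1}(U),F_\beta^{-1}(U))\le E\,g(X,Y)
\]
for every coupling $(X,Y)$. The classical proof writes $g$ as a limit of sums of functions of the form $-\one\{s>a,t>b\}$ plus separable terms. It then uses the Fr\'echet--Hoeffding upper bound $P(X>a,Y>b)\le\min(1-F_\alpha(a),1-F_\beta(b))$, which the comonotone coupling attains. This mirrors the cumulative-deficit argument of Theorem 3 in continuous form. Lower semicontinuity and the lower bound on $c$ guarantee that all integrals are well defined in $(-\infty,\infty]$. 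The finiteness assumption then makes the comparison meaningful.

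\emph{Step 4: push forward.} Finally, push the comonotone coupling forward by $\gamma_X\times\gamma_Y$. By Step 2 the result is an optimal coupling of $\mu$ and $\nu$.
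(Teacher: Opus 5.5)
Your proposal is correct and follows the paper's route. The one-dimensional quadrangle inequality for $\widetilde c$ comes from $K$-monotonicity of $\gamma_X,\gamma_Y$ plus Monge compatibility of $c$; the comonotone quantile coupling is optimal for that one-dimensional Monge cost; and pushing it forward by $(\gamma_X,\gamma_Y)$ yields an optimal coupling of $\mu,\nu$.

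Your Step~2 and Step~3 make explicit two things the paper's proof only asserts. Step~2 lifts every $\pi\in\Pi(\mu,\nu)$ back through the Borel inverses given by injectivity and Lusin--Souslin, which is why pushing forward gives optimality against all couplings and not merely equal cost. Step~3 uses the quantile representation and the Fr\'echet--Hoeffding bound to justify the one-dimensional comonotone result. There you should cite a continuity-free version of that inequality, since the indicator decomposition you sketch needs right-continuity, which a merely Borel $\widetilde c$ may lack.
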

The theorem makes the role of dimension precise. The ordering coordinate is one-dimensional, but the transported points and the ground cost remain high-dimensional. This is fundamentally different from projecting the data themselves to one-dimensional slices: no component of the high-dimensional displacement is removed from the cost.

\subsection{Chain Covers and Block-Monge Structure}
A general partial order may have width larger than one. Let
\begin{equation}
X=\bigcup_{a=1}^{w_X}X^{(a)},\qquad Y=\bigcup_{b=1}^{w_Y}Y^{(b)}
\end{equation}
be chain covers. By Dilworth's theorem, the minimum number of chains equals the width of the finite poset \cite{r45}.

\begin{proposition}[Block-Monge structure]
If $K$ is $M$-acute, every source-chain/target-chain cross-cost block $C^{ab}$ is Monge.
\end{proposition}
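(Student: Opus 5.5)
The plan is to reduce Proposition~6 to Theorem~1 by observing that each block is itself a cross-cost matrix generated by a pair of $K$-chains. Fix a source chain $X^{(a)}$ and a target chain $Y^{(b)}$ from the given covers. Each is a totally ordered subset of the poset $(X,\preceq_K)$, respectively $(Y,\preceq_K)$.

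\textbf{Ordering.} I enumerate the elements of each chain in increasing $K$-order, say $x^{(a)}_1\preceq_K\cdots\preceq_K x^{(a)}_{m_a}$ and $y^{(b)}_1\preceq_K\cdots\preceq_K y^{(b)}_{n_b}$. The block is understood with rows and columns indexed in this order, which is the ordering relevant for the Monge property.

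\textbf{Distinct elements.} If two listed points coincide, their difference is $0\in K$, so the list is still a $K$-chain in the sense of the Definition. Pointedness of $K$ is only needed to make the enumeration well defined up to ties. Ties do not affect the argument, since the identity (17) still applies with a zero increment.

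\textbf{Applying Theorem~1.} The two ordered lists are finite $K$-chains, and $K$ is $M$-acute. Implication (i)$\Rightarrow$(ii) of Theorem~1 then says directly that $C^{ab}_{ij}=c_M(x^{(a)}_i,y^{(b)}_j)$ is Monge. Alternatively, one can argue from (17): for $i<k$ and $j<\ell$, the cross-difference equals $2(x^{(a)}_k-x^{(a)}_i)^\top M(y^{(b)}_\ell-y^{(b)}_j)$. Both increments lie in $K$, so this quantity is nonnegative by $M$-acuteness.

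The only real subtlety is interpretive. The full matrix $C$ is not claimed to be Monge, because elements in different chains may be incomparable. The statement applies blockwise, after ordering rows and columns within each block. I would also remark that the conclusion holds for any chain cover, not only a minimal Dilworth cover, since minimality plays no role. I expect no real obstacle beyond making this indexing convention explicit.
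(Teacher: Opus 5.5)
Your proposal is correct and matches the paper's proof, which is the same one-line reduction: enumerate each chain in increasing $K$-order and apply implication (i)$\Rightarrow$(ii) of the cone--Monge theorem to every source-chain/target-chain pair. Your remarks on the within-block indexing convention and on ties are harmless elaborations of what the paper leaves implicit.
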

This does not make unrestricted finite-width OT closed form. It identifies a block-structured matrix in which the within-block problems are analytic and the remaining difficulty is the allocation of mass across chains.

One transparent restricted model is useful for separating these two difficulties. Suppose
\begin{equation}
\mu=\sum_{a=1}^{w_X}\alpha_a\mu_a,\qquad \nu=\sum_{b=1}^{w_Y}\beta_b\nu_b,
\end{equation}
where $\mu_a$ and $\nu_b$ are chain-supported probability measures. Restrict attention to plans
\begin{equation}
\PiMix=\left\{\sum_{a,b}q_{ab}\pi_{ab}:\ q\in\Pi(\alpha,\beta),\ \pi_{ab}\in\Pi(\mu_a,\nu_b)\right\}.
\end{equation}
Then
\begin{equation}
\inf_{\pi\in\PiMix}\int c\,d\pi=\min_{q\in\Pi(\alpha,\beta)}\sum_{a,b}q_{ab}\OT_c(\mu_a,\nu_b).
\end{equation}
Each inner cost is analytic under Theorem 1; only the outer chain-mass allocation remains numerical. We present this as a restricted decomposition, not as a general finite-width solution.

\section{Order Recognition and Computational Consequences}
\subsection{Matrix Recognition Versus Geometric Implication}
There are two conceptually different ways to exploit Monge structure. The classical route forms $C$ and asks whether it is Monge, or whether row and column permutations can make it Monge \cite{r23,r24,r25}. This route is instance-specific and can adapt to structures that are not induced by a cone. The geometric route studied here fixes an order model first and asks whether the ground cost makes every compatible chain pair Monge. Neither dominates the other.

The distinction becomes important when the same geometry is reused. Suppose $K=A\R_+^r$ and $M$ are fixed across $Q$ transport queries. The compatibility matrix $A^\top M A$ is checked once. For query $q$ with chain lengths $m_q,n_q$, the northwest-corner plan touches at most $m_q+n_q-1$ source--target pairs. In contrast, forming the complete cross-cost matrix requires $m_qn_q$ pairwise costs for each query. The benefit is therefore not that northwest-corner transport is new, but that a single ground-space condition can justify sparse cost evaluation repeatedly.

\begin{proposition}[Amortized pair-evaluation complexity]
Assume the chain orders are given, $K=A\R_+^r$ is $M$-acute, and one pairwise Mahalanobis cost evaluation has complexity $\tau_c$. Across $Q$ chain-pair queries, exact transport requires at most
\begin{equation}
\sum_{q=1}^{Q}(m_q+n_q-1)
\end{equation}
pairwise cost evaluations after the one-time compatibility check. Constructing all cross-cost matrices requires $\sum_qm_qn_q$ evaluations. Order discovery and representation learning are separate costs and are not included in this comparison.
\end{proposition}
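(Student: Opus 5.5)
The plan has three steps: establish exactness via earlier results, bound the number of evaluated pairs, and count the dense baseline. For each query $q$, the chains are $K$-chains with $K=A\R_+^r$ and $K$ is $M$-acute. By Proposition 3, acuteness is certified once by checking that $A^\top MA$ is entrywise nonnegative. This check does not depend on $q$, so its cost is incurred once and is excluded from the per-query count, as the statement stipulates. Theorem 1 then makes every cross-cost matrix $C^{(q)}$ Monge. Corollary 1 gives $\OT_{C^{(q)}}(a^{(q)},b^{(q)})=\sum_{i,j}\omega_{ij}C^{(q)}_{ij}$ with the northwest-corner weights $\omega_{ij}$ of (23). Hence the exact transport value only needs the entries $C^{(q)}_{ij}$ with $\omega_{ij}>0$.

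The combinatorial step is to show $|\{(i,j):\omega_{ij}>0\}|\le m_q+n_q-1$. I would argue via the staircase structure of the northwest-corner rule. Each cell with $\omega_{ij}>0$ corresponds to a nonempty open interval $(\max(A_{i-1},B_{j-1}),\min(A_i,B_j))\subset(0,1)$. These intervals are the pieces of the common refinement of the two partitions of $[0,1]$ given by the breakpoints $\{A_i\}$ and $\{B_j\}$. The interior breakpoints number at most $(m_q-1)+(n_q-1)$, since coincidences only reduce the count. So the refinement has at most $m_q+n_q-1$ pieces, and distinct pieces correspond to distinct cells. Equivalently, the greedy sweep advances $i$ or $j$ (or both) at each step, starting at $(1,1)$ and ending at $(m_q,n_q)$, so it visits at most $m_q+n_q-1$ cells. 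Computing $\omega$ itself requires only the prefix sums $A_i,B_j$, which involve no cost evaluations.

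Each visited pair costs one Mahalanobis evaluation of complexity $\tau_c$. Summing over $q$ gives at most $\sum_q(m_q+n_q-1)$ evaluations. The comparison count $\sum_q m_qn_q$ is immediate, since a full cross-cost matrix has $m_qn_q$ entries, each needing one evaluation.

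The only point requiring care is the support-size bound with ties (simultaneous breakpoints $A_i=B_j$). The partition-refinement argument handles ties automatically, because a tie merges breakpoints and lowers the count. I would present that argument rather than the sweep, to avoid case analysis.
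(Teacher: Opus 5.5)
Your proposal is correct and matches the paper's proof: exactness follows from the Monge structure (Theorem 1, Corollary 1), the northwest-corner support has at most $m_q+n_q-1$ cells, and the dense baseline counts the $m_qn_q$ entries of each cross-cost matrix. The only difference is how you count the support: the paper uses the one-line sweep argument that each positive entry exhausts a row or a column, whereas your common-refinement count of the breakpoints $\{A_i\}\cup\{B_j\}$ gives $m_q+n_q-1$ directly and handles ties without the extra observation that the final entry exhausts both a row and a column.
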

This proposition is intentionally narrow. It isolates the computational advantage that follows from structure alone while leaving order discovery visible rather than hiding it in preprocessing.

The comparison also clarifies when the geometric route should not be used. If there is only a single transport query and no credible order is available, direct matrix construction followed by a general OT solver or a matrix-level seriation method can be preferable. If the same representation, metric, and order model are reused across many pairs of distributions, a one-time geometric implication can avoid repeated full-matrix verification. The relevant quantity is therefore not merely the asymptotic cost of northwest-corner transport, which is classical, but the total cost of discovering, verifying, and reusing structure.

For $Q$ repeated queries with common feature dimension and comparable chain lengths $n$, full cost construction scales as $O(Qn^2\tau_c)$ pair evaluations, whereas ordered sparse evaluation scales as $O(Qn\tau_c)$ after the order and geometry are fixed. This comparison excludes the cost of learning the representation and order; those terms must be reported separately. In the final experiments we therefore decompose wall-clock time into representation preprocessing, order fitting, sorting, cost evaluation, matrix-level Monge testing when applicable, transport, and total runtime.

\subsection{How an Order May Be Obtained}
The main theorems do not require that the order be learned by a particular algorithm. In some applications it is supplied by the problem, as in time, severity, or a known morphometric factor. In others it can be estimated from training data by a scalar score, a seriation method, or a learned cone. The distinction matters experimentally because a favorable order should not be selected on the test set.

Our primary experiments therefore use three regimes. Synthetic tests use a known cone and known order. Morpho-MNIST uses a released morphometric variable as an externally observed order and, separately, compares simple train-derived ordering rules such as paired-difference directions, principal components, and ordinal linear scores. A longitudinal experiment, if it passes a prespecified validation gate, estimates its direction only from time-ordered training trajectories. This protocol tests the transport theory without making a separate contribution claim about order learning.

We therefore decouple verification of ordered transport from discovery of the order: the order is treated either as an external inductive bias (time, severity, known morphology) or as a train-set-derived hypothesis validated by $U$ on held-out data. The regret theory is conditional on any fixed candidate order and remains valid even when global seriation is unknown.

A learned cone is nevertheless possible. For $K_A=A\R_+^r$, one can minimize training distance-to-cone while enforcing entrywise nonnegativity of $A^\top M A$, generator normalization, pointedness, and nonvanishing empirical coverage. We treat this as an auxiliary extension and give the precise objective and a standard structural-risk bound in Appendix~\ref{app:learnedcone}. It is not required for any main theorem or primary experiment.

\subsection{A Regret-Aligned Criterion for Candidate Orders}
The exact regret representation also gives a principled way to compare candidate orderings without turning this paper into a seriation paper. For permutations $\sigma\in S_m$ and $\tau\in S_n$, let $C^{\sigma,\tau}$ denote the cross-cost matrix after reordering source and target supports. Define
\begin{equation}
U(\sigma,\tau)=\sum_{i<m,j<n}w_{ij}\pos{-\gamma_{ij}(C^{\sigma,\tau})}.
\end{equation}
By Theorem 4, $U(\sigma,\tau)$ upper-bounds the regret of using the northwest-corner plan under that candidate order. It is therefore a regret-aligned validation score rather than a generic similarity-based seriation loss. We do not claim a new polynomial-time global optimizer of (56); exact minimization over permutations is a separate combinatorial problem. In experiments, $U$ is used only to compare or locally refine a small set of train-derived candidate orders, with all selection frozen before held-out testing.

This distinction separates our use of ordering from recent seriation theory. Seriation recovers a latent permutation from noisy symmetric Robinson-type similarities \cite{r26,r27}. Equation (56), by contrast, is built from a rectangular source--target transport cost and evaluated by the transport regret that a candidate order can induce. A good candidate order for one objective need not optimize the other.

\subsection{Auxiliary Ordered Transport Objectives}
The exact results concern the original Kantorovich objective. Other tasks may require a metric on an ordered class or a directed feasibility relation. For completeness, Appendix~\ref{app:auxobj} records three auxiliary objects: a canonical chain metric built from a fixed order coordinate, a directed cone OT cost with $y-x\in K$, and a soft distance-to-cone penalized objective. These objects serve different purposes and should not be conflated with the exact Monge result. In particular, the soft and directed formulations are generally numerical and are not advertised as closed-form transport.

\section{Static Transport Structure and Generative Dynamics}
The present theory concerns static optimal transport costs and couplings. This is deliberately narrower than dynamic OT, flow matching, rectified flow, score-based diffusion, and Schr\"odinger bridges \cite{r14,r15,r16,r17,r18}. Flow-matching and rectified-flow methods learn Eulerian velocity fields from prescribed conditional paths or sample pairings. Our results can supply an exact or certified endpoint coupling on ordered supports, but that coupling alone does not identify a globally defined velocity field, score, or Markovian sampler. In particular, a discrete northwest-corner plan may split mass, and its displacement interpolation need not correspond to a single closed-form ODE field without additional injectivity, no-crossing, or regularity assumptions. Our contribution to generative modeling is therefore at the coupling layer: an ordered endpoint coupling can serve as supervision or as a structural screening rule inside a larger pipeline, while a large defect certificate signals that a general numerical coupling is still needed. We provide the pairing, not the flow.

\subsection{Displacement Interpolation on Exact Chains}
The distinction between a static coupling and a dynamic generative model does not prevent the exact chain result from inducing a closed-form Wasserstein interpolation. When $M\succ0$, let $W_{2,M}$ denote the quadratic Wasserstein distance associated with $c_M$. If $\pi_{\NW}$ is the optimal coupling from Corollary 1, define
\begin{equation}
\mu_t=((1-t)x+ty)_\#\pi_{\NW},\qquad t\in[0,1].
\end{equation}
For empirical chains this measure is explicit:
\begin{equation}
\mu_t=\sum_{i,j}\omega_{ij}\delta_{(1-t)x_i+ty_j},
\end{equation}
with at most $m+n-1$ active segments.

\begin{proposition}[Exact-chain displacement interpolation]
Under the assumptions of Corollary 1 and $M\succ0$, the path in (57) is a constant-speed $W_{2,M}$ geodesic: for all $0\le s<t\le1$,
\begin{equation}
W_{2,M}(\mu_s,\mu_t)=(t-s)W_{2,M}(\mu_0,\mu_1).
\end{equation}
If the optimal coupling is induced by a map, the interpolation is deterministic along that map; otherwise it remains plan-valued.
\end{proposition}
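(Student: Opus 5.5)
The plan is the standard displacement-interpolation argument, adapted to the Mahalanobis cost through the coordinate change of Proposition 5. Write $M=L^\top L$ with $L$ invertible, so $c_M(x,y)=\|Lx-Ly\|_2^2$ and $W_{2,M}(\mu,\nu)=W_2(L_\#\mu,L_\#\nu)$. Here $W_2$ is the Euclidean quadratic Wasserstein distance. This identity holds because $L$ is a bijection that induces a bijection of couplings, and it preserves costs. It therefore suffices to prove the geodesic property for the Euclidean metric in the coordinates $Lx$. Note also that $L((1-t)x+ty)=(1-t)Lx+tLy$, so interpolation commutes with $L$. Finally, by Corollary 1, $\pi_{\NW}$ is optimal, and hence $W_{2,M}(\mu_0,\mu_1)^2=\langle C,\pi_{\NW}\rangle=\sum_{i,j}\omega_{ij}\|x_i-y_j\|_M^2$.

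\emph{Upper bound.} For $0\le s<t\le1$, I will push $\pi_{\NW}$ forward under the map
\[
(x,y)\mapsto\bigl((1-s)x+sy,\ (1-t)x+ty\bigr).
\]
This gives an admissible coupling of $\mu_s$ and $\mu_t$. Its cost is
\[
\sum_{i,j}\omega_{ij}\,\|(t-s)(y_j-x_i)\|_M^2=(t-s)^2W_{2,M}(\mu_0,\mu_1)^2 .
\]
Hence $W_{2,M}(\mu_s,\mu_t)\le(t-s)W_{2,M}(\mu_0,\mu_1)$. The same construction with the pairs $(0,s)$ and $(t,1)$ gives $W_{2,M}(\mu_0,\mu_s)\le sW$ and $W_{2,M}(\mu_t,\mu_1)\le(1-t)W$, where $W=W_{2,M}(\mu_0,\mu_1)$.

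\emph{Lower bound.} I will use the triangle inequality for $W_{2,M}$, which is a genuine metric because $M\succ0$ makes $\|\cdot\|_M$ a norm:
\[
W\le W_{2,M}(\mu_0,\mu_s)+W_{2,M}(\mu_s,\mu_t)+W_{2,M}(\mu_t,\mu_1)\le sW+W_{2,M}(\mu_s,\mu_t)+(1-t)W.
\]
This yields $W_{2,M}(\mu_s,\mu_t)\ge(t-s)W$. Combining the two bounds gives the claimed equality.

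\emph{Map-induced case.} Suppose $\pi_{\NW}=(\mathrm{id},T)_\#\mu_0$. Then $\mu_t=((1-t)\mathrm{id}+tT)_\#\mu_0$, so every point moves deterministically along its own segment. Otherwise $\omega$ splits the mass at some $x_i$ across several $y_j$, and $\mu_t$ is described only through the plan.

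\emph{Main obstacle.} The only step requiring care is justifying that $W_{2,M}$ satisfies the triangle inequality. Through $L$ this reduces to the Euclidean case, where the standard gluing lemma applies; for finitely supported measures gluing is elementary, since one composes conditional kernels. The whole argument depends on the optimality of $\pi_{\NW}$ from Corollary 1 and uses nothing else about cones.
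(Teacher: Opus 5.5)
Your proposal is correct and follows essentially the paper's route: reduce to the Euclidean case via $M=L^\top L$, using $W_{2,M}(\alpha,\beta)=W_2(L_\#\alpha,L_\#\beta)$ and the fact that $L$ commutes with interpolation, and then apply the geodesic property of displacement interpolation along the optimal northwest-corner coupling. The only difference is that you prove that standard property explicitly (pushforward upper bound plus triangle inequality) where the paper cites it, which makes your argument self-contained.
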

The proposition is a standard consequence of displacement interpolation along an optimal quadratic coupling, applied here to the analytically available chain coupling. It gives an explicit path of endpoint marginals, but it should not be confused with a closed-form diffusion, flow-matching, or rectified-flow dynamics: those models additionally require a Markovian velocity or score field defined beyond the finite coupling support.

The same caution applies to high dimensionality. A cone chain has a one-dimensional order parameter, even though its points and its Mahalanobis displacements remain in $\R^d$. The contribution is therefore not a universal high-dimensional closed form. It is a characterization of when high-dimensional geometry preserves enough order for the original cost to inherit the Monge property, together with quantitative bounds when that property is only approximate.

\section{Boundary Regimes and Extensions}
\subsection{Comparability in Unstructured High Dimension}
Cone order is useful only when the data carry a compatible directional structure. This limitation can be quantified independently of any transport algorithm. Let $U$ be uniformly distributed on the unit sphere and define the normalized solid angle of a cone by
\begin{equation}
\Omega(K)=\Pr(U\in K)=\frac{\area(K\cap S^{d-1})}{\area(S^{d-1})}.
\end{equation}
\begin{proposition}[Solid-angle limitation]
Suppose the direction of $Y-X$ is uniform on $S^{d-1}$ and independent of its norm. Then
\begin{equation}
\Pr(X\preceq_K Y)=\Omega(K).
\end{equation}
If $K$ is pointed, the two-way comparability probability is $2\Omega(K)$. In particular, for $K=\R_+^d$,
\begin{equation}
\Pr(X\preceq_K Y\ \text{or}\ Y\preceq_K X)=2^{1-d}.
\end{equation}
\end{proposition}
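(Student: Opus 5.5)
The plan is to reduce the probability to the normalized solid angle using the cone property of $K$, then handle the two-way event and the orthant case separately. First, note that $X\preceq_K Y$ means $Y-X\in K$. Since $K$ is a cone, $Y-X\in K$ holds if and only if its direction $(Y-X)/\|Y-X\|$ lies in $K\cap S^{d-1}$. The hypothesis that the direction is uniform on $S^{d-1}$ implicitly requires $Y\neq X$ almost surely, so the event $Y-X=0$ has probability zero and can be ignored. With this reduction, $\Pr(X\preceq_K Y)=\Pr(U\in K)=\Omega(K)$ by the definition of $\Omega$. Independence of the direction from the norm is not needed for this step; it only guarantees that the direction is well defined and that its law is the uniform one.

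For the two-way statement, the event $Y\preceq_K X$ is $-(Y-X)\in K$, that is, $U\in -K$. Uniform measure on the sphere is invariant under $u\mapsto -u$, so this event also has probability $\Omega(K)$. Pointedness, $K\cap(-K)=\{0\}$, gives $(K\cap S^{d-1})\cap(-K\cap S^{d-1})=\emptyset$. The two events are therefore disjoint, and inclusion--exclusion yields the sum $2\Omega(K)$.

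For $K=\R_+^d$, it suffices to compute $\Omega(\R_+^d)=2^{-d}$. The $2^d$ closed orthants $\epsilon\R_+^d$, with $\epsilon\in\{\pm1\}^d$, cover the sphere. Each is the image of $\R_+^d$ under a coordinate reflection, and such reflections preserve the uniform measure, so all orthants have the same measure. Their pairwise intersections lie in coordinate hyperplanes, which have surface measure zero. Summing gives total measure $1$, so each orthant has measure $2^{-d}$, and the two-way probability is $2\cdot2^{-d}=2^{1-d}$.

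The only step that needs any care is the measure-zero bookkeeping: the point $Y=X$, which is excluded by the uniform-direction hypothesis, and the boundary overlaps of the orthants. Both are handled by the null-set arguments above, and otherwise the argument is routine.
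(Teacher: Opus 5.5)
Your proposal is correct and follows essentially the same route as the paper: the cone property reduces $Y-X\in K$ to $U\in K$, pointedness makes the events $U\in K$ and $U\in -K$ disjoint, and symmetry of the $2^d$ orthants gives $\Omega(\R_+^d)=2^{-d}$. You also spell out points the paper leaves implicit: antipodal invariance giving $\Pr(U\in -K)=\Omega(K)$, the exact rather than almost-sure disjointness of $\pm K$ on the sphere, and the null-set bookkeeping. You are right that independence of direction and norm is not actually needed.
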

The orthant example shows why a useful order should not be expected in an isotropic high-dimensional cloud. A wider cone increases comparability but may weaken semantic specificity; an excessively narrow cone preserves a strong direction but covers little data. The appropriate regime is therefore a property of the representation and the task, not of OT alone. This is why the empirical study reports both structural coverage and Monge defect rather than presenting order as a universal preprocessing step.

The proposition also explains the role of learned representations. A raw pixel or clinical space may exhibit weak comparability even when a lower-dimensional or task-aligned representation admits a coherent progression direction. Such a representation is useful only if the order is learned on training data and remains stable on held-out observations; otherwise the ordering step is merely retrospective.

\subsection{$p$-Wasserstein Costs Beyond the Quadratic Case}
The sharp cone-level equivalence in Theorem 1 is specific to the quadratic Mahalanobis structure, but the order argument extends to other costs. One important class is separable.
\begin{proposition}[Separable convex costs]
Let
\begin{equation}
c(x,y)=\sum_{r=1}^{d}h_r(x_r-y_r),
\end{equation}
where every $h_r:\R\to\R$ is convex. Then the orthant order $K=\R_+^d$ is Monge-compatible with $c$. In particular, the result applies to $\ell_p^p$ costs $c(x,y)=\sum_r|x_r-y_r|^p$ for every $p\ge1$.
\end{proposition}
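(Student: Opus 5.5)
We need to show that for $x\preceq_K x'$ and $y\preceq_K y'$ with $K=\R_+^d$,
\[
c(x,y')+c(x',y)-c(x,y)-c(x',y')\ge0,
\]
and that this holds for every pair of orthant chains. Here "Monge-compatible" means that every pair of $K$-chains induces a Monge cross-cost matrix, exactly as in Theorem 1(ii) and Proposition 4. The plan is to exploit separability, which reduces everything to a one-dimensional statement checked coordinate by coordinate.

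\textbf{Reduction to one dimension.} The cross-difference of $c$ is the sum over $r$ of the cross-differences of $h_r(s-t)$, evaluated at $s=x_r\le s'=x'_r$ and $t=y_r\le t'=y'_r$. The inequalities $s\le s'$ and $t\le t'$ hold because $x'-x$ and $y'-y$ lie in $\R_+^d$. So it suffices to prove that for convex $h:\R\to\R$ and $s\le s'$, $t\le t'$,
\[
h(s-t')+h(s'-t)\ge h(s-t)+h(s'-t').
\]

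\textbf{The one-dimensional step.} Set $p=s-t'$, $q=s'-t$, $u=s-t$, $v=s'-t'$. Then $p+q=u+v$, and both $u$ and $v$ lie in $[p,q]$, since $p\le u\le q$ and $p\le v\le q$. Convexity gives the standard majorization inequality: for points in an interval with equal sum, the endpoints carry the larger total. Concretely, write $u=\lambda p+(1-\lambda)q$. Then $v=(1-\lambda)p+\lambda q$. Adding the two Jensen inequalities
\[
h(u)\le\lambda h(p)+(1-\lambda)h(q),\qquad h(v)\le(1-\lambda)h(p)+\lambda h(q)
\]
yields $h(u)+h(v)\le h(p)+h(q)$. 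The degenerate case $p=q$ forces $u=v=p$ and is trivial. No differentiability of $h$ is needed, which is why this route is preferable to invoking Proposition 4: that proposition would require $h_r\in C^2$, and then $u^\top\nabla^2h\,v=\sum_r h_r''(z_r)u_rv_r\ge0$ on the orthant. I would mention this smooth special case only as a remark.

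\textbf{Assembling the conclusion.} Summing the one-dimensional inequality over $r$ gives the quadrangle inequality for every pair of comparable source points and comparable target points. Any pair $i<k$, $j<\ell$ of indices in two orthant chains is such a pair, so every cross-cost matrix is Monge. Northwest-corner optimality then follows from the classical theorem, as in Theorem 1.

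\textbf{The $\ell_p^p$ case and the main obstacle.} For $p\ge1$ the function $t\mapsto|t|^p$ is convex, so $\ell_p^p$ costs are covered. The only genuine step is the one-dimensional majorization argument. The main point needing care is to verify that $u$ and $v$ indeed lie between $p$ and $q$ and that the two convex weights are complementary, which is what makes $p+q=u+v$ do the work.
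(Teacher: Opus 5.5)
Your proof is correct and follows the paper's argument: separability reduces the quadrangle inequality to the scalar inequality $h_r(a-c)+h_r(b-d)\le h_r(a-d)+h_r(b-c)$ for $a\le b$, $c\le d$, which is then summed over coordinates. The paper only asserts that this scalar inequality follows from convexity, while your Jensen argument with complementary weights (using $p+q=u+v$ and $u,v\in[p,q]$) supplies that step explicitly.
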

Each coordinate contributes a one-dimensional convex Monge cost, and sums preserve the quadrangle inequality. Thus the same cumulative-overlap plan is exact on orthant chains for separable $p$-costs. By contrast, the nonseparable cost $\|x-y\|_2^p$ with $p\ne2$ does not reduce to the bilinear cross-difference in (17); Proposition 4 supplies a sufficient cone-Hessian condition, but we do not claim a sharp characterization in that case. This is why the paper centers the theory on $p=2$ rather than overstating a general $p$ result.

\begin{table}[t]
\centering
\caption{What different forms of ordering information guarantee.}
\label{tab:orderinginfo}
\small
\begin{tabular}{p{0.27\columnwidth}p{0.65\columnwidth}}
\toprule
Information & Consequence\\
\midrule
Scalar score only & Produces a ranking; does not imply Monge cost in $d\ge2$.\\
Observed adjacent increments & Monge iff all source--target adjacent inner products have the correct sign.\\
$M$-acute cone & Guarantees Monge structure uniformly for every pair of $K$-chains.\\
Full cost matrix & Allows direct Monge/permuted-Monge recognition, but only after matrix construction.\\
\bottomrule
\end{tabular}
\end{table}

\subsection{Why an Arbitrary Scalar Order Is Not Enough}
The existence of a scalar ranking does not by itself make a high-dimensional cost matrix Monge. This point is important because any finite data set can be sorted by a score; the nontrivial requirement is that the resulting source and target increments interact with the ground cost with the correct sign.

\begin{proposition}[Scalar ordering does not imply Monge structure]
Let $d\ge2$ and let $s(x)=v^\top x$ be any nonzero linear score. There exist two source points $x_1,x_2$ and two target points $y_1,y_2$ such that
\begin{equation}
s(x_1)<s(x_2),\qquad s(y_1)<s(y_2),
\end{equation}
but the squared Euclidean cross-cost matrix violates the Monge inequality.
\end{proposition}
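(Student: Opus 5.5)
The plan is to reduce the question to the cross-difference identity (17) and then construct explicit increments. For a $2\times2$ instance with $M=I$, the Monge inequality $C_{11}+C_{22}\le C_{12}+C_{21}$ is equivalent to $2(x_2-x_1)^\top(y_2-y_1)\ge0$. It therefore suffices to find two increments $u=x_2-x_1$ and $w=y_2-y_1$ with
\[
v^\top u>0,\qquad v^\top w>0,\qquad u^\top w<0.
\]
Once such increments are found, we set $x_1=y_1=0$, $x_2=u$ and $y_2=w$. The score conditions $s(x_1)<s(x_2)$ and $s(y_1)<s(y_2)$ then hold, while $\gamma_{11}(C)=2u^\top w<0$ violates the Monge inequality.

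To construct $u$ and $w$, use $d\ge2$ to pick a unit vector $e$ orthogonal to $v$. For a parameter $t>0$, set
\[
u=v+te,\qquad w=v-te.
\]
Since $v^\top e=0$, both $v^\top u$ and $v^\top w$ equal $\|v\|^2>0$. Expanding the inner product gives
\[
u^\top w=\|v\|^2-t^2,
\]
which is negative as soon as $t>\|v\|$. Taking, for example, $t=2\|v\|$ gives $u^\top w=-3\|v\|^2<0$.

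There is no serious obstacle in this argument. The only place where $d\ge2$ is needed is the existence of the orthogonal direction $e$. In $d=1$ no such direction exists, and a score-ordered pair of increments automatically has a positive product, which is why the one-dimensional case behaves differently.

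It is also worth remarking why the construction works: the two increments lie in the open half-space $\{z: v^\top z>0\}$, which is not an acute cone. This matches Theorem 1, since $M$-acuteness of the cone of admissible increments is exactly the missing condition.
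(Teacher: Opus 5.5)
Your proposal is correct and is essentially the paper's argument. The paper rotates so that $v=e_1$ and takes $u=(1,a,0,\ldots,0)$, $w=(1,-a,0,\ldots,0)$ with $a>1$, which is exactly your $u=v+te$, $w=v-te$ with $t>\|v\|$ written in coordinates, and then applies the same cross-difference identity (your coordinate-free form simply avoids rotating and normalizing $v$).
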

\emph{Construction.} After a rotation take $v=e_1$. Let $u=(1,a,0,\ldots,0)$ and $w=(1,-a,0,\ldots,0)$ with $a>1$, and set $x_2-x_1=u$, $y_2-y_1=w$. Both score increments are positive, yet $u^\top w=1-a^2<0$. By (17), the ordered $2\times2$ cost matrix is anti-Monge rather than Monge.

This counterexample clarifies the role of the cone. A scalar score supplies an order coordinate; it does not control the high-dimensional directions that are discarded by that coordinate. An acute cone constrains those directions jointly, which is precisely what makes the original high-dimensional cost obey the quadrangle inequality. In experiments, PCA, ordinal linear scores, and random scores are therefore meaningful baselines rather than equivalent formulations of the theory.

\subsection{Choosing Between Structured and General OT}
The theory suggests a simple model-selection principle. If an externally meaningful order yields low adjacent Monge violation and the cumulative-defect bound is small, monotone transport has a quantitative justification. If the order is weak, unstable across splits, or produces a vacuous regret bound, the correct response is not to enlarge the cone until the experiment succeeds; it is to use a general solver or a different surrogate geometry. SW remains attractive when scalable symmetric discrepancy is the main objective, tree-based distances when a hierarchy is meaningful, and classical or entropic OT when no reliable ordered structure is present.

This boundary is part of the contribution. The goal is not to make every high-dimensional problem analytic, but to identify a verifiable regime in which exact or controlled monotone transport follows from geometry and to state clearly when that regime is absent.

\section{Experimental Validation Protocol}
The present preprint contains the complete theory and a frozen evaluation protocol; we do not insert unpublished pilot measurements as paper results. The journal experiments are organized around theorem-level hypotheses rather than generic benchmark accuracy. Exact discrete OT is the ground truth for structural validation. Sinkhorn, sliced Wasserstein, and tree-sliced methods solve regularized or surrogate problems and are therefore used only as computational references, not as accuracy competitors for the original unregularized OT objective. Detailed code, data links, parameter grids, negative controls, and Go/No-Go rules are provided in the accompanying execution manual.

\subsection{E1: Exact Geometry and Mahalanobis Coordinates}
Synthetic $K$-chains verify Theorem 1 and Corollary 1 under both $M=I$ and non-diagonal $M=L^\top L$, with the cone transformed accordingly. Structural validation compares northwest-corner transport only against exact discrete OT. A separate efficiency layer reports full-cost construction, matrix-level Monge testing, Sinkhorn/FlashSinkhorn where available, SW, and tree-sliced references. Repeated-query experiments vary the number of transport queries $Q$ to test the amortized pair-evaluation claim in Proposition 9.

\subsection{E2: Exact Regret Geometry and Sharpness}
E2 is the primary theorem-facing experiment. Prescribed adjacent margins generate four defect topologies: single-cut, dense anti-Monge, clustered mixed-sign, and checkerboard mixed-sign. The first two instantiate equality cases of Corollary 2; the latter two expose interaction gaps between the true deficit polytope and its Fr\'echet box. Additional experiments verify the worst-case equality in Theorem 5, the Euclidean realizability in Corollary 3, and the strict-margin transition in Corollary 4. We report exact regret, the cumulative bound, interaction gap, cumulative-bound/regret ratio, distance-to-Monge on small matrices, distance-to-cone, $B_{\mathrm{geom}}/R_{\mathrm{obs}}$ when $R_{\mathrm{obs}}>0$, and $B_{\mathrm{geom}}/\osc(C)$. A formally valid but very loose bound is reported as conservative rather than presented as empirical confirmation.

\subsection{E3: High-Rank Real-Image Chains}
The exact theorem is tested in the original 3072-dimensional CIFAR-10 pixel space using non-collinear monotone chains. Starting from a real image, successive steps brighten distinct nonoverlapping patches with sufficient headroom, producing nonnegative increments whose numerical rank is reported explicitly. Two independently constructed chains remain ordered by $\R_+^d$, so the original Euclidean cross-cost must be Monge while the path explores many independent directions in the ambient image space. This experiment is a controlled high-dimensional stress test, not evidence that arbitrary natural images are ordered.

\subsection{E4: Held-Out Visual Order on Morpho-MNIST}
The primary Morpho-MNIST experiment uses the official rendering/downsampling pipeline rather than a construction that enforces the theorem. In E4-N, raw 784-dimensional pixels and a fixed 64-dimensional spatial ink-mass representation are evaluated without assuming coordinatewise monotonicity. The empirical question is whether local defect quantities predict the observed northwest-corner regret and whether the meaningful progression order is better than random or one-sided reversed controls. We report the cumulative and geometric certificates, $B_{\mathrm{geom}}/R_{\mathrm{obs}}$, $B_{\mathrm{geom}}/\osc(C)$, and rank correlations between structural scores and observed regret. Pixel polarity is determined from the released data rather than hard-coded as $1-$image.

A second, higher-value population-level stage treats order as a train-derived hypothesis. Candidate orders include paired-difference directions, PCA with orientation fixed on training data, ordinal/ridge scores, spectral seriation, random order, and one-sided target reversal. They are compared on validation by the regret-aligned score $U$ in (56); only a frozen candidate is evaluated on the held-out test. The primary selection diagnostic is the held-out association $\rho_S(U,R_{\mathrm{obs}})$ together with the regret separation from random and reversed controls. This stage is capped by a prespecified Go/No-Go gate and is not required for the validity of the main theory.

For numerical sanity only, E4-G retains a secondary controlled visual construction in which high-resolution binary dilations are pooled by a fixed positive linear operator. It verifies that a genuinely visual nested foreground transformation obeys the exact theorem, but it is not used as the headline evidence that natural rendered images are ordered; in the final journal version it is reported as extended validation rather than as the principal Morpho-MNIST result.

\subsection{E5: Optional Longitudinal Stress Test}
GRAPE is retained only as a one-day validation gate because uncontrolled longitudinal data need not exhibit ordered Monge structure \cite{r48}. If the gate fails, the branch stops and is treated as a boundary of applicability. Directional OT is not used as an accuracy competitor for northwest-corner transport: its displacement constraint $y-x\in K$ represents a different notion of cone structure from the within-support order studied here.

\begin{table*}[t]
\centering
\caption{Frozen evaluation logic. Exact discrete OT is the sole accuracy ground truth for the original Kantorovich objective; scalable OT variants are computational references because they solve regularized or surrogate problems.}
\label{tab:evaluation}
\small
\resizebox{\textwidth}{!}{%
\begin{tabular}{llll}
\toprule
Block & Data & Main question & Required evidence\\
\midrule
E1 & Synthetic exact chains & Does ground-space compatibility induce Monge costs and exact sparse transport? & Monge margins, exact-discrete-OT plan/cost gap, non-diagonal $M$, repeated-query pair evaluations and total runtime\\
E2 & Controlled defects & How do local defect topology and marginal feasibility determine global regret? & exact regret, box bound, equality cases, interaction gap, worst-case sharpness, stability radius\\
E3 & CIFAR-10 controlled chains & Does exactness persist in genuinely high-rank 3072-D image geometry? & increment rank, Monge margins, exact-discrete-OT exactness, end-to-end cost\\
E4 & Morpho-MNIST & Do natural rendered orders exhibit defect-controlled regret, and can $U$ select low-regret held-out orders? & natural-rendering regret, $B_{\mathrm{geom}}/\osc(C)$, $\rho_S(U,R_{\mathrm{obs}})$, frozen controls; guaranteed pooling only as secondary validation\\
E5 & GRAPE (gated) & Does structure survive uncontrolled longitudinal variation? & validation separation from controls; stop immediately if gate fails\\
\bottomrule
\end{tabular}}
\end{table*}

\section{Discussion and Limitations}
The strongest exact result requires an ordered support. The ambient geometry can be genuinely high-dimensional, but a chain still has a one-dimensional order coordinate. The approximate theory addresses violations of the Monge inequalities rather than eliminating this modeling requirement: Theorem 5 shows that the cumulative-defect bound is worst-case optimal when only local defect budgets are known, while Corollary 4 identifies a finite perturbation radius inside which exactness survives. Corollary 3 further shows that the worst-case mixed-difference patterns are not artifacts of arbitrary matrices, although they need not lie near a common acute cone. For broad partial orders, chain covers reveal block-Monge structure, yet mass allocation across chains remains a global problem unless additional assumptions are imposed.

The cone condition is likewise a modeling assumption rather than a universal property of data. A known order is appropriate when the application supplies a monotone factor; a train-derived order must be evaluated on held-out data and compared with simple alternatives. The experiments therefore include random, one-sided reversed, PCA, ordinal-score, and seriation controls. The official Morpho-MNIST rendering and the held-out order-selection stage provide the primary empirical stress tests; the guaranteed monotone-pooling trajectory is retained only as secondary controlled validation. If the natural rendering yields sizable local violations but the defect certificates still track the resulting regret, that supports the approximate theory rather than falsifying the geometry. If the structural scores and controls perform similarly and the certificates are uninformative, the correct conclusion is that the chosen representation does not support the ordered regime.

The computational gain is conditional. When the order is given and a reusable cone condition has already been verified, only the sparse support of the northwest-corner plan needs pairwise cost evaluation. When the order must be discovered from scratch, the cost of ordering, seriation, representation learning, or chain decomposition must be included. We report these components separately rather than attributing all preprocessing savings to the transport theorem.

Finally, baseline roles must be kept distinct. Exact discrete OT is the accuracy reference for the original Kantorovich objective. Sinkhorn solves an entropically regularized problem, while SW and tree-sliced methods define projected or surrogate discrepancies; they are therefore computational alternatives rather than like-for-like accuracy competitors. The present results are useful when preserving the original ground cost and exploiting a meaningful ordered structure are both part of the modeling objective.

\section{Conclusion}
We identified a direct route from high-dimensional cone geometry to classical Monge transportation. For squared Mahalanobis costs, acuteness of a cone is equivalent to the Monge property of every cross-cost matrix induced by two compatible chains, after which exact transport follows from classical northwest-corner optimality. The main quantitative contribution begins when this exact structure fails: northwest-corner regret admits an exact variational description over a marginal-dependent cumulative-deficit polytope, while its Fr\'echet-box relaxation converts local Monge defects into a global transport guarantee. The relaxation has an explicit equality characterization and is worst-case sharp under prescribed local defect budgets, with sharpness realizable by genuine squared-Euclidean cross-costs. Mahalanobis distance-to-cone deviations then translate geometric perturbations into local defects, and a strict angular margin yields a finite radius of exact stability. The resulting framework therefore explains both sides of ordered transport: when geometry makes the original high-dimensional problem exactly Monge, and how local departures from that geometry accumulate into global regret.

\balance

\appendices
\setcounter{theorem}{0}
\setcounter{proposition}{0}
\setcounter{corollary}{0}
\setcounter{lemma}{0}
\setcounter{definition}{0}
\setcounter{remark}{0}
\numberwithin{equation}{section}

\section{Cone--Monge Equivalence}\label{app:equivalence}
\begin{lemma}[Cross-difference identity]
Let $c_M(x,y)=(x-y)^\top M(x-y)$ with $M=M^\top$. For any $x_i,x_k,y_j,y_\ell$,
\begin{align}
&c_M(x_i,y_\ell)+c_M(x_k,y_j)-c_M(x_i,y_j)-c_M(x_k,y_\ell)\nonumber\\
&\qquad=2(x_k-x_i)^\top M(y_\ell-y_j).
\end{align}
\end{lemma}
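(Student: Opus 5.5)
The plan is a direct bilinear expansion, using only that $M=M^\top$ (positive semidefiniteness plays no role). For any $x,y$ I will write
\[
c_M(x,y)=x^\top Mx-2x^\top My+y^\top My,
\]
where symmetry of $M$ lets the two cross terms $x^\top My$ and $y^\top Mx$ be merged into $2x^\top My$. I then substitute this expansion into each of the four terms $c_M(x_i,y_\ell)$, $c_M(x_k,y_j)$, $c_M(x_i,y_j)$, $c_M(x_k,y_\ell)$ on the left-hand side.

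The key step is to check that all quadratic terms cancel. On the positive side, $x_i^\top Mx_i$ appears once (from $(i,\ell)$) and $x_k^\top Mx_k$ appears once (from $(k,j)$). On the negative side they again appear exactly once each, from $(i,j)$ and $(k,\ell)$ respectively. The same holds for $y_\ell^\top My_\ell$ and $y_j^\top My_j$. So only the bilinear cross terms survive, leaving
\[
-2x_i^\top My_\ell-2x_k^\top My_j+2x_i^\top My_j+2x_k^\top My_\ell .
\]

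Grouping these four terms gives
\[
2x_k^\top M(y_\ell-y_j)-2x_i^\top M(y_\ell-y_j)=2(x_k-x_i)^\top M(y_\ell-y_j),
\]
which is the claimed identity.

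There is no real obstacle here. The only point needing care is the sign bookkeeping of the four cross terms, so I will keep track of which cost enters with a plus and which with a minus. It is also worth noting that no ordering assumption on the points is used: the identity holds for arbitrary $x_i,x_k,y_j,y_\ell$, and the cone structure enters only later, in the proof of Theorem 1.
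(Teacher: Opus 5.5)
Your expansion is correct and is essentially the paper's argument: the paper also proves the identity by direct quadratic expansion and cancellation, differing only cosmetically in that it first substitutes $u=x_k-x_i$, $v=y_\ell-y_j$, $z=x_i-y_j$, so that the four costs become quadratic forms in $z-v$, $z+u$, $z$, $z+u-v$ and the surviving term appears directly as $2u^\top Mv$. Your remark that only the symmetry of $M$ is used, with no positive semidefiniteness and no ordering of the points, is also accurate.
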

\begin{proof}
Write $u=x_k-x_i$, $v=y_\ell-y_j$, and $z=x_i-y_j$. Then the left-hand side is
\[
(z-v)^\top M(z-v)+(z+u)^\top M(z+u)-z^\top Mz-(z+u-v)^\top M(z+u-v),
\]
and cancellation leaves $2u^\top Mv$.
\end{proof}

\begin{theorem}[Cone--Monge equivalence]
Let $K\subset\R^d$ be a closed convex cone and $M\succeq0$. The following are equivalent: (i) $u^\top Mv\ge0$ for all $u,v\in K$; (ii) every two finite $K$-chains induce a Monge cross-cost matrix under $c_M$.
\end{theorem}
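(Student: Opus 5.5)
The plan is to prove both directions from the cross-difference identity of the preceding lemma, which applies because $M\succeq0$ is in particular symmetric. The Monge condition in (ii) is phrased through quadrangle inequalities, and each such inequality is exactly the statement that the left-hand side of the lemma is nonnegative. Everything therefore reduces to the sign of $u^\top Mv$ with $u$ a source chain increment and $v$ a target chain increment.

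For (i)$\Rightarrow$(ii), fix finite $K$-chains $x_1\preceq_K\cdots\preceq_K x_m$ and $y_1\preceq_K\cdots\preceq_K y_n$, and indices $i<k$, $j<\ell$. By the definition of a $K$-chain, $u=x_k-x_i\in K$ and $v=y_\ell-y_j\in K$. The lemma gives
\[
C_{i\ell}+C_{kj}-C_{ij}-C_{k\ell}=2u^\top Mv ,
\]
and (i) makes the right-hand side nonnegative. This is precisely the quadrangle inequality, so $C$ is Monge. Transitivity of $\preceq_K$ is not needed, because the chain definition already covers all pairs $i<k$. Alternatively, one could check only the adjacent margins $\gamma_{ij}=2(\Delta x_i)^\top M(\Delta y_j)\ge0$ and use the remark after the definition of Monge matrices that larger cross-differences are sums of adjacent ones.

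For (ii)$\Rightarrow$(i), take arbitrary $u,v\in K$ and form the two-point sequences $x_1=0$, $x_2=u$ and $y_1=0$, $y_2=v$. These are $K$-chains because $x_2-x_1=u\in K$ and $y_2-y_1=v\in K$; for $m=n=2$ the only pair is $i=1<k=2$. The $2\times2$ cross-cost matrix is Monge by (ii), and its single quadrangle inequality reads $C_{11}+C_{22}\le C_{12}+C_{21}$. By the lemma this is $0\le 2u^\top Mv$, which is (i).

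No step is a serious obstacle; the argument is a direct translation once the identity is in hand. The only care needed is to match the orientation of the lemma to the Monge inequality, with the anti-diagonal terms $C_{i\ell}+C_{kj}$ on the larger side, and to confirm that degenerate chains (for instance $u=0$, or repeated points) are admissible. They are, since $0\in K$ for any closed convex cone. The final sentence of the statement, that northwest-corner transport is optimal, needs no new argument: it follows from (ii) and the classical Monge transportation theorem \cite{r21,r22}.
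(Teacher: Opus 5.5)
Your proposal is correct and follows the paper's proof essentially verbatim: (i)$\Rightarrow$(ii) because the chain increments $x_k-x_i$ and $y_\ell-y_j$ lie in $K$, which makes the cross-difference identity nonnegative, and (ii)$\Rightarrow$(i) via the two-point chains $(0,u)$ and $(0,v)$. Your extra remarks on orientation of the inequality and on degenerate chains are accurate but not needed beyond what the paper does.
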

\begin{proof}
Assume (i). If $i<k$ and $j<\ell$, then $x_k-x_i\in K$ and $y_\ell-y_j\in K$. The cross-difference identity is therefore nonnegative, which is exactly the Monge inequality. Conversely, fix arbitrary $u,v\in K$ and consider the two-point chains $(0,u)$ and $(0,v)$. The Monge inequality for their $2\times2$ cross-cost matrix gives $2u^\top Mv\ge0$.
\end{proof}

\begin{proposition}[Adjacent chain condition]
For fixed ordered sequences $x_1,\ldots,x_m$ and $y_1,\ldots,y_n$ with increments $\Delta x_i=x_{i+1}-x_i$ and $\Delta y_j=y_{j+1}-y_j$, the induced Mahalanobis cost matrix is Monge if and only if
\[
(\Delta x_i)^\top M(\Delta y_j)\ge0,\qquad i<m,\ j<n.
\]
\end{proposition}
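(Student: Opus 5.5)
The plan is to reduce the statement to the cross-difference identity (Lemma 1) applied to adjacent rectangles, and then to use the standard fact that Monge structure is equivalent to nonnegativity of all adjacent margins.

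\emph{Step 1: adjacent margins.} For each $i<m$, $j<n$, we apply Lemma 1 with $k=i+1$ and $\ell=j+1$. Comparing with the definition of $\gamma_{ij}(C)$, the left-hand side of the lemma is exactly
\[
C_{i,j+1}+C_{i+1,j}-C_{ij}-C_{i+1,j+1}=\gamma_{ij}(C).
\]
The right-hand side is $2(\Delta x_i)^\top M(\Delta y_j)$. Hence $\gamma_{ij}(C)=2(\Delta x_i)^\top M(\Delta y_j)$ for every adjacent rectangle, which already gives the final equivalence claimed in Proposition 2 of the main text.

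\emph{Step 2: from adjacent margins to all quadrangles.} Take $i<k$ and $j<\ell$. We write $x_k-x_i=\sum_{p=i}^{k-1}\Delta x_p$ and $y_\ell-y_j=\sum_{q=j}^{\ell-1}\Delta y_q$. Lemma 1 and bilinearity of $(u,v)\mapsto u^\top Mv$ then give
\[
C_{i\ell}+C_{kj}-C_{ij}-C_{k\ell}=2\sum_{p=i}^{k-1}\sum_{q=j}^{\ell-1}(\Delta x_p)^\top M(\Delta y_q)=\sum_{p,q}\gamma_{pq}(C).
\]
This is the telescoping fact the paper already cites from \cite{r23}, now verified directly in this setting.

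\emph{Step 3: both directions.} If every adjacent inner product is nonnegative, then every quadrangle cross-difference is a sum of nonnegative terms, so $C$ is Monge. Conversely, if $C$ is Monge, we specialize the quadrangle inequality to $k=i+1$ and $\ell=j+1$. This gives $\gamma_{ij}(C)\ge0$, hence $(\Delta x_i)^\top M(\Delta y_j)\ge0$.

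Note that no chain or cone hypothesis is used anywhere, which is why the statement applies to arbitrary fixed ordered sequences. Only symmetry of $M$, needed for Lemma 1, is required.

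There is no real obstacle here. The only care point is index bookkeeping in Step 2, namely that the double sum covers exactly the adjacent rectangles inside $[i,k]\times[j,\ell]$. Bilinearity handles this automatically, so no separate induction on rectangle size is needed.
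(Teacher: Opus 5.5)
Your proof is correct and follows essentially the same route as the paper, which also derives $\gamma_{ij}=2(\Delta x_i)^\top M(\Delta y_j)$ from the cross-difference identity and then uses the fact that a matrix is Monge iff all its adjacent margins are nonnegative. The only difference is that you prove that fact directly in this setting by bilinearity, whereas the paper cites it; this makes your argument self-contained.
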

\begin{proof}
The adjacent Monge margin is
\[
\gamma_{ij}=C_{i,j+1}+C_{i+1,j}-C_{ij}-C_{i+1,j+1}=2(\Delta x_i)^\top M(\Delta y_j).
\]
A matrix is Monge iff all adjacent margins are nonnegative.
\end{proof}

\begin{proposition}[Finite-generator characterization]
If $K=A\R_+^r$, then $K$ is $M$-acute if and only if $A^\top M A$ is entrywise nonnegative.
\end{proposition}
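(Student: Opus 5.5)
The plan is to prove both directions by reducing membership in $K=A\R_+^r$ to nonnegative combinations of generators, and then use bilinearity of $(u,v)\mapsto u^\top Mv$. Write $a_1,\ldots,a_r$ for the columns of $A$. Then $(A^\top MA)_{pq}=a_p^\top Ma_q$, so entrywise nonnegativity of $A^\top MA$ says exactly that $M$-acuteness holds on the generators. The statement therefore says that checking the generators is enough.

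\emph{Necessity.} Each generator $a_p=Ae_p$ lies in $K$, because $e_p\in\R_+^r$. If $K$ is $M$-acute, applying the defining inequality to $u=a_p$ and $v=a_q$ gives $(A^\top MA)_{pq}=a_p^\top Ma_q\ge0$ for every pair $p,q$.

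\emph{Sufficiency.} Take arbitrary $u,v\in K$ and write $u=A\lambda$ and $v=A\mu$ with $\lambda,\mu\in\R_+^r$. Then
\[
u^\top Mv=\lambda^\top(A^\top MA)\mu=\sum_{p,q}\lambda_p\mu_q\,(A^\top MA)_{pq}.
\]
Every term in this sum is a product of nonnegative numbers, so $u^\top Mv\ge0$. This is the definition of $M$-acuteness.

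Neither direction uses positive semidefiniteness of $M$, only its symmetry, nor any closedness subtlety, since $A\R_+^r$ is automatically a closed convex cone. If anything needs care, it is only the observation that the representations $u=A\lambda$ need not be unique. The argument does not depend on uniqueness: any choice of nonnegative $\lambda,\mu$ yields the same value of $u^\top Mv$. Combined with Theorem 1, the result then gives the finite test for the Monge property of all $K$-chain cross-cost matrices.
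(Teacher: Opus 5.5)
Your proof is correct and is essentially the paper's argument. Both write $u=A\lambda$, $v=A\mu$ with nonnegative coefficients and reduce $u^\top Mv$ to $\lambda^\top(A^\top MA)\mu$; your choice $u=Ae_p$, $v=Ae_q$ spells out the necessity direction that the paper leaves implicit in its ``iff every entry is nonnegative'' step, and the argument does not actually need symmetry of $M$ either, since $(A^\top MA)_{pq}=a_p^\top Ma_q$ and the bilinear expansion hold for any square $M$.
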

\begin{proof}
Every $u,v\in K$ can be written $u=A\alpha$, $v=A\beta$ for $\alpha,\beta\ge0$, hence $u^\top Mv=\alpha^\top(A^\top M A)\beta$. This is nonnegative for every nonnegative pair iff every entry of $A^\top M A$ is nonnegative.
\end{proof}

\section{High-Rank Chains and Coordinate Changes}
\begin{proposition}[High-rank cone chains]
For $K=\R_+^d$ and any $r\le\min\{d,m-1\}$ there is a $K$-chain $x_1,\ldots,x_m$ with increment rank at least $r$.
\end{proposition}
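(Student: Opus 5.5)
The construction is explicit. Let $e_1,\ldots,e_d$ be the standard basis of $\R^d$. Since $r\le\min\{d,m-1\}$, we can use the first $r$ of the $m-1$ increments to take the distinct coordinate directions. We set
\[
x_1=0,\qquad x_{i+1}=x_i+e_i\quad(1\le i\le r),\qquad x_{i+1}=x_i\quad(r<i\le m-1).
\]
Any nonnegative vectors would also do for the trailing increments. So $\Delta x_i=e_i$ for $i\le r$, and $\Delta x_i=0$ otherwise.

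\textbf{Chain property.} For $i<k$ the difference telescopes:
\[
x_k-x_i=\sum_{s=i}^{k-1}\Delta x_s.
\]
Every $\Delta x_s$ lies in $\R_+^d$, and $\R_+^d$ is a convex cone, so it is closed under nonnegative sums. Hence $x_k-x_i\in K$, that is, $x_i\preceq_K x_k$, and the sequence is a $K$-chain in the sense of Definition~1. The trivial case $m=1$, where $r=0$, is vacuous.

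\textbf{Rank bound.} The increment matrix $[\Delta x_1,\ldots,\Delta x_{m-1}]$ contains the columns $e_1,\ldots,e_r$. These are linearly independent, so its rank is at least $r$. With the zero trailing increments the rank is exactly $r$, and nonnegative trailing increments can only keep it at or above $r$.

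The only point that needs care is the index bookkeeping. We need $r\le d$ so that $r$ distinct basis vectors exist, and $r\le m-1$ so that there are enough increment slots. Both are guaranteed by the hypothesis. The target chain is handled by the identical construction. Combined with Theorem~1, applied to the acute cone $\R_+^d$ with $M=I$, this shows that exact Monge transport occurs on chains whose increments span an $r$-dimensional subspace.
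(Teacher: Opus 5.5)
Your proposal is correct and matches the paper's proof: set $x_1=0$, take $e_1,\ldots,e_r$ as the first $r$ increments, choose the remaining increments in $\R_+^d$, get the chain property from closure of $K$ under nonnegative sums, and get the rank bound from linear independence of $e_1,\ldots,e_r$. Your zero trailing increments make support points repeat, which Definition~1 allows since $0\in K$; if distinct points are wanted, any nonzero nonnegative trailing increments (e.g.\ $e_1$) give them without affecting the argument.
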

\begin{proof}
Take $x_1=0$ and choose the first $r$ increments to be the standard basis vectors $e_1,\ldots,e_r$. Choose every remaining increment in $\R_+^d$. All partial sums are ordered by $K$, while the first $r$ increments are linearly independent.
\end{proof}

\begin{proposition}[Euclidean representation of Mahalanobis acuteness]
If $M=L^\top L$ with $L$ invertible, then $K$ is $M$-acute iff $LK$ is Euclidean acute, and $c_M(x,y)=\|Lx-Ly\|_2^2$.
\end{proposition}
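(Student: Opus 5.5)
The argument is a direct computation built on the factorization $M=L^\top L$. There are two claims: the inner-product identity $u^\top Mv=(Lu)^\top(Lv)$ for all $u,v\in\R^d$, and the cost identity $c_M(x,y)=\|Lx-Ly\|_2^2$. Everything else follows by transporting the acuteness condition through the linear bijection $u\mapsto Lu$.

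\emph{Step 1 (the identities).} I will substitute $M=L^\top L$ to get $u^\top Mv=u^\top L^\top Lv=(Lu)^\top(Lv)$. Taking $u=v=x-y$ and using linearity, $L(x-y)=Lx-Ly$, I obtain $c_M(x,y)=\|L(x-y)\|_2^2=\|Lx-Ly\|_2^2$.

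\emph{Step 2 (equivalence of acuteness).} I will note that $LK=\{Lu:u\in K\}$ and that $L$ is invertible, so $u\mapsto Lu$ is a bijection from $K$ onto $LK$. Hence quantifying over pairs $(u,v)\in K\times K$ is the same as quantifying over pairs $(p,q)=(Lu,Lv)\in LK\times LK$.
\begin{itemize}
\item If $K$ is $M$-acute, then for any $p=Lu$ and $q=Lv$ in $LK$ we get $p^\top q=u^\top Mv\ge0$.
\item Conversely, if $LK$ is Euclidean acute, then for any $u,v\in K$ we get $u^\top Mv=(Lu)^\top(Lv)\ge0$.
\end{itemize}
Strictly, both directions use only surjectivity of $K\to LK$, which holds by definition of $LK$. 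Invertibility is needed only to check that $LK$ is again a closed convex (pointed) cone, so that the equivalence compares objects of the same type. For that I will observe that an invertible linear map preserves conic combinations, closedness (it is a homeomorphism) and pointedness ($LK\cap(-LK)=L(K\cap -K)=\{0\}$).

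\emph{Main obstacle.} There is essentially none. The only point needing care is this cone-preservation remark, which is what justifies calling $LK$ a cone to which the Euclidean acuteness notion, and Theorem~1 with $M=I$, applies. I will add one closing sentence: chains transform accordingly, since $x\preceq_K y$ if and only if $Lx\preceq_{LK}Ly$, because $y-x\in K$ if and only if $L(y-x)\in LK$ by injectivity. Together with the cost identity, this shows that the cross-cost matrices coincide, giving the claimed invariance of Theorem~1.
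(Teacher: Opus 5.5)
Your proposal is correct and follows essentially the same route as the paper: both rest on $u^\top Mv=(Lu)^\top(Lv)$ and $c_M(x,y)=\|L(x-y)\|_2^2$, with acuteness transported through the bijection $K\to LK$. Your extra remarks are accurate but not needed for the statement itself: invertibility makes $LK$ a closed pointed cone, and $x\preceq_K y \iff Lx\preceq_{LK}Ly$.
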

\begin{proof}
Both statements are immediate from $u^\top Mv=(Lu)^\top(Lv)$ and $(x-y)^\top M(x-y)=\|L(x-y)\|_2^2$.
\end{proof}

\section{Smooth Translation Costs}
\begin{proposition}
Let $c(x,y)=h(x-y)$ with $h\in C^2(\R^d)$ convex. If
\[
u^\top\nabla^2h(z)v\ge0,\qquad \forall z\in\R^d,\ u,v\in K,
\]
then every two $K$-chains induce a Monge cross-cost matrix.
\end{proposition}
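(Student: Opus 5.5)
By the reduction recorded after Definition~2, it suffices to check that every adjacent margin is nonnegative. In fact the argument below works for an arbitrary rectangle $i<k$, $j<\ell$ with no extra effort. Put $u=x_k-x_i\in K$, $v=y_\ell-y_j\in K$ and $z=x_i-y_j$. The four cost entries are then
\[
h(z),\quad h(z+u),\quad h(z-v),\quad h(z+u-v).
\]
The quantity to be shown nonnegative is therefore
\[
\Gamma=h(z-v)+h(z+u)-h(z)-h(z+u-v).
\]
This is the same rewriting used in the proof of the cross-difference identity (Lemma~1). There $\Gamma$ collapsed to the bilinear term $2u^\top Mv$. Here it has to be handled as a genuine mixed second difference.

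The plan is to write $\Gamma$ as a double integral of the mixed second derivative of $h$. Set
\[
g(s,t)=h(z+su-tv),\qquad s,t\in[0,1].
\]
Then
\[
\Gamma=g(0,1)+g(1,0)-g(0,0)-g(1,1)=-\bigl[g(1,1)-g(1,0)-g(0,1)+g(0,0)\bigr].
\]
Since $h\in C^2$, the fundamental theorem of calculus applied in each variable gives
\[
g(1,1)-g(1,0)-g(0,1)+g(0,0)=\int_0^1\!\!\int_0^1\partial_s\partial_t g(s,t)\,ds\,dt .
\]
By the chain rule, $\partial_s\partial_t g(s,t)=-u^\top\nabla^2h(z+su-tv)\,v$. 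Hence
\[
\Gamma=\int_0^1\!\!\int_0^1 u^\top\nabla^2h(z+su-tv)\,v\,ds\,dt .
\]
The hypothesis says this integrand is nonnegative at every point $z+su-tv\in\R^d$, because $u,v\in K$. So $\Gamma\ge0$, which is the Monge inequality for the rectangle $(i,k;j,\ell)$.

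Three points need care, and the second is where I expect the only real difficulty.

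\begin{enumerate}[leftmargin=*]
\item \emph{Signs.} The sign conventions have to be tracked carefully. Target points enter the cost with a minus sign, so the increment $v$ appears as $-tv$ inside $g$. This produces one minus sign in the mixed partial, and it cancels the minus sign in front of the bracket. A wrong sign here would turn the statement into an anti-Monge conclusion.
\item \emph{Justifying the integral formula.} Continuity of $\nabla^2 h$ on the compact parameter square is enough for the iterated fundamental theorem and for exchanging the order of differentiation. No extra regularity is needed.
\item \emph{Convexity is not used.} The argument never uses convexity of $h$. The hypothesis only matters on the directions $u,v$ drawn from $K$, and the cone condition alone controls those. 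I will note that convexity is superfluous for this implication.
\end{enumerate}

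Finally, I will check consistency with the quadratic case. For $h(z)=z^\top Mz$ we have $\nabla^2 h=2M$, and the integral reduces to $2u^\top Mv$. This recovers Lemma~1 and the sufficiency direction of Theorem~1.
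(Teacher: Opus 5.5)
Your proof is correct and follows the paper's argument: the paper uses the same function $F(s,t)=h(z+su-tv)$, observes $\partial^2_{st}F=-u^\top\nabla^2h(z+su-tv)\,v\le0$, and concludes $F(0,0)+F(1,1)\le F(0,1)+F(1,0)$ by submodularity on $[0,1]^2$, which your double-integral identity simply makes explicit. Your remark that convexity of $h$ is never used is also accurate, since the paper's proof does not invoke it either.
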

\begin{proof}
Let $u=x_k-x_i\in K$, $v=y_\ell-y_j\in K$, and $z=x_i-y_j$. Put $F(s,t)=h(z+su-tv)$. Then
\[
\partial_{st}^2F(s,t)=-u^\top\nabla^2h(z+su-tv)v\le0.
\]
Thus $F$ is submodular on $[0,1]^2$ and
\[
F(0,0)+F(1,1)\le F(0,1)+F(1,0),
\]
which is the Monge inequality.
\end{proof}

\section{Exact Cumulative-Overlap Transport}
Let $A_i=\sum_{r\le i}a_r$, $B_j=\sum_{s\le j}b_s$, $A_0=B_0=0$, and define
\[
\omega_{ij}=\pos{\min(A_i,B_j)-\max(A_{i-1},B_{j-1})}.
\]
The matrix $\omega$ is the northwest-corner plan: it is the overlap length of the intervals $(A_{i-1},A_i]$ and $(B_{j-1},B_j]$. Hence it has marginals $a,b$. For a Monge cost matrix, classical Monge transportation theory implies that this plan is optimal. The result in the main paper follows immediately after applying the cone--Monge equivalence.

\section{Distance-to-Monge Regret}
\begin{theorem}
If $\|C-\widetilde C\|_\infty\le\delta$ for a Monge matrix $\widetilde C$, then the northwest-corner plan for probability marginals satisfies
\[
0\le\langle C,\pi_{\NW}\rangle-\OT_C\le2\delta.
\]
\end{theorem}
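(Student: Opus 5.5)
The lower bound is immediate: $\pi_{\NW}$ has marginals $a,b$ and is therefore feasible for the minimization defining $\OT_C(a,b)$. Hence $\OT_C(a,b)\le\langle C,\pi_{\NW}\rangle$.

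For the upper bound we compare $C$ with the Monge matrix $\widetilde C$ through three inequalities.

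\emph{First step.} Any feasible plan $\pi\in\Pi(a,b)$ is entrywise nonnegative and has total mass $\sum_{ij}\pi_{ij}=1$. By H\"older's inequality with the $\ell_1$/$\ell_\infty$ pairing,
\[
\bigl|\langle C,\pi\rangle-\langle\widetilde C,\pi\rangle\bigr|\le\|C-\widetilde C\|_\infty\|\pi\|_1\le\delta .
\]

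\emph{Second step.} Since $\widetilde C$ is Monge, the classical Monge transportation theorem (used in Appendix D and in Corollary 1) gives that $\pi_{\NW}$ is optimal for $\widetilde C$. The northwest-corner plan depends only on $a,b$, so it is the same plan for $C$ and $\widetilde C$. Thus $\langle\widetilde C,\pi_{\NW}\rangle=\OT_{\widetilde C}(a,b)$.

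\emph{Third step.} Let $\pi^\star$ be optimal for $C$; it exists because $\Pi(a,b)$ is a nonempty compact polytope. Chaining the previous two steps,
\[
\langle C,\pi_{\NW}\rangle\le\langle\widetilde C,\pi_{\NW}\rangle+\delta\le\langle\widetilde C,\pi^\star\rangle+\delta\le\langle C,\pi^\star\rangle+2\delta=\OT_C(a,b)+2\delta .
\]

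For the consequence with $\delta_M(C)$, apply the bound to every $\widetilde C\in\Mcal_{m,n}$ with $\|C-\widetilde C\|_\infty\le\delta_M(C)+\epsilon$. Such matrices exist by the definition of the infimum. Letting $\epsilon\downarrow0$ gives $2\delta_M(C)$. Alternatively, $\Mcal_{m,n}$ is a closed polyhedral cone, so the infimum is attained and no limit is needed.

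The only step needing care is the second. We must invoke optimality of the northwest-corner plan for \emph{arbitrary} positive marginals under a Monge cost. This is the Hoffman/Bein et al. result cited in the paper, so we take it as given. We must also note that the plan itself does not depend on the cost, which is what allows it to be transferred between $C$ and $\widetilde C$. Everything else is routine bookkeeping.
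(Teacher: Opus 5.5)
Your proof is correct and follows the same route as the paper. It uses the unit-mass $\ell_1/\ell_\infty$ comparison between $C$ and $\widetilde C$, the optimality of the cost-independent plan $\pi_{\NW}$ for the Monge matrix $\widetilde C$, and the same three-step chain through an optimal $\pi^\star$ for $C$; your additional $\epsilon$-argument (or attainment of the infimum over the closed polyhedral cone $\Mcal_{m,n}$) correctly covers the $2\delta_M(C)$ consequence stated in the main text.
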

\begin{proof}
For every probability transport plan $\pi$,
\[
|\langle C-\widetilde C,\pi\rangle|\le\delta\sum_{ij}\pi_{ij}=\delta.
\]
Let $\pi^*$ be optimal for $C$. Since $\pi_{\NW}$ is optimal for $\widetilde C$,
\[
\langle C,\pi_{\NW}\rangle\le\langle\widetilde C,\pi_{\NW}\rangle+\delta\le\langle\widetilde C,\pi^*\rangle+\delta\le\langle C,\pi^*\rangle+2\delta.
\]
The lower bound is optimality of $\pi^*$.
\end{proof}

\section{Exact Regret Geometry and Defect Bounds}
For a coupling $\pi$ define
\[
H_\pi(i,j)=\sum_{r\le i,s\le j}\pi_{rs},\qquad i<m,\ j<n.
\]
\begin{lemma}[Discrete two-dimensional summation by parts]
If $P,Q$ have the same row and column sums, then
\[
\langle C,P-Q\rangle=-\sum_{i=1}^{m-1}\sum_{j=1}^{n-1}\gamma_{ij}(C)\bigl(H_P(i,j)-H_Q(i,j)\bigr).
\]
\end{lemma}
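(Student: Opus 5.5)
Write $E=P-Q$. By hypothesis every row sum and every column sum of $E$ vanishes. Let $H_E(i,j)=\sum_{r\le i,s\le j}E_{rs}$ for all $0\le i\le m$, $0\le j\le n$, with $H_E(0,\cdot)=H_E(\cdot,0)=0$. Then $H_E=H_P-H_Q$ on the interior indices $i<m$, $j<n$.

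The key observation concerns the last row and last column. $H_E(m,j)$ is the sum of the first $j$ column sums of $E$, so it is zero. Likewise $H_E(i,n)$ is the sum of the first $i$ row sums, so it is zero. Hence $H_E$ vanishes on the whole boundary $\{i=0\}\cup\{j=0\}\cup\{i=m\}\cup\{j=n\}$.

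The plan is to apply summation by parts once in each index. First write the entries as a mixed second difference:
\[
E_{ij}=H_E(i,j)-H_E(i-1,j)-H_E(i,j-1)+H_E(i-1,j-1).
\]
Substitute this into $\langle C,E\rangle=\sum_{i,j}C_{ij}E_{ij}$ and shift indices in each of the four resulting sums. Collecting the coefficient of $H_E(i,j)$ for $1\le i\le m$, $1\le j\le n$ gives
\[
\langle C,E\rangle=\sum_{i=1}^{m}\sum_{j=1}^{n}H_E(i,j)\bigl(C_{ij}-C_{i+1,j}-C_{i,j+1}+C_{i+1,j+1}\bigr),
\]
where $C_{m+1,\cdot}$ and $C_{\cdot,n+1}$ are set to $0$ by convention.

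Every term with $i=m$ or $j=n$ is multiplied by $H_E(i,j)=0$, so those terms drop out. This removes the artificial zero-padded entries of $C$ entirely. For the remaining terms, with $i<m$ and $j<n$, the bracket equals $-\gamma_{ij}(C)$. This yields the claimed identity with $H_E=H_P-H_Q$.

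For a cleaner presentation, one can do the summation by parts one dimension at a time. First sum by parts in $i$ using the vanishing column sums, then in $j$ using the vanishing row sums. This avoids the padding convention altogether.

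The only real point of care is the index bookkeeping, and in particular confirming that the boundary terms vanish. That vanishing is exactly where the equal-marginals hypothesis enters. The rest is routine telescoping.
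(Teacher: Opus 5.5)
Your proof is correct and follows the same route as the paper: write $E=P-Q$ as the mixed second difference of $H_E$, sum by parts in both indices, and use the vanishing row and column sums to make $H_E$ zero at $i=m$ and $j=n$, which kills the boundary terms. You make explicit the index bookkeeping and the boundary vanishing that the paper compresses into ``summing twice by parts.''
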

\begin{proof}
Let $E=P-Q$. Equal marginals imply zero row and column sums. Writing
\[
E_{ij}=H_E(i,j)-H_E(i-1,j)-H_E(i,j-1)+H_E(i-1,j-1)
\]
and summing $C_{ij}E_{ij}$ twice by parts leaves only the adjacent mixed differences of $C$, namely $-\gamma_{ij}(C)$, multiplied by $H_E(i,j)$.
\end{proof}

Let $A_i=\sum_{r\le i}a_r$, $B_j=\sum_{s\le j}b_s$, and define $D^\pi_{ij}=H_{\NW}(i,j)-H_\pi(i,j)$ and $\Dcal(a,b)=\{D^\pi:\pi\in\Pi(a,b)\}$.

\begin{theorem}[Exact regret representation]
For every cost matrix $C$ and probability marginals $a,b$,
\[
\langle C,\pi_{\NW}\rangle-\OT_C(a,b)=\max_{D\in\Dcal(a,b)}\left\{-\sum_{i,j}\gamma_{ij}(C)D_{ij}\right\}.
\]
\end{theorem}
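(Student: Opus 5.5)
The plan is to apply the summation-by-parts lemma just stated with $P=\pi_{\NW}$ and $Q=\pi$ for an arbitrary feasible coupling $\pi\in\Pi(a,b)$. Both plans have row sums $a$ and column sums $b$, so the lemma applies.

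The first step is to identify the cumulative function of the northwest-corner plan. Each entry $\omega_{ij}$ is the overlap length of the intervals $(A_{i-1},A_i]$ and $(B_{j-1},B_j]$. Summing over $r\le i$, $s\le j$ therefore gives the overlap of $(0,A_i]$ with $(0,B_j]$. Hence
\[
H_{\pi_{\NW}}(i,j)=\min(A_i,B_j)=H_{\NW}(i,j).
\]
Consequently $H_{\pi_{\NW}}(i,j)-H_\pi(i,j)=D^\pi_{ij}$ for every $i<m$, $j<n$, and the lemma yields the pointwise identity
\[
\langle C,\pi_{\NW}\rangle-\langle C,\pi\rangle=-\sum_{i=1}^{m-1}\sum_{j=1}^{n-1}\gamma_{ij}(C)D^\pi_{ij}.
\]

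The second step takes the supremum over $\pi\in\Pi(a,b)$. By definition $\OT_C(a,b)=\min_\pi\langle C,\pi\rangle$, so
\[
\langle C,\pi_{\NW}\rangle-\OT_C(a,b)=\max_{\pi}\bigl\{\langle C,\pi_{\NW}\rangle-\langle C,\pi\rangle\bigr\}.
\]
Applying the identity from the first step turns this into
\[
\max_{\pi\in\Pi(a,b)}\Bigl\{-\sum_{i,j}\gamma_{ij}(C)D^\pi_{ij}\Bigr\}=\max_{D\in\Dcal(a,b)}\Bigl\{-\sum_{i,j}\gamma_{ij}(C)D_{ij}\Bigr\}.
\]
The last equality holds because the objective depends on $\pi$ only through $D^\pi$, and $\Dcal(a,b)$ is exactly the image of $\Pi(a,b)$ under $\pi\mapsto D^\pi$.

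To justify writing $\max$ rather than $\sup$, note that $\pi\mapsto D^\pi$ is affine. $\Pi(a,b)$ is a compact polytope, so $\Dcal(a,b)$ is a compact polytope as well, and a linear functional attains its maximum on it. The support-function reformulation then follows from the definition $\sigma_{\Dcal}(g)=\max_{D\in\Dcal}\langle g,D\rangle$ with $g=-\gamma(C)$.

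The main obstacle is the summation-by-parts lemma itself, which I would verify carefully rather than take for granted.
\begin{itemize}
\item Write $E=P-Q$. Because the row and column sums of $E$ vanish, $H_E(m,j)=0$ and $H_E(i,n)=0$, and $H_E$ is zero whenever an index is zero. This is what lets the sum range only over $i<m$, $j<n$.
\item The first summation by parts in $i$ gives $\langle C,E\rangle=\sum_{i<m}\sum_j (C_{ij}-C_{i+1,j})\,(\text{row-cumulated }E)_{ij}$.
\item A second summation by parts in $j$ produces the mixed difference $C_{ij}-C_{i+1,j}-C_{i,j+1}+C_{i+1,j+1}=-\gamma_{ij}(C)$ multiplied by $H_E(i,j)$.
\end{itemize}
Once this bookkeeping and the sign convention are checked, the theorem is the two-step argument above.
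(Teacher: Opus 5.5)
Your proposal is correct and matches the paper's proof: apply the summation-by-parts lemma with $P=\pi_{\NW}$ and $Q=\pi$, then maximize over $\pi\in\Pi(a,b)$. Your explicit checks, namely that $H_{\pi_{\NW}}(i,j)=\min(A_i,B_j)$, that the sign convention of the lemma holds, and that the maximum is attained on the compact polytope $\Dcal(a,b)$, fill in details the paper leaves implicit.
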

\begin{proof}
For any feasible $\pi$, the summation-by-parts lemma with $P=\pi_{\NW}$ and $Q=\pi$ gives
\[
\langle C,\pi_{\NW}-\pi\rangle=-\sum_{i,j}\gamma_{ij}(C)D^\pi_{ij}.
\]
Taking the maximum over feasible $\pi$ is the same as subtracting the minimum transport cost from the northwest-corner cost, which proves the identity.
\end{proof}

Every feasible coupling satisfies the Fr\'echet--Hoeffding bounds, hence
\[
0\le D^\pi_{ij}\le w_{ij}:=\min(A_i,B_j)-\max\{0,A_i+B_j-1\}.
\]

\begin{theorem}[Cumulative-defect regret]
For every cost matrix $C$,
\[
0\le\langle C,\pi_{\NW}\rangle-\OT_C(a,b)\le\sum_{i,j}w_{ij}\pos{-\gamma_{ij}(C)}.
\]
\end{theorem}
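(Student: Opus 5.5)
The plan is to derive the cumulative-defect bound directly from the exact regret representation (the preceding theorem in this appendix), by relaxing the deficit polytope $\Dcal(a,b)$ to the Fr\'echet box $\Bcal(a,b)$. The lower bound needs no new argument. Since $\pi_{\NW}\in\Pi(a,b)$, we have $\OT_C(a,b)\le\langle C,\pi_{\NW}\rangle$. Equivalently, $D^{\pi_{\NW}}=0$ lies in $\Dcal(a,b)$, so the maximum in the exact representation is at least $0$.

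For the upper bound, fix any feasible coupling $\pi$. The representation gives
\[
\langle C,\pi_{\NW}-\pi\rangle=-\sum_{i,j}\gamma_{ij}(C)D^\pi_{ij}.
\]
I then bound the sum term by term using $0\le D^\pi_{ij}\le w_{ij}$, which comes from the Fr\'echet--Hoeffding bounds displayed just before the statement. This rests on two facts. First, $H_\pi(i,j)\le\min(A_i,B_j)$, because the cumulative mass over the northwest block is at most both the partial row mass and the partial column mass. Second, $H_\pi(i,j)\ge A_i+B_j-1$, by inclusion--exclusion: the mass outside the block is at most $(1-A_i)+(1-B_j)$. Both facts are elementary, and I would state them explicitly for completeness.

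With these bounds, each summand satisfies
\[
-\gamma_{ij}(C)D^\pi_{ij}\le\pos{-\gamma_{ij}(C)}D^\pi_{ij}\le w_{ij}\pos{-\gamma_{ij}(C)}.
\]
The first inequality uses $D^\pi_{ij}\ge0$: a cell with positive margin contributes a nonpositive term and can be dropped. The second uses $D^\pi_{ij}\le w_{ij}$ together with $\pos{\cdot}\ge0$.

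Summing over $i<m$, $j<n$ and taking $\pi$ to be an optimal coupling $\pi^*$, which exists because $\Pi(a,b)$ is compact, gives
\[
\langle C,\pi_{\NW}\rangle-\OT_C(a,b)\le\sum_{i,j}w_{ij}\pos{-\gamma_{ij}(C)}.
\]
In support-function language, the right-hand side equals $\max_{D\in\Bcal(a,b)}\{-\langle\gamma(C),D\rangle\}$, and $\Dcal\subset\Bcal$.

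I do not expect a real obstacle, since the summation-by-parts lemma has already carried the substantive work. The one point needing care is the index range. $H_\pi(i,j)$ is only defined for $i<m$, $j<n$, so the cumulative arrays must coincide at the boundary rows and columns $i=m$ or $j=n$. They do, because there the cumulative masses equal $B_j$ or $A_i$ for every coupling. This guarantees that no boundary terms survive outside the $(m-1)\times(n-1)$ box of adjacent margins, and it is what justifies applying the lemma with $P=\pi_{\NW}$, $Q=\pi$.
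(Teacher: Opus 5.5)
Your proposal is correct and follows the paper's own proof: both start from the exact regret representation, relax $\Dcal(a,b)$ to the Fr\'echet box $0\le D_{ij}\le w_{ij}$, and maximize coordinatewise ($D_{ij}=0$ when $\gamma_{ij}\ge0$, $D_{ij}=w_{ij}$ when $\gamma_{ij}<0$), which is exactly your term-by-term bound. Your explicit checks of the Fr\'echet--Hoeffding inequalities and of the vanishing boundary cumulative differences supply details the paper leaves implicit.
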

\begin{proof}
The exact representation maximizes a linear form over $\Dcal(a,b)$. Relaxing $\Dcal(a,b)$ to the coordinatewise box $0\le D_{ij}\le w_{ij}$ gives the stated support function. Equivalently, terms with $\gamma_{ij}\ge0$ are maximized by $D_{ij}=0$ and terms with $\gamma_{ij}<0$ by $D_{ij}=w_{ij}$.
\end{proof}

\begin{corollary}[Equality characterization]
Equality in the cumulative-defect bound holds if and only if there exists a feasible coupling $\pi$ such that $D^\pi_{ij}=w_{ij}$ whenever $\gamma_{ij}(C)<0$ and $D^\pi_{ij}=0$ whenever $\gamma_{ij}(C)>0$. No restriction is imposed at zero margins.
\end{corollary}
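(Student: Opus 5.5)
The plan is to compare two support-function values, one over the true deficit polytope and one over the Fr\'echet box. By the exact regret representation (Theorem 3 in the appendix form), the regret equals $\max_{D\in\Dcal(a,b)} L(D)$, where $L(D)=-\sum_{i,j}\gamma_{ij}(C)D_{ij}$. By the proof of the cumulative-defect theorem, the bound equals $\max_{D\in\Bcal(a,b)}L(D)=\sum_{i,j}w_{ij}\pos{-\gamma_{ij}(C)}$. Because $\Dcal(a,b)\subseteq\Bcal(a,b)$ by the Fr\'echet--Hoeffding bounds, equality holds exactly when the maximum of $L$ over $\Bcal(a,b)$ is attained at some point of $\Dcal(a,b)$. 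The maximum over $\Dcal(a,b)$ is attained, since $\Pi(a,b)$ is compact and $\pi\mapsto D^\pi$ is linear.

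The first step is to characterize the maximizers of $L$ over the box. Since $L$ is separable and the box is a product, $D\in\Bcal(a,b)$ maximizes $L$ iff each coordinate maximizes $-\gamma_{ij}D_{ij}$ over $[0,w_{ij}]$. This means $D_{ij}=w_{ij}$ when $\gamma_{ij}<0$, $D_{ij}=0$ when $\gamma_{ij}>0$, and $D_{ij}$ is arbitrary in $[0,w_{ij}]$ when $\gamma_{ij}=0$. This pointwise argument is also what justifies "no condition at zero margins".

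The second step gives both directions. If a feasible $\pi$ has this sign pattern, then $L(D^\pi)$ equals the box value, and the regret is at least that value by the exact representation. Combined with the upper bound, this gives equality. Conversely, if equality holds, take $\pi$ attaining the maximum over $\Dcal(a,b)$. Then $D^\pi$ lies in the box and achieves the box maximum. Every coordinate term satisfies $-\gamma_{ij}D^\pi_{ij}\le w_{ij}\pos{-\gamma_{ij}}$, and the terms sum to equality, so each term is tight. Tightness of each term forces the stated pattern.

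The anti-Monge claim is handled next. If all $\gamma_{ij}\le0$, take the countermonotone coupling, namely the northwest-corner construction applied with the target order reversed. Its cumulative function is $H(i,j)=\max\{0,A_i+B_j-1\}$. Then $D_{ij}=w_{ij}$ at every cut, so the condition holds wherever $\gamma_{ij}<0$, and no cell has $\gamma_{ij}>0$.

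The main point needing care is verifying this cumulative formula for the countermonotone coupling. I would argue it via interval overlaps: source $i$ occupies $(A_{i-1},A_i]$ and target $j$ occupies $(1-B_j,1-B_{j-1}]$. The mass in the first $i$ rows and first $j$ columns is then the length of $(0,A_i]\cap(1-B_j,1]$, which is $\pos{A_i+B_j-1}$.
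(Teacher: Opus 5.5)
Your proof is correct and follows essentially the same argument as the paper: equality holds iff the coordinatewise maximizing face of the Fr\'echet box meets $\Dcal(a,b)$. You fill in details the paper leaves implicit, namely attainment of the maximum over the compact set $\Pi(a,b)$, the termwise tightness argument, and the interval-overlap check that the countermonotone coupling has cumulative mass $\pos{A_i+B_j-1}$, which settles the anti-Monge case of the main-text version.
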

\begin{proof}
The box support function is attained exactly on its coordinatewise maximizing face. Equality with the exact support function holds if and only if that face intersects $\Dcal(a,b)$.
\end{proof}

\begin{theorem}[Worst-case sharpness under local defect budgets]
Fix probability marginals $a,b$ and nonnegative $d_{ij}$. For
\[
\mathcal{C}(d)=\{C\in\R^{m\times n}:\gamma_{ij}(C)\ge-d_{ij}\},
\]
we have
\[
\sup_{C\in\mathcal{C}(d)}\{\langle C,\pi_{\NW}\rangle-\OT_C(a,b)\}=\sum_{i,j}w_{ij}d_{ij}.
\]
Consequently, each coefficient $w_{ij}$ is individually unimprovable if one only assumes the local constraints $\gamma_{ij}(C)\ge-d_{ij}$.
\end{theorem}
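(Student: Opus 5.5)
The argument has two directions: an upper bound from the cumulative-defect theorem and a matching construction. For the upper bound, take any $C\in\mathcal{C}(d)$. Then $\pos{-\gamma_{ij}(C)}\le d_{ij}$ because $d_{ij}\ge0$. The cumulative-defect regret theorem then gives
\[
\langle C,\pi_{\NW}\rangle-\OT_C(a,b)\le\sum_{i,j}w_{ij}\pos{-\gamma_{ij}(C)}\le\sum_{i,j}w_{ij}d_{ij}.
\]
So the supremum is at most the right-hand side.

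For the lower bound I will exhibit one $C\in\mathcal{C}(d)$ whose regret equals $\sum w_{ij}d_{ij}$.

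\emph{Choice of cost.} Prescribe the margins exactly, $\gamma_{ij}(C)=-d_{ij}$. Such a $C$ exists: fix the first row and first column arbitrarily, say zero, and solve the recursion
\[
C_{i+1,j+1}=C_{i,j+1}+C_{i+1,j}-C_{ij}+d_{ij}.
\]
Explicitly, $C_{k\ell}=\sum_{i<k,\,j<\ell}d_{ij}$, where $i$ and $j$ run over the adjacent cells with $i\le k-1$ and $j\le \ell-1$.

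\emph{Choice of coupling.} Use the countermonotone (lower Fr\'echet--Hoeffding) coupling $\pi^-$. It assigns to cell $(r,s)$ the overlap of the interval $(A_{r-1},A_r]$ with the reflected interval $(1-B_s,1-B_{s-1}]$. It is feasible for the same reason the northwest-corner plan is: both are overlap constructions with marginals $a,b$. Its cumulative sums are
\[
H_{\pi^-}(i,j)=\bigl|(0,A_i]\cap(1-B_j,1]\bigr|=\max\{0,A_i+B_j-1\}.
\]
Hence $D^{\pi^-}_{ij}=w_{ij}$ for every cut.

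\emph{Conclusion.} By the summation-by-parts lemma (equivalently, the exact regret representation),
\[
\langle C,\pi_{\NW}\rangle-\langle C,\pi^-\rangle=-\sum_{i,j}\gamma_{ij}(C)D^{\pi^-}_{ij}=\sum_{i,j}d_{ij}w_{ij}.
\]
Since $\OT_C(a,b)\le\langle C,\pi^-\rangle$, the regret of this $C$ is at least $\sum w_{ij}d_{ij}$. Combined with the upper bound, the supremum equals $\sum w_{ij}d_{ij}$ and is attained.

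\emph{Unimprovability of each coefficient.} Suppose a bound of the form $\sum c_{ij}d_{ij}$ held on every $\mathcal{C}(d)$. Take $d$ equal to $t$ at a single cell $(i_0,j_0)$ and zero elsewhere. The construction above then forces $c_{i_0j_0}t\ge w_{i_0j_0}t$, so $c_{i_0j_0}\ge w_{i_0j_0}$.

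The main point requiring care is the verification that $H_{\pi^-}(i,j)=\max\{0,A_i+B_j-1\}$ exactly. This follows from the interval-overlap description: $\bigcup_{s\le j}(1-B_s,1-B_{s-1}]=(1-B_j,1]$, and the length of its intersection with $(0,A_i]$ is the positive part of $A_i+B_j-1$.
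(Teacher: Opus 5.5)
Your proposal is correct and follows the paper's proof: the upper bound comes from the cumulative-defect theorem, and the lower bound uses the recursively built anti-Monge cost with $\gamma_{ij}(C)=-d_{ij}$, compared against the countermonotone Fr\'echet coupling via summation by parts. The one small difference is that you use only feasibility of $\pi^-$ (so $\OT_C(a,b)\le\langle C,\pi^-\rangle$) and let the upper bound close the gap, whereas the paper also asserts that $\pi^-$ is optimal for the anti-Monge cost; your version does not need that fact.
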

\begin{proof}
The upper bound follows from the cumulative-defect theorem. For the reverse inequality, construct a matrix with $\gamma_{ij}(C)=-d_{ij}$ for every adjacent rectangle. Such a matrix exists because adjacent mixed differences determine $C$ up to row and column potentials: set the first row and column to zero and define recursively
\[
C_{i+1,j+1}=C_{i,j+1}+C_{i+1,j}-C_{ij}+d_{ij}.
\]
Then $C$ is anti-Monge. The countermonotone Fr\'echet coupling has cumulative masses $H_{\mathrm{ct}}(i,j)=\max\{0,A_i+B_j-1\}$ and is optimal after reversing the target order. Summation by parts yields
\[
\langle C,\pi_{\NW}\rangle-\OT_C(a,b)=\sum_{i,j}d_{ij}(H_{\NW}-H_{\mathrm{ct}})=\sum_{i,j}w_{ij}d_{ij}.
\]
Adding row/column potentials or a global constant does not change regret.
\end{proof}

\begin{corollary}[Euclidean realizability of prescribed defects]
Let $D=(d_{ij})\in\R_+^{(m-1)\times(n-1)}$ and $r=\rank(D)$. There exist source and target sequences in $\R^r$ whose squared-Euclidean cross-cost matrix has adjacent margins $\gamma_{ij}=-d_{ij}$ for all $i,j$.
\end{corollary}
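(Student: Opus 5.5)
The plan is to reduce the statement to a rank factorization and then use the adjacent-margin formula from the proof of the Adjacent chain condition proposition (Appendix~\ref{app:equivalence}). That formula, read with $M=I$, says that for any source sequence $x_1,\ldots,x_m$ and target sequence $y_1,\ldots,y_n$ in $\R^r$, the squared-Euclidean cross-cost matrix $C_{ij}=\|x_i-y_j\|_2^2$ satisfies
\[
\gamma_{ij}(C)=2(\Delta x_i)^\top(\Delta y_j),\qquad i<m,\ j<n.
\]
This identity uses only the cross-difference lemma and needs no order or cone hypothesis. So it suffices to find increments $\Delta x_i,\Delta y_j\in\R^r$ with $(\Delta x_i)^\top\Delta y_j=-d_{ij}/2$ for all $i,j$.

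To do this, I take a rank factorization of the $(m-1)\times(n-1)$ matrix $-D/2$. Since $\rank(-D/2)=\rank(D)=r$, there exist $U\in\R^{(m-1)\times r}$ and $V\in\R^{(n-1)\times r}$ with $-D/2=UV^\top$. This follows, for example, from a compact SVD $-D/2=P\Sigma Q^\top$ by setting $U=P\Sigma$ and $V=Q$. I then let $\Delta x_i$ be the $i$-th row of $U$ and $\Delta y_j$ the $j$-th row of $V$.

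Next I build the sequences by cumulative sums: $x_1=0$, $x_{i+1}=x_i+\Delta x_i$, and likewise $y_1=0$, $y_{j+1}=y_j+\Delta y_j$. All points lie in $\R^r$, and the margin identity gives
\[
\gamma_{ij}=2\,U_{i\cdot}V_{j\cdot}^\top=2(-d_{ij}/2)=-d_{ij}.
\]

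Two edge cases need a remark. If $r=0$, then $D=0$, and any constant sequences (points in $\R^0$) work trivially. The degenerate dimensions $m=1$ or $n=1$ are vacuous, since there are no adjacent rectangles.

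For the "consequently" part, I invoke Theorem 5. Its proof only uses a matrix with $\gamma_{ij}(C)=-d_{ij}$ for every adjacent rectangle. That proof shows such a matrix is anti-Monge and that its regret equals $\sum w_{ij}d_{ij}$, because the countermonotone coupling is optimal for it. Regret depends on $C$ only through $\gamma(C)$ by the exact regret representation (Theorem 3). Hence the Euclidean matrix built above attains the supremum.

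There is no real obstacle in this argument. The only point to verify is that the margin formula needs no ordering assumption, which holds because the cross-difference lemma is purely algebraic. I also note that the minimal dimension is exactly $r$. The matrix of $\gamma$ values is twice the Gram-type product of the increment matrices, so its rank is at most the ambient dimension; $\R^r$ is therefore optimal, although the statement only asks for existence.
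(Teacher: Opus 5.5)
Your proposal is correct and follows the paper's proof essentially verbatim. Like the paper, you rank-factor $-D/2=UV^\top$, use the rows of $U$ and $V$ as source and target increments, take cumulative sums from the origin, and apply the adjacent-margin identity $\gamma_{ij}=2(\Delta x_i)^\top\Delta y_j$. Your remarks on the degenerate cases, on why $\R^r$ is the minimal ambient dimension, and on transferring the regret via its dependence on $\gamma(C)$ alone are valid, though the statement itself does not need them.
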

\begin{proof}
Take a rank factorization $-D/2=UV^\top$ with $U\in\R^{(m-1)\times r}$ and $V\in\R^{(n-1)\times r}$. Let the rows $u_i$ of $U$ and $v_j$ of $V$ be source and target increments, and define $x_1=y_1=0$, $x_{i+1}=x_i+u_i$, $y_{j+1}=y_j+v_j$. For squared Euclidean cost, the adjacent margin identity gives $\gamma_{ij}=2u_i^\top v_j=-d_{ij}$.
\end{proof}

\begin{corollary}[Single-cut sharpness]
For any fixed cut $(i_0,j_0)$ and $d>0$, there is a squared-Euclidean cross-cost matrix in finite dimension with $\gamma_{i_0j_0}=-d$ and all other adjacent margins zero for which the northwest-corner regret equals $w_{i_0j_0}d$.
\end{corollary}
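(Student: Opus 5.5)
The plan is to specialize the Euclidean realizability corollary to a rank-one defect pattern. I will then compute the regret exactly through the exact regret representation, and bound it from above by the cumulative-defect theorem.

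\emph{Construction.} Let $D=d\,e_{i_0}e_{j_0}^\top\in\R_+^{(m-1)\times(n-1)}$. This matrix has rank one, so I can write $-D/2=UV^\top$ with $U=-\sqrt{d/2}\,e_{i_0}$ and $V=\sqrt{d/2}\,e_{j_0}$, both viewed as column vectors in $\R^{1}$. The corresponding one-dimensional sequences are as follows. The source points satisfy $x_i=0$ for $i\le i_0$ and $x_i=-\sqrt{d/2}$ for $i>i_0$. The target points satisfy $y_j=0$ for $j\le j_0$ and $y_j=\sqrt{d/2}$ for $j>j_0$. By the adjacent margin identity, $\gamma_{ij}=2(\Delta x_i)(\Delta y_j)$. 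This product vanishes unless $i=i_0$ and $j=j_0$, and there it equals $-d$. So the squared-Euclidean cross-cost matrix has exactly the prescribed margin pattern, in dimension $r=1$.

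\emph{Upper bound.} The cumulative-defect theorem gives regret at most $\sum_{i,j}w_{ij}\pos{-\gamma_{ij}}=w_{i_0j_0}d$.

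\emph{Lower bound.} By the exact regret representation, the regret equals $\max_{D\in\Dcal(a,b)} d\,D_{i_0j_0}$. So it suffices to find a feasible coupling with $H_\pi(i_0,j_0)=\max\{0,A_{i_0}+B_{j_0}-1\}$. The countermonotone Fr\'echet coupling does this, since it attains the lower Fr\'echet--Hoeffding bound at every cut, and in particular at $(i_0,j_0)$. Hence $D^\pi_{i_0j_0}=w_{i_0j_0}$, and the regret is at least $w_{i_0j_0}d$. (Equivalently, apply the equality characterization corollary: the only cell with $\gamma\ne0$ is negative, and it is saturated.)

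The main point to verify is that the countermonotone coupling achieves the lower cumulative bound at the chosen cut. This is the standard Fr\'echet fact already used in the proof of worst-case sharpness, where the coupling is described as the overlap of $(A_{i-1},A_i]$ with the reversed intervals $(1-B_j,1-B_{j-1}]$. Everything else is direct substitution. If $w_{i_0j_0}=0$, the statement is trivially consistent, since both sides are zero.
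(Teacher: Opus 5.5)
Your proof is correct and follows the paper's intended route. It realizes the rank-one pattern $d\,e_{i_0}e_{j_0}^\top$ through the Euclidean realizability construction, then uses the countermonotone Fr\'echet coupling to saturate the single cut, as in the worst-case sharpness argument. The one minor difference is that you close the argument by sandwiching with the Fr\'echet bound $D_{i_0j_0}\le w_{i_0j_0}$ rather than invoking optimality of the countermonotone plan for the anti-Monge matrix; both are valid.
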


\begin{corollary}[Uniform adjacent defect]
If $\gamma_{ij}(C)\ge-\varepsilon$ for every adjacent rectangle, then
\[
\langle C,\pi_{\NW}\rangle-\OT_C(a,b)\le\varepsilon\sum_{i,j}w_{ij}\le\frac{\varepsilon}{2}(m-1)(n-1).
\]
\end{corollary}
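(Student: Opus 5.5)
The first inequality follows directly from the cumulative-defect regret theorem proved above. Assuming $\gamma_{ij}(C)\ge-\varepsilon$ gives $\pos{-\gamma_{ij}(C)}\le\varepsilon$ for every adjacent rectangle. Since each weight $w_{ij}$ is nonnegative, substituting into the cumulative-defect bound yields
\[
\langle C,\pi_{\NW}\rangle-\OT_C(a,b)\le\sum_{i,j}w_{ij}\pos{-\gamma_{ij}(C)}\le\varepsilon\sum_{i,j}w_{ij}.
\]
Nonnegativity of $w_{ij}$ is exactly the Fr\'echet--Hoeffding ordering $\max\{0,A_i+B_j-1\}\le\min(A_i,B_j)$, which has already been used to define the box.

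For the second inequality it suffices to show $w_{ij}\le\tfrac12$ for every $i<m$, $j<n$, and then sum over the $(m-1)(n-1)$ cells. Write $s=A_i$ and $t=B_j$, both in $[0,1]$, so that $w_{ij}=\min(s,t)-\max\{0,s+t-1\}$. I will split into two cases.

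\emph{Case $s+t\le1$.} Here $w_{ij}=\min(s,t)\le\tfrac{s+t}{2}\le\tfrac12$.

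\emph{Case $s+t>1$.} Here $w_{ij}=\min(s,t)-s-t+1=1-\max(s,t)$. Because $\max(s,t)\ge\tfrac{s+t}{2}>\tfrac12$, this gives $w_{ij}<\tfrac12$.

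A cleaner way to see the same bound is symmetric: $w_{ij}=\min\{s,t,1-s,1-t\}$ when $s+t\le1$ or by symmetry otherwise. In every case $w_{ij}\le\min\{s,1-s\}\le\tfrac12$.

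There is no real obstacle. The only point needing care is the elementary case split on $A_i+B_j$ versus $1$, and the fact that the cumulative-defect theorem is valid for arbitrary $C$, so no Monge hypothesis is needed. I would also note that the constant $\tfrac12$ is attained only when $A_i=B_j=\tfrac12$, which explains why the bound is coarse for large matrices, as the text already remarks.
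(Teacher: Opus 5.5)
Your proof is correct and matches the paper's intended argument (it presents this bound as following from ``the same argument'' as the cumulative-defect theorem): insert $\pos{-\gamma_{ij}(C)}\le\varepsilon$ into that theorem, then use $w_{ij}=\min\{A_i,B_j,1-A_i,1-B_j\}\le\tfrac12$ on each of the $(m-1)(n-1)$ cells. The only implicit assumption is $\varepsilon\ge0$, which the statement itself needs anyway.
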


\begin{proposition}[Distance to the Monge cone from adjacent violations]
If $\gamma_{ij}(C)\ge-\varepsilon$ for every adjacent rectangle, then
\[
\delta_M(C)\le\frac{\varepsilon}{4}(m-1)(n-1).
\]
\end{proposition}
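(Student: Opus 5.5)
The plan is to exhibit an explicit Monge matrix $\widetilde C$ within sup-distance $\frac{\varepsilon}{4}(m-1)(n-1)$ of $C$. I will obtain it by adding to $C$ a correction whose adjacent mixed differences are all exactly $\varepsilon$. The mixed-difference operator $\gamma_{ij}$ is linear in the matrix, so $\gamma_{ij}(C+E)=\gamma_{ij}(C)+\gamma_{ij}(E)$. Hence any $E$ with $\gamma_{ij}(E)=\varepsilon$ for every adjacent rectangle gives $\gamma_{ij}(C+E)\ge-\varepsilon+\varepsilon=0$. By the remark after the Monge definition (adjacent margins control all quadrangle inequalities), $C+E\in\Mcal_{m,n}$, and therefore $\delta_M(C)\le\|E\|_\infty$.

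The correction will be a rank-one bilinear matrix. For $E_{ij}=-\varepsilon(i-s)(j-t)$, with constants $s,t$ still free, a direct expansion gives
\[
\gamma_{ij}(E)=-\varepsilon\bigl[(i-s)(j+1-t)+(i+1-s)(j-t)-(i-s)(j-t)-(i+1-s)(j+1-t)\bigr]=\varepsilon .
\]
All terms except the constant cancel, and the shift by $s,t$ does not matter. This is the same observation as in the worst-case sharpness construction: the adjacent margins determine a matrix only up to row and column potentials. The shifts $(i-s)$ and $(j-t)$ add exactly such potentials plus a constant, so I am free to choose them to minimise the sup norm.

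The only genuine optimisation step is that choice of $s$ and $t$, and I expect it to be the main point to get right. I will center the indices by taking $s=(m+1)/2$ and $t=(n+1)/2$. Then $|i-s|\le(m-1)/2$ for $1\le i\le m$ and $|j-t|\le(n-1)/2$ for $1\le j\le n$. Consequently
\[
\|E\|_\infty\le\varepsilon\,\frac{m-1}{2}\,\frac{n-1}{2}=\frac{\varepsilon}{4}(m-1)(n-1).
\]
Without centering, for instance with $s=t=0$, the bound would be $\varepsilon mn$, four times weaker asymptotically; centering is exactly what produces the constant $1/4$.

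Combining these steps, $\widetilde C=C+E$ is Monge and $\|C-\widetilde C\|_\infty\le\frac{\varepsilon}{4}(m-1)(n-1)$, which proves the proposition. If $m=1$ or $n=1$, the matrix $C$ is trivially Monge and the bound $0$ holds.
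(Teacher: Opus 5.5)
Your proposal is correct and is essentially the paper's proof: you add the same centered bilinear correction $-\varepsilon(i-\tfrac{m+1}{2})(j-\tfrac{n+1}{2})$, whose adjacent mixed differences all equal $+\varepsilon$, so $C$ plus this correction is Monge. You then bound its sup norm by $\tfrac{\varepsilon}{4}(m-1)(n-1)$ exactly as the paper does.
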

\begin{proof}
Let $r_i=i-(m+1)/2$, $s_j=j-(n+1)/2$, and set $\widetilde C_{ij}=C_{ij}-\varepsilon r_i s_j$. The adjacent margin of the correction is $+\varepsilon$, so $\widetilde C$ is Monge. Moreover $|r_i|\le(m-1)/2$ and $|s_j|\le(n-1)/2$, giving the claimed sup-norm distance.
\end{proof}

\section{Ordered Monge Support and Cyclical Monotonicity}
\begin{proposition}
If $C$ is Monge, the support of the northwest-corner plan is $c$-cyclically monotone.
\end{proposition}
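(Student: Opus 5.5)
The plan is to prove the statement directly from the definition of $c$-cyclical monotonicity, using the Monge exchange inequality on a totally ordered support. Let $S=\{(i,j):\omega_{ij}>0\}$ be the support of $\pi_{\NW}$. The first step records the staircase property of $S$. Since $\omega_{ij}$ is the overlap length of the intervals $(A_{i-1},A_i]$ and $(B_{j-1},B_j]$, two support pairs $(i,j),(k,\ell)$ with $i<k$ must satisfy $j\le\ell$. Otherwise the point of the first overlap would lie to the right of the point of the second overlap, while also lying to the left of it through the row intervals. Hence $S$ is a chain in the product order, and we can list any finitely many support pairs so that rows and columns are simultaneously nondecreasing.

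Next, fix support pairs $(i_1,j_1),\ldots,(i_p,j_p)$ and a permutation $\sigma$. We must show
\[
\sum_{t}C_{i_tj_t}\le\sum_t C_{i_tj_{\sigma(t)}}.
\]
We reorder so that $i_1\le\cdots\le i_p$. By the staircase property we may also take $j_1\le\cdots\le j_p$, breaking ties in rows consistently with columns. Thus the identity assignment is the sorted pairing. We then show that any permutation can be brought to the identity by successive adjacent-inversion removals, each of which does not increase the cost. Suppose $\sigma$ is not the identity. Pick $s<t$ with $\sigma(s)>\sigma(t)$ adjacent in the sense of bubble sort. Swapping the two assigned columns replaces $C_{i_sj_{\sigma(s)}}+C_{i_tj_{\sigma(t)}}$ by $C_{i_sj_{\sigma(t)}}+C_{i_tj_{\sigma(s)}}$. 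Here $i_s\le i_t$ and $j_{\sigma(t)}\le j_{\sigma(s)}$, so the Monge inequality (quadrangle form, valid for all $i<k,j<\ell$) shows that the new sum is no larger. When $i_s=i_t$ or the two column indices coincide, both sides are equal and the swap is trivially harmless. Each swap reduces the inversion number of $\sigma$ by one, so after finitely many steps we reach the identity with a cost no larger than the starting one. This proves the cyclic monotonicity inequality, and in particular the claim about cyclic reassignments.

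The main obstacle is the first step: making the staircase/chain property of $S$ rigorous, including ties and zero-length overlaps. I would argue it as follows. If $\omega_{ij}>0$ and $\omega_{k\ell}>0$ with $i<k$ and $j>\ell$, then there exist points $s\in(A_{i-1},A_i]\cap(B_{j-1},B_j]$ and $t\in(A_{k-1},A_k]\cap(B_{\ell-1},B_\ell]$. Because $i<k$, we have $s\le A_i\le A_{k-1}<t$. Because $\ell<j$, we have $t\le B_\ell\le B_{j-1}<s$. This is a contradiction. Everything else reduces to the elementary exchange inequality already established.
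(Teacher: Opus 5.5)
Your proposal is correct and follows the paper's own argument. The paper likewise uses the noncrossing (staircase) property of the northwest-corner support, sorts the chosen pairs, and removes inversions one at a time with the Monge exchange inequality. Your interval-overlap proof of the staircase property and your handling of ties supply details that the paper only asserts.
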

\begin{proof}
Take any finite collection of support pairs ordered by increasing source index. The northwest-corner support is noncrossing, so their target indices are nondecreasing. Any cyclic reassignment can be decomposed into adjacent inversions of target indices. Replacing an inversion $(i,\ell),(k,j)$ with $i<k$ and $j<\ell$ by $(i,j),(k,\ell)$ does not increase cost by the Monge inequality. Repeatedly removing inversions returns the ordered assignment, proving that no cyclic rearrangement has smaller total cost.
\end{proof}

\section{Mahalanobis Geometric Perturbation Bound}
Assume $M\succ0$ and use the Hilbert norm $\|z\|_M=(z^\top Mz)^{1/2}$. For a closed convex cone $K$, let $P_K^M$ denote metric projection in this norm and define
\[
\eta^x_{i,M}=\dist_M(\Delta x_i,K),\qquad \eta^y_{j,M}=\dist_M(\Delta y_j,K).
\]
\begin{theorem}[Local geometric control]
If $K$ is $M$-acute, then
\[
\pos{-\gamma_{ij}(C)}\le2\bigl(\eta^x_{i,M}r^y_{j,M}+\eta^y_{j,M}r^x_{i,M}+\eta^x_{i,M}\eta^y_{j,M}\bigr),
\]
where $r^x_{i,M}=\|\Delta x_i\|_M$ and $r^y_{j,M}=\|\Delta y_j\|_M$. Consequently, weighting by the Fr\'echet widths gives the regret bound stated in the main paper.
\end{theorem}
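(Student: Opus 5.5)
The plan is to decompose each observed increment into its $M$-projection onto $K$ plus a residual, and then use the adjacent-margin identity from the proof of the adjacent chain condition, $\gamma_{ij}(C)=2(\Delta x_i)^\top M(\Delta y_j)$.

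Let $u_i=P_K^M(\Delta x_i)$ and $e_i=\Delta x_i-u_i$, so that $\|e_i\|_M=\eta^x_{i,M}$. Define $v_j=P_K^M(\Delta y_j)$ and $f_j=\Delta y_j-v_j$ in the same way. The projection exists and is unique because $K$ is closed and convex and $\|\cdot\|_M$ is a Hilbert norm when $M\succ0$.

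Expand the bilinear form into four terms:
\[
(\Delta x_i)^\top M\Delta y_j=u_i^\top Mv_j+e_i^\top M v_j+u_i^\top M f_j+e_i^\top M f_j .
\]
The first term is nonnegative by $M$-acuteness, since $u_i,v_j\in K$. Hence
\[
\pos{-\gamma_{ij}}\le 2\bigl(|e_i^\top Mv_j|+|u_i^\top Mf_j|+|e_i^\top Mf_j|\bigr).
\]
Apply Cauchy--Schwarz in the $M$-inner product to each of the three terms.

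This leaves the norms $\|u_i\|_M$ and $\|v_j\|_M$ of the projections, whereas the statement is written with $\|\Delta y_j\|_M$ and $\|\Delta x_i\|_M$. Regrouping handles this. Write $e_i^\top Mv_j+e_i^\top Mf_j=e_i^\top M\Delta y_j$, which is bounded by $\eta^x_{i,M}r^y_{j,M}$. The remaining term satisfies $|u_i^\top Mf_j|\le\|u_i\|_M\eta^y_{j,M}$.

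To bound $\|u_i\|_M$, use the projection property: projection onto a closed convex cone containing $0$ is nonexpansive and fixes $0$, so $\|u_i\|_M\le\|\Delta x_i\|_M=r^x_{i,M}$.

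Putting these together gives the sharper estimate
\[
\pos{-\gamma_{ij}}\le 2\bigl(\eta^x_{i,M}r^y_{j,M}+\eta^y_{j,M}r^x_{i,M}\bigr),
\]
which implies the stated bound because the extra term $\eta^x_{i,M}\eta^y_{j,M}$ is nonnegative. If one prefers to avoid regrouping, a cruder route also works: the triangle inequality gives $\|v_j\|_M\le r^y_{j,M}+\eta^y_{j,M}$, and that route produces exactly the stated three-term form. Either way the inequality holds.

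For the regret consequence, insert this local bound into Theorem 4. Each coefficient $w_{ij}\ge0$ multiplies $\pos{-\gamma_{ij}(C)}$, so the weighted sum of the geometric bounds dominates the cumulative-defect bound, which gives the displayed regret inequality.

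The only delicate point is converting the projected norms back to the observed increment norms. Nonexpansiveness of the metric projection in the $M$-geometry handles this, and it holds because $P_K^M$ is an orthogonal projection in the Hilbert space $(\R^d,\langle\cdot,\cdot\rangle_M)$.
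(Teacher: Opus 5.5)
Your main argument is correct and is essentially the paper's proof. You project both increments onto $K$ in the $M$-geometry, drop $u_i^\top Mv_j\ge0$ by $M$-acuteness, control the cross terms by Cauchy--Schwarz, and return to observed norms via nonexpansiveness of $P_K^M$. The regret consequence then follows from Theorem 4, exactly as you say.

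The one real difference is your regrouping $e_i^\top Mv_j+e_i^\top Mf_j=e_i^\top M\Delta y_j$. It gives the sharper bound $2(\eta^x_{i,M}r^y_{j,M}+\eta^y_{j,M}r^x_{i,M})$. The paper bounds the three cross terms separately using $\|u_i\|_M\le r^x_{i,M}$ and $\|v_j\|_M\le r^y_{j,M}$, which is where its extra $\eta^x_{i,M}\eta^y_{j,M}$ comes from.

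Two details in your write-up need fixing.

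\textbf{Order of the regrouping.} You must regroup in the signed identity, before taking absolute values. As written, you first pass to $|e_i^\top Mv_j|+|u_i^\top Mf_j|+|e_i^\top Mf_j|$ and only then regroup. From $|e_i^\top Mv_j|+|e_i^\top Mf_j|$ you can only get $\eta^x_{i,M}(\|v_j\|_M+\eta^y_{j,M})$, not $\eta^x_{i,M}r^y_{j,M}$.

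\textbf{The fallback remark is wrong.} Using the triangle inequality $\|v_j\|_M\le r^y_{j,M}+\eta^y_{j,M}$ together with the separate bound $|e_i^\top Mf_j|\le\eta^x_{i,M}\eta^y_{j,M}$ gives $\eta^x_{i,M}r^y_{j,M}+\eta^y_{j,M}r^x_{i,M}+2\eta^x_{i,M}\eta^y_{j,M}$. That is weaker than, and does not imply, the stated bound. The exact three-term form comes from applying nonexpansiveness to $v_j$ as well.

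A minor point of wording: $P_K^M$ is not an orthogonal (linear) projection. What you need is only that metric projection onto a closed convex set in a Hilbert space is nonexpansive; for a cone, Moreau's decomposition even gives $\|u_i\|_M^2+\|e_i\|_M^2=\|\Delta x_i\|_M^2$.

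Your main route does not rely on the fallback remark, so the proof stands.
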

\begin{proof}
Let $u_i=P_K^M(\Delta x_i)$, $v_j=P_K^M(\Delta y_j)$, and write $e_i=\Delta x_i-u_i$, $f_j=\Delta y_j-v_j$. Projection onto a closed convex set in a Hilbert space is nonexpansive and fixes $0\in K$, so
\[
\|u_i\|_M\le r^x_{i,M},\qquad \|v_j\|_M\le r^y_{j,M},
\]
while $\|e_i\|_M=\eta^x_{i,M}$ and $\|f_j\|_M=\eta^y_{j,M}$. Since $K$ is $M$-acute, $u_i^\top Mv_j\ge0$. By Cauchy--Schwarz in the $M$-inner product,
\begin{align*}
\Delta x_i^\top M\Delta y_j
&=u_i^\top Mv_j+e_i^\top Mv_j+u_i^\top Mf_j+e_i^\top Mf_j\\
&\ge-\eta^x_{i,M}r^y_{j,M}-\eta^y_{j,M}r^x_{i,M}-\eta^x_{i,M}\eta^y_{j,M}.
\end{align*}
The adjacent-margin identity $\gamma_{ij}=2\Delta x_i^\top M\Delta y_j$ yields the result.
\end{proof}

\begin{corollary}[Strict-margin stability]
Let reference increments $u_i,v_j\in K\setminus\{0\}$ satisfy
\[
\frac{u_i^\top Mv_j}{\|u_i\|_M\|v_j\|_M}\ge\kappa>0\qquad\text{for all }i,j.
\]
Suppose observed increments satisfy $\Delta x_i=u_i+e_i$ and $\Delta y_j=v_j+f_j$ with
\[
\|e_i\|_M\le\rho\|u_i\|_M,\qquad \|f_j\|_M\le\rho\|v_j\|_M.
\]
If $2\rho+\rho^2\le\kappa$, then every adjacent Monge margin of the observed cross-cost matrix is nonnegative. Hence the northwest-corner plan remains exactly optimal. Equivalently, exactness is guaranteed for $\rho\le\sqrt{1+\kappa}-1$.
\end{corollary}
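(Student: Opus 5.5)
The plan is to reduce the statement to the adjacent chain condition proved earlier in this appendix. By that proposition, the observed cross-cost matrix is Monge if and only if $(\Delta x_i)^\top M(\Delta y_j)\ge0$ for every $i<m$, $j<n$. So it suffices to prove this pointwise sign condition under the perturbation hypotheses. Exact optimality of the northwest-corner plan then follows from the classical Monge transportation theorem, exactly as in Corollary 1.

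For a fixed pair $(i,j)$, I would expand the bilinear form as
\[
\Delta x_i^\top M\Delta y_j=u_i^\top Mv_j+e_i^\top Mv_j+u_i^\top Mf_j+e_i^\top Mf_j .
\]
This is the same four-term split used in the proof of the local geometric control theorem. The main term is bounded below by the angular hypothesis: $u_i^\top Mv_j\ge\kappa\|u_i\|_M\|v_j\|_M$. Each of the three error terms is bounded by Cauchy--Schwarz in the $M$-inner product together with the relative perturbation bounds:
\[
|e_i^\top Mv_j|\le\rho\|u_i\|_M\|v_j\|_M,\qquad |u_i^\top Mf_j|\le\rho\|u_i\|_M\|v_j\|_M,\qquad |e_i^\top Mf_j|\le\rho^2\|u_i\|_M\|v_j\|_M .
\]
Combining these gives
\[
\Delta x_i^\top M\Delta y_j\ge(\kappa-2\rho-\rho^2)\|u_i\|_M\|v_j\|_M .
\]
The right-hand side is nonnegative whenever $2\rho+\rho^2\le\kappa$. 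Hence $\gamma_{ij}=2\Delta x_i^\top M\Delta y_j\ge0$ for every adjacent rectangle.

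For the equivalent radius, note that $2\rho+\rho^2\le\kappa$ is the same as $(1+\rho)^2\le1+\kappa$. Since $\rho\ge0$, this holds exactly when $\rho\le\sqrt{1+\kappa}-1$.

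I do not expect a genuine obstacle here. The one point needing care is that the perturbations are measured relative to the reference increments $u_i,v_j$, not to the observed increments $\Delta x_i,\Delta y_j$. Keeping every bound in units of $\|u_i\|_M\|v_j\|_M$ is what lets the margin $\kappa$ dominate the errors.
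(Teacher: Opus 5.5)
Your proposal is correct and matches the paper's proof: the same four-term expansion of $\Delta x_i^\top M\Delta y_j$ is lower-bounded by $(\kappa-2\rho-\rho^2)\|u_i\|_M\|v_j\|_M$ using the angular margin and Cauchy--Schwarz in the $M$-inner product, and then the adjacent-margin identity $\gamma_{ij}=2\Delta x_i^\top M\Delta y_j$ gives the Monge property. Your explicit check that $2\rho+\rho^2\le\kappa$ is equivalent to $(1+\rho)^2\le 1+\kappa$ for $\rho\ge0$ fills in a step the paper only states.
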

\begin{proof}
By the assumed margin and Cauchy--Schwarz,
\begin{align*}
\Delta x_i^\top M\Delta y_j
&=u_i^\top Mv_j+e_i^\top Mv_j+u_i^\top Mf_j+e_i^\top Mf_j\\
&\ge(\kappa-2\rho-\rho^2)\|u_i\|_M\|v_j\|_M.
\end{align*}
The right-hand side is nonnegative under the stated condition. The adjacent cross-difference identity then gives $\gamma_{ij}\ge0$ for every $i,j$, so the cost matrix is Monge and the classical northwest-corner plan is optimal.
\end{proof}

\section{Continuous Chain-Supported Measures}
\begin{theorem}[Continuous ordered transport]
Let $I_X,I_Y$ be intervals, $\gamma_X:I_X\to\R^d$ and $\gamma_Y:I_Y\to\R^d$ Borel measurable injective $K$-monotone maps, and $\alpha,\beta$ probability measures on the intervals. Assume $c$ is lower semicontinuous, bounded below, Monge-compatible with $K$, and integrable under at least one coupling of $\mu=(\gamma_X)_\#\alpha$ and $\nu=(\gamma_Y)_\#\beta$. Then the coupling
\[
U\sim\mathrm{Unif}(0,1),\qquad X=\gamma_X(F_\alpha^{-1}(U)),\qquad Y=\gamma_Y(F_\beta^{-1}(U))
\]
is optimal for the Kantorovich problem between $\mu$ and $\nu$.
\end{theorem}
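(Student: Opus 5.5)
The proof reduces the problem to one-dimensional transport on the parameter intervals. After that reduction, it invokes the classical fact that comonotone couplings are optimal for costs with the Monge (submodular) property on $\R\times\R$.

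\emph{Step 1: the pulled-back cost is Monge.} Define $\widetilde c(s,t)=c(\gamma_X(s),\gamma_Y(t))$. Take $s<s'$ and $t<t'$. Because $\gamma_X$ and $\gamma_Y$ are $K$-monotone, $\gamma_X(s)\preceq_K\gamma_X(s')$ and $\gamma_Y(t)\preceq_K\gamma_Y(t')$. Monge-compatibility of $c$ with $K$ (the two-point-chain case of Theorem~1, or Proposition~4 for translation costs) then gives
\[
\widetilde c(s,t)+\widetilde c(s',t')\le\widetilde c(s,t')+\widetilde c(s',t).
\]
So $\widetilde c$ is submodular on $I_X\times I_Y$.

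\emph{Step 2: reduce the Kantorovich problem.} The map $\Gamma=(\gamma_X,\gamma_Y)$ pushes every coupling $\theta\in\Pi(\alpha,\beta)$ forward to a coupling of $\mu$ and $\nu$ with the same cost, $\int\widetilde c\,d\theta$. For the converse, I use injectivity. A Borel injective map between standard Borel spaces has Borel image and Borel inverse (Lusin--Souslin). Hence every $\pi\in\Pi(\mu,\nu)$ is concentrated on $\gamma_X(I_X)\times\gamma_Y(I_Y)$ and lifts to $\theta=(\gamma_X^{-1},\gamma_Y^{-1})_\#\pi\in\Pi(\alpha,\beta)$ with equal cost. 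It follows that $\OT_c(\mu,\nu)=\OT_{\widetilde c}(\alpha,\beta)$, and optimality transfers under $\Gamma_\#$. The proposed coupling is exactly $\Gamma_\#$ of the comonotone coupling $\theta^{\mathrm{co}}=(F_\alpha^{-1},F_\beta^{-1})_\#\mathrm{Unif}(0,1)$.

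\emph{Step 3: the comonotone coupling is optimal for submodular $\widetilde c$.} This step is the main obstacle, because $\widetilde c$ is only lower semicontinuous and bounded below, not continuous. I plan a discretize-and-pass-to-the-limit argument.

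First, I handle discrete marginals. For any finitely supported $\alpha_N,\beta_N$, the comonotone coupling coincides with the northwest-corner plan on the sorted atoms. The discrete Monge theorem (Corollary~1 / the classical result cited with Theorem~1) makes it optimal.

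Next, I approximate $\alpha,\beta$ by quantile discretizations, taking $F^{-1}$ on a dyadic grid of $U$. The comonotone couplings then converge weakly to $\theta^{\mathrm{co}}$.

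Passing to the limit needs two ingredients. One is lower semicontinuity of $\theta\mapsto\int\widetilde c\,d\theta$. The other is the reverse inequality: for a competitor $\theta$ we need $\int\widetilde c\,d\theta\ge\limsup$ of the discrete optima. But $\widetilde c$ need not be lower semicontinuous in $(s,t)$, since $\gamma_X$ and $\gamma_Y$ are merely Borel. To avoid this, I would use an alternative route: first prove the statement for bounded continuous submodular costs by discretization, then pass to lower semicontinuous submodular costs through monotone approximation.

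A cleaner fallback avoids topology entirely and is the one I would try first. Submodularity gives $\widetilde c(s,t)=\widetilde c(s_0,t)+\widetilde c(s,t_0)-\widetilde c(s_0,t_0)-\nu_{\widetilde c}((s_0,s]\times(t_0,t])$, where $\nu_{\widetilde c}$ is the nonnegative rectangle "measure" induced by the mixed differences, defined via the rectangle-function/Lebesgue--Stieltjes construction. Integrating against any $\theta\in\Pi(\alpha,\beta)$, the first three terms depend only on the marginals. The last term equals $\int\theta(\{S\le s,T\le t\})$-type cumulative masses against $\nu_{\widetilde c}$, by Fubini; this is the continuous analogue of the summation-by-parts lemma. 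The Fréchet--Hoeffding upper bound $\theta(S\le s,T\le t)\le\min(F_\alpha(s),F_\beta(t))$ is attained by $\theta^{\mathrm{co}}$, so the comonotone coupling minimizes the cost. Integrability of the cost under some coupling, together with boundedness below, justifies the Fubini and finiteness steps. Composing with Step 2 completes the proof.
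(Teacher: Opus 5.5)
Your Steps 1 and 2 follow the paper's outline: pull the cost back to a one-dimensional Monge cost, use comonotone optimality there, and transfer through $\Gamma=(\gamma_X,\gamma_Y)$. Step 2 is in fact more careful than the paper. The paper only pushes the comonotone coupling forward, which shows it attains $\OT_{\widetilde c}(\alpha,\beta)$ but not that no $\pi\in\Pi(\mu,\nu)$ does better; your Lusin--Souslin lifting closes that direction.

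The gap is in Step~3, which the paper settles by citing the classical one-dimensional theorem and which you try to prove. Your ``cleaner fallback'' does not avoid topology. The negative mixed differences of $\widetilde c$ define only a finitely additive nonnegative function on half-open rectangles. Extending it to a countably additive Lebesgue--Stieltjes measure, which your Fubini step requires, needs right-continuity of the mixed-difference part of $\widetilde c$. That is exactly the regularity you already noted is unavailable, since $\gamma_X,\gamma_Y$ are merely Borel and $c$ is merely l.s.c. Your first route breaks for the same reason, because monotone approximation from below presupposes that $\widetilde c$ itself is l.s.c.

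Separately, integrability of the cost under one coupling does not make $\widetilde c(s_0,\cdot)$ and $\widetilde c(\cdot,t_0)$ integrable. Take $d=1$, $\gamma_X=\gamma_Y=\mathrm{id}$, $c(x,y)=(x-y)^2$ and $\alpha=\beta$ Cauchy. The comonotone cost is $0$, but both marginal terms integrate to $+\infty$, so your decomposition reads $\infty+\infty-\infty$.

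A repair that uses exactly the stated hypotheses is to finish in $\R^d$, where $c$ is l.s.c. The pushed-forward comonotone coupling is concentrated on the image of $\{(F_\alpha^{-1}(u),F_\beta^{-1}(u)):0<u<1\}$. Any finitely many of its points can be indexed so that both coordinates are $\preceq_K$-nondecreasing. Monge-compatibility then makes the corresponding square cost matrix Monge, and your discrete step makes the identity assignment optimal. So this set is $c$-cyclically monotone, which is the content of the paper's ordered-support proposition. For a real-valued l.s.c. cost bounded below with finite optimal value (guaranteed here by integrability under one coupling), every $c$-cyclically monotone plan is optimal by Kantorovich duality (e.g.\ Villani, \emph{Optimal Transport: Old and New}, Thm.~5.10(ii)). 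This bypasses the regularity of $\widetilde c$, the integrability of marginal slices, and even the lifting.

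If you prefer to keep the Stieltjes argument, pass to quantile coordinates $\widehat c(u,v)=\widetilde c(F_\alpha^{-1}(u),F_\beta^{-1}(v))$. There every coupling of $\alpha,\beta$ is the image of a copula. The discontinuities of the mixed-difference part lie on countably many axis-parallel lines, which are null for every copula and for the diagonal. The right-continuous regularization therefore changes no integral and does define a measure. A separate truncation is still needed for integrability. Note that the usual statements of the classical one-dimensional theorem typically carry regularity and integrability hypotheses of this kind, so the paper's bare citation leaves the same point implicit.
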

\begin{proof}
Define $\widetilde c(s,t)=c(\gamma_X(s),\gamma_Y(t))$. Monotonicity of the curves and compatibility of $c$ imply the quadrangle inequality
\[
\widetilde c(s_1,t_1)+\widetilde c(s_2,t_2)\le\widetilde c(s_1,t_2)+\widetilde c(s_2,t_1)
\]
whenever $s_1\le s_2$ and $t_1\le t_2$. Thus $\widetilde c$ is a one-dimensional Monge cost. The comonotone generalized-quantile coupling is optimal for such costs. Pushing the coupling forward by $(\gamma_X,\gamma_Y)$ preserves marginals and cost, which proves optimality in the original space. Atoms are handled by generalized quantiles in the usual way.
\end{proof}

\section{Chain Covers and a Restricted Block Model}
\begin{proposition}[Block-Monge property]
If source and target supports are decomposed into $K$-chains and $K$ is $M$-acute, every chain-pair cross-cost block is Monge.
\end{proposition}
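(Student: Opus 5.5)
The plan is to reduce the block statement directly to the cone--Monge equivalence (Theorem 1 of the appendix). Fix a source chain $X^{(a)}=(x^{(a)}_1,\ldots,x^{(a)}_{m_a})$ and a target chain $Y^{(b)}=(y^{(b)}_1,\ldots,y^{(b)}_{n_b})$ from the chain covers. Within each chain we index the elements according to the $K$-order, so that $x^{(a)}_i\preceq_K x^{(a)}_k$ whenever $i<k$, and similarly for the target chain. This indexing is well defined because a chain is totally ordered by $\preceq_K$, and pointedness of $K$ makes $\preceq_K$ antisymmetric, so distinct elements receive distinct positions. If repeated support points are permitted, equal points may be placed in any order, since $0\in K$ gives $x\preceq_K x$.

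Next I would define the block $C^{ab}_{ij}=c_M(x^{(a)}_i,y^{(b)}_j)$. This is exactly the cross-cost matrix of the two finite $K$-chains $X^{(a)}$ and $Y^{(b)}$. Since $K$ is assumed $M$-acute, condition (i) of Theorem 1 holds, and the implication (i)$\Rightarrow$(ii) then shows that $C^{ab}$ is Monge.

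For a self-contained check, take $i<k$ and $j<\ell$. Then $x^{(a)}_k-x^{(a)}_i\in K$ and $y^{(b)}_\ell-y^{(b)}_j\in K$. By the cross-difference identity,
\[
C^{ab}_{i\ell}+C^{ab}_{kj}-C^{ab}_{ij}-C^{ab}_{k\ell}=2(x^{(a)}_k-x^{(a)}_i)^\top M(y^{(b)}_\ell-y^{(b)}_j),
\]
and $M$-acuteness makes the right-hand side nonnegative.

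The one step that needs care is the \emph{ordering convention}. The statement must be read with the rows and columns of each block arranged in the chain order. In the full matrix $C$ the rows of a chain $X^{(a)}$ may be interleaved with rows from other chains, and Monge structure is not claimed in that global indexing. Once the block is extracted and its indices are sorted along the chains, nothing else needs to be checked. Different blocks do not interact, because the Monge property is a statement about each submatrix separately.
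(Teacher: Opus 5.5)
Your proposal is correct and matches the paper's proof, which is the one line ``apply the cone--Monge theorem to each pair of chains,'' exactly your reduction via (i)$\Rightarrow$(ii). Your remarks on indexing each block along its chain order and your explicit cross-difference check are valid elaborations of that same argument.
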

\begin{proof}
Apply the cone--Monge theorem to each pair of chains.
\end{proof}

\begin{proposition}[Mixture-preserving decomposition]
Suppose $\mu=\sum_a\alpha_a\mu_a$ and $\nu=\sum_b\beta_b\nu_b$, where $\mu_a,\nu_b$ are probability measures. Let
\[
\PiMix=\left\{\sum_{a,b}q_{ab}\pi_{ab}:\ q\in\Pi(\alpha,\beta),\ \pi_{ab}\in\Pi(\mu_a,\nu_b)\right\}.
\]
Then, with $D_{ab}=\OT_c(\mu_a,\nu_b)$,
\[
\inf_{\pi\in\PiMix}\int c\,d\pi=\min_{q\in\Pi(\alpha,\beta)}\sum_{a,b}q_{ab}D_{ab}.
\]
\end{proposition}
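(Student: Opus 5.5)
The plan is to prove the two inequalities separately and work throughout in the discrete setting where the paper uses this decomposition: finitely many components, each $\mu_a,\nu_b$ a finitely supported probability measure. In that setting every $\OT_c(\mu_a,\nu_b)$ is attained. General measures would only change attainment and measurability details, which I will come back to at the end.

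\textbf{Lower bound.} Take any element $\pi=\sum_{a,b}q_{ab}\pi_{ab}$ of $\PiMix$. By linearity of the integral,
\[
\int c\,d\pi=\sum_{a,b}q_{ab}\int c\,d\pi_{ab}.
\]
Since $\pi_{ab}\in\Pi(\mu_a,\nu_b)$, each inner integral satisfies $\int c\,d\pi_{ab}\ge D_{ab}$. Because $q\ge0$, this gives
\[
\int c\,d\pi\ge\sum_{a,b}q_{ab}D_{ab}\ge\min_{q'\in\Pi(\alpha,\beta)}\sum_{a,b}q'_{ab}D_{ab}.
\]
The minimum on the right exists because a linear function is being minimized over the compact transportation polytope $\Pi(\alpha,\beta)$. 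Taking the infimum over $\pi\in\PiMix$ proves the inequality $\ge$.

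\textbf{Upper bound.} Let $q^*$ be a minimizer of $\sum_{a,b}q_{ab}D_{ab}$ over $\Pi(\alpha,\beta)$. For each pair $(a,b)$ choose an optimal coupling $\pi^*_{ab}\in\Pi(\mu_a,\nu_b)$, so that $\int c\,d\pi^*_{ab}=D_{ab}$. Then $\pi^*=\sum_{a,b}q^*_{ab}\pi^*_{ab}$ belongs to $\PiMix$ by definition. By linearity its cost is $\sum_{a,b}q^*_{ab}D_{ab}$, which gives the inequality $\le$.

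As a sanity check, $\pi^*$ really is a coupling of $\mu$ and $\nu$. Its first marginal is
\[
\sum_a\Bigl(\sum_b q^*_{ab}\Bigr)\mu_a=\sum_a\alpha_a\mu_a=\mu,
\]
and the same computation with the column sums of $q^*$ gives $\nu$ as the second marginal.

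\textbf{Main obstacle.} The only real issue is attainment of the inner problems. If $\OT_c(\mu_a,\nu_b)$ were not attained, the upper-bound step would need $\varepsilon$-optimal couplings instead. I would handle this by choosing $\pi^\varepsilon_{ab}$ with cost at most $D_{ab}+\varepsilon$ and letting $\varepsilon\to0$; this still yields the equality, with the left side stated as an infimum, as in the proposition. Under the standing assumption of a lower semicontinuous cost bounded below, optimal couplings exist anyway. In the chain setting they are given explicitly by the northwest-corner plan of Theorem~1, so no limiting argument is needed there. The identity also needs every $D_{ab}$ to be finite, which holds for finite supports.
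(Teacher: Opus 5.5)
Your proposal is correct and follows the paper's argument. For fixed $q$ the objective separates over $(a,b)$, each block is minimized at cost $D_{ab}$, and what remains is the linear program over $\Pi(\alpha,\beta)$. Your two separate inequalities, the marginal check, and the $\varepsilon$-optimal fallback when the inner problems are not attained simply spell out rigorously what the paper states in one line.
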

\begin{proof}
For any fixed $q$, the objective separates over $(a,b)$ and each conditional coupling is minimized independently at cost $D_{ab}$. Minimizing the resulting linear objective over $q\in\Pi(\alpha,\beta)$ gives the result.
\end{proof}

\section{Boundary Regimes and $p$-Cost Extension}
\begin{proposition}[Solid-angle limitation]
If the direction of $Y-X$ is uniform on $S^{d-1}$, then $\Pr(X\preceq_KY)=\Omega(K)$. If $K$ is pointed, $K\cap(-K)$ has spherical measure zero and the two-way comparability probability is $2\Omega(K)$. For $K=\R_+^d$, symmetry of the $2^d$ orthants gives $\Omega(K)=2^{-d}$.
\end{proposition}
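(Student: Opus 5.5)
The plan is to reduce everything to the law of the direction $\Theta=(Y-X)/\|Y-X\|$. Since $K$ is a cone, membership in $K$ is invariant under positive scaling. So on the event $Y\neq X$ we have $Y-X\in K$ if and only if $\Theta\in K\cap S^{d-1}$. The event $Y=X$ must be treated as null. Uniformity of the direction presupposes the direction is defined almost surely, so I will state explicitly that $\Pr(Y=X)=0$. Then
\[
\Pr(X\preceq_K Y)=\Pr(\Theta\in K\cap S^{d-1})=\frac{\area(K\cap S^{d-1})}{\area(S^{d-1})}=\Omega(K),
\]
which is the definition of $\Omega(K)$ in the main text. Independence from the norm is not even needed for this step; only the marginal law of $\Theta$ enters.

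For two-way comparability, write the event as the union of $\{\Theta\in K\}$ and $\{\Theta\in -K\}$. By symmetry of the uniform measure under $z\mapsto -z$, the second event also has probability $\Omega(K)$. By inclusion--exclusion it then suffices to show that $(K\cap -K)\cap S^{d-1}$ has spherical measure zero. Pointedness means $K\cap(-K)=\{0\}$, so this intersection with the sphere is empty, which is stronger than what the statement asserts. This gives $2\Omega(K)$.

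For the orthant, the $2^d$ closed orthants $\prod_r \sigma_r\R_+$, with sign vectors $\sigma\in\{\pm1\}^d$, cover $\R^d$. Each is the image of $\R_+^d$ under a coordinate reflection, which is an orthogonal map preserving spherical measure. Their pairwise overlaps lie in coordinate hyperplanes, whose traces on $S^{d-1}$ are lower-dimensional great spheres of measure zero. Hence each orthant has $\Omega=2^{-d}$, and the two-way probability is $2\cdot 2^{-d}=2^{1-d}$.

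The only delicate point, and the main obstacle, is the measure-zero bookkeeping: the null event $Y=X$, and the boundary overlaps in the orthant decomposition. Both are handled by the facts that great subspheres are null for the surface measure and that orthogonal maps preserve that measure.
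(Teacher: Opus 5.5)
Your proposal is correct and follows the same route as the paper's proof: reduce to the spherical law of the direction of $Y-X$, use antipodal symmetry together with pointedness for the two-way event, and use orthant reflection symmetry for $\R_+^d$. Where you differ, it is only in detail: you make explicit the null-set bookkeeping that the paper leaves implicit, namely the event $Y=X$, the fact that pointedness makes $K\cap(-K)\cap S^{d-1}$ empty rather than merely null, and the coordinate-hyperplane overlaps of the orthants.
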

\begin{proof}
The event $X\preceq_KY$ is exactly $Y-X\in K$, which under a uniform direction has probability equal to the normalized spherical area of $K$. Pointedness makes $K$ and $-K$ disjoint up to a null boundary. Orthant symmetry gives the last identity.
\end{proof}

\begin{proposition}[Separable convex costs]
Let $c(x,y)=\sum_{r=1}^d h_r(x_r-y_r)$ with each $h_r$ convex. Then $K=\R_+^d$ is Monge-compatible with $c$.
\end{proposition}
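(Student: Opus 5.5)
The plan is to reduce the statement to the one-dimensional fact that a convex function of a difference is a Monge (submodular-type) cost, and then sum over coordinates. First I make precise what ``Monge-compatible with $K$'' means here. It means that whenever $x\preceq_K x'$ and $y\preceq_K y'$, we have
\[
c(x,y)+c(x',y')\le c(x,y')+c(x',y).
\]
This is exactly the property that makes every pair of $K$-chains induce a Monge cross-cost matrix. For $K=\R_+^d$ the hypotheses say $x_r\le x'_r$ and $y_r\le y'_r$ for every coordinate $r$.

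Since $c$ is a sum of coordinate terms and the quadrangle inequality is linear in $c$, it suffices to prove the scalar claim
\[
h(s-t)+h(s'-t')\le h(s-t')+h(s'-t)
\]
for a convex $h:\R\to\R$ and reals $s\le s'$, $t\le t'$. Summing these inequalities over $r=1,\ldots,d$ then gives the vector inequality.

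For the scalar claim, I set $p=s-t'$ and $q=s'-t$. These are the smallest and largest of the four differences, so $p\le s-t\le q$ and $p\le s'-t'\le q$. Moreover
\[
(s-t)+(s'-t')=p+q.
\]
So the pair $(s-t,\,s'-t')$ is obtained from $(p,q)$ by moving both endpoints inward while keeping the sum fixed. Writing $s-t=\lambda p+(1-\lambda)q$ forces $s'-t'=(1-\lambda)p+\lambda q$ for the same $\lambda\in[0,1]$. Applying convexity of $h$ to each of the two points and adding gives
\[
h(s-t)+h(s'-t')\le h(p)+h(q),
\]
which is the desired inequality. The degenerate case $p=q$ forces all four differences to coincide, so equality holds there trivially.

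For the $\ell_p^p$ consequence, I note that $h_r(z)=|z|^p$ is convex for $p\ge1$, so the general result applies directly. The Monge property of the cross-cost matrix of any two orthant chains then follows, and northwest-corner optimality comes from the classical theorem as in Corollary 1.

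The only delicate point is the convexity (majorization) step, namely checking that the same $\lambda$ works for both interior points. This follows from the equal-sum identity. No differentiability is needed, which is why I avoid the Hessian route of Proposition 4: the functions $h_r$ are assumed only convex, not $C^2$.
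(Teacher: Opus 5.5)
Your proof is correct and follows the paper's route. You reduce to the scalar inequality $h_r(s-t)+h_r(s'-t')\le h_r(s-t')+h_r(s'-t)$ for $s\le s'$, $t\le t'$ and sum over coordinates, and your equal-sum convex-combination argument makes explicit the one-dimensional step that the paper simply attributes to convexity.
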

\begin{proof}
For scalar $a\le b$ and $c\le d$, convexity of $h_r$ implies the one-dimensional Monge inequality
\[
h_r(a-c)+h_r(b-d)\le h_r(a-d)+h_r(b-c).
\]
Apply this coordinatewise to $x_1\le x_2$ and $y_1\le y_2$ and sum over $r$.
\end{proof}

\begin{proposition}[Scalar ordering does not imply Monge structure]
For $d\ge2$ and any nonzero linear score $s(x)=v^\top x$, there exist ordered source and target pairs under $s$ whose squared Euclidean cross-cost matrix is not Monge.
\end{proposition}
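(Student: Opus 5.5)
The plan is an explicit two-point construction, reduced to the case $v=e_1$, followed by the cross-difference identity of Lemma~1.

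\textbf{Step 1 (reduction).} Since $s$ is nonzero linear, we may rescale $v$ to unit norm; the ordering conditions are invariant under positive scaling. Choose an orthogonal $R$ with $Rv=e_1$. Squared Euclidean cost is invariant under the rigid change of coordinates $x\mapsto Rx$, so the cross-cost matrix is unchanged. Moreover $s(x)=v^\top x=e_1^\top Rx$. Hence it suffices to treat $s(x)=x_1$, the first coordinate.

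\textbf{Step 2 (construction).} Because $d\ge2$, a second coordinate direction $e_2$ is available. Fix $a>1$ and set
\[
x_1=y_1=0,\qquad x_2=u=e_1+ae_2,\qquad y_2=w=e_1-ae_2 .
\]
Then $s(x_2)-s(x_1)=1>0$ and $s(y_2)-s(y_1)=1>0$, so both pairs are strictly ordered by the score.

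\textbf{Step 3 (violation).} By the cross-difference identity (Lemma~1 with $M=I$), the Monge margin of the $2\times2$ matrix is
\[
C_{12}+C_{21}-C_{11}-C_{22}=2u^\top w=2(1-a^2)<0 .
\]
This negative margin is exactly the failure of $C_{11}+C_{22}\le C_{12}+C_{21}$, so the ordered matrix is strictly anti-Monge and in particular not Monge. Transporting back through $R^{-1}$ gives the required points for the original score $v$.

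There is no real obstacle. The only care needed is to check that the orthogonal reduction preserves both the score ordering and the cost, and that the second direction needed in Step~2 is exactly where the hypothesis $d\ge2$ enters.
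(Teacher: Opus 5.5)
Your proposal is correct and follows the paper's proof essentially verbatim: rotate so that $v=e_1$, take increments $u=(1,a,0,\ldots,0)$ and $w=(1,-a,0,\ldots,0)$ with $a>1$, and apply the cross-difference identity to get the margin $2u^\top w=2(1-a^2)<0$. Your explicit normalization of $v$, together with your check that the orthogonal change of coordinates preserves both the score ordering and the cost, spells out what the paper's one-line ``rotate coordinates'' step leaves implicit.
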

\begin{proof}
Rotate coordinates so $v=e_1$. Choose $u=(1,a,0,\ldots,0)$ and $w=(1,-a,0,\ldots,0)$ with $a>1$. Let $x_2-x_1=u$ and $y_2-y_1=w$. Then $v^\top u=v^\top w=1>0$, so both pairs are increasing under the score, while $u^\top w=1-a^2<0$. The cross-difference identity shows that the Monge inequality fails.
\end{proof}

\section{Displacement Interpolation on Exact Chains}
\begin{proposition}
Let $M\succ0$ and let $\pi^*$ be an optimal coupling for the quadratic Mahalanobis cost $c_M$. Define $\mu_t=((1-t)x+ty)_\#\pi^*$. Then $(\mu_t)_{t\in[0,1]}$ is a constant-speed geodesic for $W_{2,M}$. In particular, the statement applies to the northwest-corner coupling on compatible cone chains.
\end{proposition}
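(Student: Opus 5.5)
The plan is the standard two-sided argument: an upper bound from a competitor coupling built out of $\pi^*$, and a lower bound from the triangle inequality for $W_{2,M}$. Throughout, use the Euclidean representation $c_M(x,y)=\|Lx-Ly\|_2^2$ with $M=L^\top L$ and $L$ invertible. This is available because $M\succ0$. It makes $W_{2,M}$ a genuine metric on probability measures with finite second moment, namely the ordinary $W_2$ after the linear change of variables $z\mapsto Lz$. For discrete measures this is immediate.

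For the upper bound, fix $0\le s<t\le1$ and write $z_\tau(x,y)=(1-\tau)x+\tau y$. The pushforward $(z_s,z_t)_\#\pi^*$ is a coupling of $\mu_s$ and $\mu_t$. Its cost is
\[
\int\|z_s-z_t\|_M^2\,d\pi^*=(t-s)^2\int\|x-y\|_M^2\,d\pi^*=(t-s)^2W_{2,M}(\mu_0,\mu_1)^2,
\]
using optimality of $\pi^*$ and the identities $\mu_0=\mu$, $\mu_1=\nu$. This gives $W_{2,M}(\mu_s,\mu_t)\le(t-s)W_{2,M}(\mu_0,\mu_1)$ for all $s<t$.

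For the reverse inequality, apply the triangle inequality for the metric $W_{2,M}$ along $0\le s<t\le1$:
\[
W_{2,M}(\mu_0,\mu_1)\le W_{2,M}(\mu_0,\mu_s)+W_{2,M}(\mu_s,\mu_t)+W_{2,M}(\mu_t,\mu_1).
\]
Bound the outer two terms by the upper bound, giving $sW$ and $(1-t)W$, where $W=W_{2,M}(\mu_0,\mu_1)$. Rearranging gives $W_{2,M}(\mu_s,\mu_t)\ge(t-s)W$, so equality holds. The only nonroutine ingredient is the triangle inequality for $W_{2,M}$. I would cite it as the standard gluing-lemma fact for $W_2$, transported through $L$; for finitely supported measures the gluing is an explicit discrete construction. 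This is the step to flag as the main (mild) obstacle.

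Finally, the specialization to cone chains follows from Corollary 1. Under the hypotheses of Theorem 1, the northwest-corner plan $\pi_{\NW}$ is optimal for $c_M$, so it can serve as $\pi^*$. The explicit form (58) is then just the pushforward of the atoms $\omega_{ij}\delta_{(x_i,y_j)}$. If the plan is induced by a map $T$, then $\pi^*=(\mathrm{id},T)_\#\mu$ and $\mu_t=((1-t)\mathrm{id}+tT)_\#\mu$ is deterministic. Otherwise mass splits, and $\mu_t$ is described only through the plan.
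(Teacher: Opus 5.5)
Your argument is correct, but it is organized differently from the paper's proof. The paper makes the change of variables the main device. It pushes $\pi^*$ forward by $(x,y)\mapsto(Lx,Ly)$, notes that the image is optimal for the ordinary squared Euclidean cost, and cites the standard $W_2$ displacement-interpolation identity. It then pulls that identity back using $W_{2,M}(\alpha,\beta)=W_2(L_\#\alpha,L_\#\beta)$ and linearity of $L$.

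You instead prove the geodesic identity directly in the $M$-geometry. The competitor coupling $(z_s,z_t)_\#\pi^*$ gives the upper bound $W_{2,M}(\mu_s,\mu_t)\le(t-s)W_{2,M}(\mu_0,\mu_1)$, and the triangle inequality forces equality. You use $L$ only to import the metric property, and even that is optional since $\|\cdot\|_M$ is already a Hilbert norm.

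The paper's route is shorter given the Euclidean theory. It implicitly uses that $(L\times L)_\#$ is a cost-preserving bijection between coupling sets, which is what justifies both the optimality transfer and the pullback. Your route avoids that bookkeeping and is self-contained apart from the gluing lemma. You correctly single out gluing as the only nonroutine input, and for the finitely supported chain measures it is an explicit construction. Your route also shows that the only ingredients are the homogeneity $\|z_s-z_t\|_M=(t-s)\|x-y\|_M$ and the triangle inequality for the transport distance. So the same argument works verbatim for $W_p$, $p\ge1$, over any norm.

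Your specialization to the northwest-corner coupling via Corollary 1 matches the paper. So does your remark distinguishing the map-induced case from the plan-valued case.
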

\begin{proof}
Choose an invertible matrix $L$ with $M=L^\top L$ and push the coupling forward by $(x,y)\mapsto(Lx,Ly)$. The transformed coupling is optimal for the ordinary squared Euclidean cost. Standard displacement interpolation in $W_2$ therefore gives
\[
W_2(L_\#\mu_s,L_\#\mu_t)=|t-s|W_2(L_\#\mu_0,L_\#\mu_1).
\]
Because $W_{2,M}(\alpha,\beta)=W_2(L_\#\alpha,L_\#\beta)$ and $L((1-t)x+ty)=(1-t)Lx+tLy$, pulling the path back yields the stated identity. For the cone-chain coupling, the explicit finite-sum formula follows directly from its support weights $\omega_{ij}$.
\end{proof}

\section{Amortized Pair-Evaluation Count}
\begin{proposition}
For a fixed compatible geometry and given chain orders, the northwest-corner plan for query $q$ with support sizes $m_q,n_q$ uses at most $m_q+n_q-1$ transported pairs. Hence $Q$ queries require at most $\sum_q(m_q+n_q-1)$ pairwise cost evaluations if costs are evaluated only on the plan support, whereas explicit formation of all cross-cost matrices requires $\sum_qm_qn_q$ evaluations.
\end{proposition}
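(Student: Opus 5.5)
The plan is to show that the northwest-corner plan $\omega$ has at most $m_q+n_q-1$ nonzero entries. The evaluation count then follows by summing over queries. We use the interval representation from the section on exact cumulative-overlap transport: $\omega_{ij}$ is the length of the overlap of $(A_{i-1},A_i]$ and $(B_{j-1},B_j]$, where $A_i,B_j$ are the cumulative marginals.

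\textbf{Step 1: count the support.} Consider the merged set of breakpoints
\[
\{A_0,\dots,A_{m_q}\}\cup\{B_0,\dots,B_{n_q}\}\subset[0,1].
\]
Both families contain $0$ and $1$, so the set has at most $(m_q+1)+(n_q+1)-2=m_q+n_q$ distinct points. These points partition $(0,1]$ into at most $m_q+n_q-1$ consecutive elementary intervals. Each elementary interval lies inside exactly one source interval $(A_{i-1},A_i]$ and exactly one target interval $(B_{j-1},B_j]$. Conversely, $\omega_{ij}>0$ only if the overlap of those two intervals has positive length, and that overlap is a union of elementary intervals. So $(i,j)\mapsto$ (any elementary interval in its overlap) is an injection from the support of $\omega$ into the elementary intervals, which gives $|\mathrm{supp}\,\omega|\le m_q+n_q-1$. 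Equivalently, one can check that the greedy northwest-corner sweep advances the row or the column index (or both) after each placed entry, starting from $(1,1)$ and ending at $(m_q,n_q)$.

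\textbf{Step 2: cost evaluation and summation.} The transport value is $\sum_{i,j}\omega_{ij}c_M(x_i,y_j)$ by Corollary 1, which applies under the stated compatibility via Theorem 1. Only pairs with $\omega_{ij}>0$ contribute, so the value needs at most $m_q+n_q-1$ evaluations of $c_M$. The support itself comes from the marginals alone, using $O(m_q+n_q)$ arithmetic operations and no cost evaluations. Summing over $q=1,\dots,Q$ gives $\sum_q(m_q+n_q-1)$ evaluations. Forming every cross-cost matrix explicitly costs $m_qn_q$ evaluations per query, so $\sum_q m_qn_q$ in total.

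The only step needing care is the breakpoint-counting injection in Step 1, in particular coincident breakpoints such as $A_i=B_j$. Since we count distinct points, coincidences only reduce the number of elementary intervals, so the bound still holds. The argument requires neither the chain orders nor the cone assumption; those are needed only to justify that this sparse plan is optimal.
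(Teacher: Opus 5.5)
Your proof is correct, but your main argument takes a different route from the paper's. The paper argues algorithmically: the northwest-corner sweep exhausts at least one row or one column at every positive entry, hence at most $m_q+n_q-1$ entries. It leaves implicit why the count is $m_q+n_q-1$ rather than $m_q+n_q$; the reason is that the last entry exhausts its row and column simultaneously.

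You instead work directly with the closed-form overlap formula for $\omega_{ij}$, which is how the paper actually defines $\pi_{\NW}$. The merged breakpoints $\{A_i\}\cup\{B_j\}$ take at most $m_q+n_q$ distinct values because $0$ and $1$ are shared. Each resulting elementary interval lies in a unique source interval and a unique target interval. The elementary intervals therefore index the support injectively, and in fact bijectively, since no breakpoint can fall strictly inside a nonempty overlap.

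Your version buys several things:
\begin{enumerate}
\item It never needs to identify the overlap formula with the output of the greedy sweep.
\item It makes both the $-1$ and the tie case $A_i=B_j$ transparent.
\item It gives the exact support size: the number of distinct merged breakpoints minus one.
\item It exhibits the $O(m_q+n_q)$ merge that locates the support without any cost evaluation.
\end{enumerate}
The paper's sweep argument is shorter and is the standard basic-feasible-solution count from transportation theory. Your aside about the staircase path from $(1,1)$ to $(m_q,n_q)$ is essentially that argument, with the endpoint bookkeeping made explicit.

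Your closing remark is also accurate: neither the chain order nor the cone condition enters the count, only the optimality of the sparse plan. This matches how the paper separates the one-time compatibility check from the per-query evaluation count.
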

\begin{proof}
The northwest-corner sweep exhausts at least one row or one column at each positive entry, so it creates at most $m_q+n_q-1$ nonzero entries. Summing over queries gives the first count; the second is the number of entries in the dense cross-cost arrays.
\end{proof}

\section{Auxiliary Ordered Objectives}\label{app:auxobj}
A proper cone $K$ and an order coordinate $\ell\in\operatorname{int}(K^*)$ that is injective on the support chain define a canonical chain parameterization through the scalar quantile of $\ell_\#\mu$. If $Q_\mu,Q_\nu$ are these canonical maps, then
\[
\operatorname{CCW}_{p,K}(\mu,\nu)=\left(\int_0^1\|Q_\mu(t)-Q_\nu(t)\|^p\,dt\right)^{1/p}
\]
is a metric on the canonical chain class by Minkowski's inequality and uniqueness of the canonical parameterization.

For directional tasks one may instead use
\[
\operatorname{DCOT}_{p,K}^p(\mu,\nu)=\inf_{\pi\in\Pi(\mu,\nu):\ y-x\in K\ \pi\text{-a.s.}}\int\|x-y\|^p\,d\pi.
\]
This is generally asymmetric and may be infinite. A soft version replaces hard feasibility by a penalty $\lambda\dist(y-x,K)^q$. These formulations are numerical extensions and are not part of the closed-form claim.

\section{Optional Learned-Cone Extension}\label{app:learnedcone}
Let $K_A=A\R_+^r$. A nondegenerate training objective can take the form
\begin{equation}
\widehat L(A)=\frac1N\sum_{t=1}^N\left[\dist(u_t,K_A)^2+\dist(v_t,K_A)^2\right]+\lambda\|\pos{-A^\top M A}\|_F^2+\beta L_{\mathrm{cov}}(A),
\end{equation}
with normalized generator columns and a pointedness/separation constraint. To prevent a degenerate solution that fits only a vanishing fraction of the observed increments, fix $\tau>0$ and a target coverage level $\rho_0\in(0,1)$ and define the soft empirical coverage
\begin{equation}
\widehat\rho_\tau(A)=\frac1{2N}\sum_{t=1}^N\left[\exp\left(-\frac{\dist(u_t,K_A)^2}{\tau^2}\right)+\exp\left(-\frac{\dist(v_t,K_A)^2}{\tau^2}\right)\right],
\end{equation}
with
\begin{equation}
L_{\mathrm{cov}}(A)=\pos{\rho_0-\widehat\rho_\tau(A)}^2.
\end{equation}
The coverage term is not an identifiability condition; it only rules out cones whose empirical support is too small to be useful. If $M$ is also learned, one should impose spectral bounds and a scale normalization such as $mI\preceq M\preceq MI$ and $\operatorname{tr}(M)=d$. The main paper does not rely on this learned-cone extension.

For any bounded loss class $\mathcal F=\{\ell_A:A\in\mathcal A\}$, standard Rademacher theory yields, with high probability,
\[
L(\widehat A)-\inf_{A\in\mathcal A}L(A)\le4\mathfrak R_N(\mathcal F)+O\left(\sqrt{\frac{\log(1/\delta)}{N}}\right).
\]
This is a structural-risk guarantee rather than a parameter-identifiability result.

\end{document}